\newtheorem{theorem}{Theorem}
\newtheorem{lemma}{Lemma}
\theoremstyle{definition}
\newtheorem{remark}{Remark}
\newcommand{\minimize}{\mathop{\mathrm{minimize}}}
\def\C{\mathbb{C}}
\def\R{\mathbb{R}}
\def\E{\mathbb{E}}
\def\Cov{\mathrm{Cov}}
\def\ones{\mathds{1}}
\def\hbeta{\hat{\beta}}
\def\ie{i.e.}
\def\eg{e.g.}
\def\hSigma{\hat\Sigma}
\def\ini{\mathrm{in}}
\def\out{\mathrm{out}}
\def\ridge{\mathrm{ridge}}
\def\gf{\mathrm{gf}}
\def\Risk{\mathrm{Risk}}
\def\tr{\mathrm{tr}}
\def\cL{\mathcal{L}}
\def\Re{\mathrm{Re}}
\def\Im{\mathrm{Im}}
\newtheorem*{assumption*}{\assumptionnumber}
\providecommand{\assumptionnumber}{}
\newenvironment{assumption}[2]{
  \renewcommand{\assumptionnumber}{Assumption #1#2}
  \begin{assumption*}
  \protected@edef\@currentlabel{#1#2}}
{\end{assumption*}}
\begin{document}

%

%

\twocolumn[

\aistatstitle{A Continuous-Time View of Early Stopping for Least Squares}    

\aistatsauthor{Alnur Ali \And J. Zico Kolter \And Ryan J. Tibshirani} 

\aistatsaddress{Carnegie Mellon University \And Carnegie Mellon University 
  \And Carnegie Mellon University}] 

\begin{abstract}
We study the statistical properties of the iterates generated by gradient
descent, applied to the fundamental problem of least squares regression.  We
take a continuous-time view, \ie, consider infinitesimal step sizes in gradient
descent, in which case the iterates form a trajectory called {\it gradient
  flow}.  Our primary focus is to compare the risk of gradient flow to that of
ridge regression.  Under the calibration $t=1/\lambda$---where $t$ is the time
parameter in gradient flow, and $\lambda$ the tuning parameter in ridge
regression---we prove that the risk of gradient flow is no more than 1.69 times
that of ridge, along the entire path (for all $t \geq 0$).  This holds in finite
samples with very weak assumptions on the data model (in particular, with no
assumptions on the features $X$).  We prove that the same relative risk
bound holds for prediction risk, in an average sense over the underlying
signal $\beta_0$. 
Finally, we examine limiting risk expressions (under standard Marchenko-Pastur  
asymptotics), and give supporting numerical experiments.
\end{abstract}

\section{INTRODUCTION}

Given the sizes of modern data sets, there is a growing preference towards
simple estimators that have a small computational footprint and are easy to
implement.  Additionally, beyond efficiency and tractability considerations,
there is mounting evidence that many simple and popular estimation methods
perform a kind of {\it implicit regularization}, meaning that they appear to
produce estimates exhibiting a kind of regularity, even though they do not
employ an explicit regularizer.  

Research interest in implicit regularization is growing, but the foundations of
the idea date back at least 30 years in machine learning, where early-stopped
gradient descent was found to be effective in training neural networks
\citep{morgan1989generalization}, and at least 40 years in applied mathematics,
where the same idea (here known as early-stopped Landweber iterations) was found  
ill-posed linear inverse problems \citep{strand1974theory}. After a wave of
research on boosting with early stopping
\citep{buhlmann2003boosting,rosset2004boosting,zhang2005boosting,yao2007early}, 
more recent work focuses on the regularity properties of particular algorithms
for underdetermined problems in matrix factorization, regression, and
classification
\citep{gunasekar2017implicit,wilson2017marginal,gunasekar2018characterizing}. More 
broadly, algorithmic regularization plays a key role in training deep neural
networks, via batch normalization, dropout, and other techniques.

In this paper, we focus on early stopping in gradient descent, when applied
specifically to least squares regression.  This is a basic problem and we are of
course not the only authors to consider it; there is now a large literature on
this topic (see references above, and more to come when we discuss related work
shortly). However, our perspective differs from existing work in a few
important ways: first, we study gradient descent in continuous-time (\ie, with
infinitesimal step sizes), leading to a path of iterates known as {\it gradient
  flow}; second, we examine the regularity properties along {\it the entire
  path}, not just its convergence point (as is the focus in most of the work on 
implicit regularization); and third, we focus on analyzing and comparing the
{\it risk} of gradient flow directly, which is arguably what we care about the
most, in many applications. 

A strength of the continuous-time perspective is that it facilitates the 
comparison between early stopping and $\ell_2$ regularization. While the
connection between these two mechanisms
has been studied by many authors (and from many angles), our paper
provides some of the strongest evidence for this connection to date.             

\paragraph{Summary of Contributions.} Our contributions in this paper are as
follows.    

\begin{itemize}
\item We prove that, in finite samples, under very weak assumptions on the data 
  model (and with no assumptions on the feature matrix $X$), the
  estimation risk of gradient flow at time $t$ is no more than 1.69 that of
  ridge regression at tuning parameter $\lambda=1/t$, for all $t \geq 0$. 

\item We show that the same result holds for in-sample prediction risk.

\item We show that the same result is also true for out-of-sample prediction
  risk, but now in an average (Bayes) sense, with respect to a spherical prior
  on the underlying signal $\beta_0$.

\item For Bayes risk, under optimal tuning, our results on estimation, in-sample
  prediction, and  out-of-sample prediction risks can all be tightened. We prove
  that the relative risk (measured in any of these three ways) of
  optimally-tuned gradient flow to optimally-tuned ridge is in between 1 and
  1.22.    

\item We derive exact limiting formulae for the risk of gradient flow, in a
  Marchenko-Pastur asymptotic model where $p/n$ (the ratio of the
  feature dimension to sample size) converges to a positive constant.  We
  compare these to known limiting formulae for ridge regression. 

\item We support our theoretical results with numerical simulations that show 
  the coupling between gradient flow and ridge can be extremely tight in
  practice (even tighter than suggested by theory). 
\end{itemize}

\paragraph{Related Work.} Various authors have made connections between $\ell_2$
regularization and the iterates generated by gradient descent (when applied to
different loss functions of interest): \citet{friedman2004gradient} were among
the first make this explicit, and gave supporting numerical experiments,
followed by \citet{ramsay2005parameter}, who adopted a continuous-time (gradient
flow) view, as we do. \citet{yao2007early} point out that early stopped gradient
descent is a spectral filter, just like $\ell_2$ regularization. Subsequent work
in nonparametric data models (specifically, reproducing kernel Hilbert space
models), studied early-stopped gradient descent from the perspective of risk
bounds, where it is shown to perform comparably to explicit $\ell_2$
regularization, when each method is optimally tuned
\citep{bauer2007regularization,logerfo2008spectral,raskutti2014early,wei2017early}. 
Other works have focused on the bias-variance trade-off in early-stopped
gradient boosting \citep{buhlmann2003boosting,zhang2005boosting}.

After completing this work, we became aware of the interesting recent paper by
\citet{suggula2018connecting}, who gave deterministic bounds between gradient 
flow and ridge regularized estimates, for problems in which the loss function is
strongly convex.  Their results are very different from ours: they apply to a
much wider variety of problem settings (not just least squares problems),
and are driven entirely by properties associated with strong convexity; our
analysis, specific to least squares regression, is much more precise, and covers
the important high-dimensional case (in which the strong convexity assumption is
violated). 


There is also a lot of related work on theory for ridge regression.  Recently,
\citet{dobriban2018high} studied ridge regression (and regularized discriminant
analysis) in a Marchenko-Pastur asymptotics model, deriving limiting risk
expressions, and the precise form of the limiting optimal tuning
parameter. \citet{dicker2016ridge} gave a similar asymptotic analysis for ridge,
but under a somewhat different problem setup.  \citet{hsu2012random} established
finite-sample concentration bounds for ridge risk. Low-dimensional theory for
ridge dates back much further, see \citet{goldenshluger2001adaptive} and others.
Lastly, we point out an interesting risk inflation result in that is vaguely
related to ours: \citet{dhillon2013risk} showed that risk of principal
components regression is at most four times that of ridge, under a natural
calibration between these two estimator paths (coupling the eigenvalue threshold
for the sample covariance matrix with the ridge tuning parameter).

\paragraph{Outline.} Here is an outline for the rest of the paper.  Section
\ref{sec:prelim} covers preliminary material, on the problem and estimators to
be considered.  Section \ref{sec:basic_results} gives basic results on gradient 
flow, and its relationship to ridge regression.  Section \ref{sec:risk_results}
derives expressions for the estimation risk and prediction risk of gradient flow
and ridge. Section \ref{sec:risk_relative} presents our main results on
relative risk bounds (of gradient flow to ridge).  Section \ref{sec:risk_asymp} 
studies the limiting risk of gradient flow under standard Marchenko-Pastur
asymptotics.  Section \ref{sec:exps} presents numerical examples that support
our theoretical results, and Section \ref{sec:disc} concludes with a short
discussion. 

\section{PRELIMINARIES}
\label{sec:prelim}

\subsection{Least Squares, Gradient Flow, and Ridge}

Let $y \in \R^n$ and $X \in \R^{n\times p}$ be a response vector and a matrix of
predictors or features, respectively. Consider the standard (linear) least
squares problem    
\begin{equation}
\label{eq:ls}
\minimize_{\beta \in \R^p} \; \frac{1}{2n} \|y-X\beta\|_2^2.
\end{equation}
Consider gradient descent applied to \eqref{eq:ls}, with a constant step size 
$\epsilon>0$, and initialized at \smash{$\beta^{(0)}=0$}, which repeats the 
iterations 
\begin{equation}
\label{eq:gd} 
\beta^{(k)} = \beta^{(k-1)} + \epsilon \cdot \frac{X^T}{n} 
(y - X \beta^{(k-1)}), 
\end{equation}
for $k=1,2,3,\ldots$.  Letting $\epsilon \to 0$, we get a continuous-time  
ordinary differential equation 
\begin{equation}
\label{eq:gf} 
\dot\beta(t) = \frac{X^T}{n} (y - X \beta(t)),
\end{equation}
over time $t \geq 0$, subject to an initial condition $\beta(0)=0$.
We call \eqref{eq:gf} the {\it gradient flow} differential equation for the
least squares problem \eqref{eq:ls}.  

To see the connection between \eqref{eq:gd} and \eqref{eq:gf}, we simply
rearrange \eqref{eq:gd} to find that 
$$
\frac{\beta^{(k)}-\beta^{(k-1)}}{\epsilon} =  
\frac{X^T}{n} (y - X \beta^{(k-1)}), 
$$
and setting \smash{$\beta(t)=\beta^{(k)}$} at time $t=k\epsilon$, we recognize
the left-hand side above as the discrete derivative of $\beta(t)$ at time $t$,
which approaches its continuous-time derivative as $\epsilon \to 0$.  

In fact, starting from the differential equation \eqref{eq:gf}, we can view
gradient descent \eqref{eq:gd} as one of the most basic numerical analysis
techniques---the {\it forward Euler method}---for discretely approximating the
solution \eqref{eq:gf}.

Now consider the $\ell_2$ regularized version of \eqref{eq:ls}, called ridge
regression \citep{hoerl1976ridge}:  
\begin{equation}
\label{eq:ridge}
\minimize_{\beta \in \R^p} \; \frac{1}{n} \|y-X\beta\|_2^2 + \lambda 
\|\beta\|_2^2, 
\end{equation}
where $\lambda > 0$ is a tuning parameter.  The explicit ridge solution is   
\begin{equation}
\label{eq:ridge_sol}
\hbeta^\ridge(\lambda) = (X^T X + n \lambda I)^{-1} X^T y. 
\end{equation}
Though apparently unrelated, the ridge regression solution path and gradient 
flow path share striking similarities, and their relationship is our central
focus. 

\subsection{The Exact Gradient Flow Solution Path}

Thanks to our focus on least squares, the gradient flow differential equation in
\eqref{eq:gf} is a rather special one: it is a continuous-time linear
dynamical system, and has a well-known exact solution.  

\begin{lemma} 
\label{lem:gf}
Fix a response $y$ and predictor matrix $X$.  Then the gradient flow problem 
\eqref{eq:gf}, subject to $\beta(0)=0$, admits the exact solution 
\begin{equation}
\label{eq:gf_sol}
\hbeta^\gf(t) = (X^T X)^+ (I - \exp(-t X^T X/n)) X^T y,
\end{equation}
for all $t \geq 0$. Here $A^+$ is the Moore-Penrose generalized
inverse of a matrix $A$, and $\exp(A) = I + A + A^2/2! +A^3/3! + \cdots$ is
the matrix exponential. 
\end{lemma}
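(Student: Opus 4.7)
The plan is to verify directly that the proposed closed form satisfies both the initial condition $\beta(0)=0$ and the ODE \eqref{eq:gf}; since the right-hand side of \eqref{eq:gf} is globally Lipschitz in $\beta$, Picard--Lindel\"of guarantees a unique solution, so matching these two properties is enough. Writing $A = X^T X$ for brevity, the candidate is $\hbeta^\gf(t) = A^+ (I - e^{-tA/n}) X^T y$, and the initial condition is immediate, since $e^{0} = I$ yields $\hbeta^\gf(0) = 0$.

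Next I would differentiate in $t$. Since the series $e^{-tA/n} = \sum_{k \geq 0} (-tA/n)^k/k!$ converges absolutely and each term commutes with $A$, we can differentiate termwise to obtain $\frac{d}{dt} e^{-tA/n} = -(A/n)\, e^{-tA/n}$, and hence
\[
\dot{\hbeta}^\gf(t) = \frac{1}{n}\, A^+ A\, e^{-tA/n}\, X^T y.
\]

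The main subtlety, and precisely the place where the pseudoinverse rather than an inverse matters, is handling the product $A^+ A$. It is the orthogonal projection onto $\col(A) = \col(X^T X) = \col(X^T)$. I would then observe two facts: $X^T y \in \col(X^T)$ trivially, and $\col(X^T)$ is invariant under $A$, hence under every polynomial in $A$, and therefore under $e^{-tA/n}$. Combining these, $A^+ A\, e^{-tA/n} X^T y = e^{-tA/n} X^T y$, so the derivative simplifies to $\dot{\hbeta}^\gf(t) = \frac{1}{n}\, e^{-tA/n}\, X^T y$.

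Finally I would evaluate the right-hand side of \eqref{eq:gf} at the candidate and check it matches:
\[
\frac{X^T}{n}\bigl(y - X\hbeta^\gf(t)\bigr) = \frac{1}{n} X^T y - \frac{1}{n}\, A A^+ (I - e^{-tA/n})\, X^T y.
\]
The same projection argument applies: $A A^+$ projects onto $\col(X^T)$, and $(I - e^{-tA/n}) X^T y \in \col(X^T)$ by the invariance used above, so this collapses to $\frac{1}{n} X^T y - \frac{1}{n}(I - e^{-tA/n}) X^T y = \frac{1}{n}\, e^{-tA/n}\, X^T y$, which agrees with the derivative just computed. Uniqueness then identifies \eqref{eq:gf_sol} as the gradient flow solution.
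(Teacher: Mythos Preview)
Your proof is correct and follows essentially the same approach as the paper's, which simply says the result ``can be verified by differentiating \eqref{eq:gf_sol} and using basic properties of the matrix exponential.'' Your version is more careful, spelling out the pseudoinverse/projection argument needed when $X^T X$ is singular and explicitly invoking Picard--Lindel\"of for uniqueness, but the underlying strategy is identical.
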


\vspace{-10pt}
\begin{proof}
This can be verified by differentiating \eqref{eq:gf_sol} and using basic
properties of the matrix exponential. 
\end{proof}

In continuous-time, early stopping corresponds to taking the estimator
\smash{$\hbeta^\gf(t)$} in \eqref{eq:gf_sol} for any finite value of $t \geq 0$, 
with smaller $t$ leading to greater regularization.  We can already see that
\eqref{eq:gf_sol}, like \eqref{eq:ridge_sol}, applies a type of shrinkage to the
least squares solution; their similarities will become more evident when we
express both in spectral form, as we will do shortly in Section
\ref{sec:spec_compare}.    

\subsection{Discretization Error}

In what follows, we will focus on (continuous-time) gradient flow rather than 
(discrete-time) gradient descent. Standard results from numerical analysis 
give uniform bounds between discretizations like the forward Euler method
(gradient descent) and the differential equation path (gradient flow).  In
particular, the next result is a direct application of Theorem 212A in
\citet{butcher2016numerical}.    

\begin{lemma}
\label{lem:gd_error}
For least squares, consider gradient descent \eqref{eq:gd} initialized at
\smash{$\beta^{(0)}=0$}, and gradient flow \eqref{eq:gf_sol}, subject to
$\beta(0)=0$.  For any step size $\epsilon < 1/s_{\max}$ where
$s_{\max}$ is the largest eigenvalue of \smash{$X^T X/n$},
and any $K \geq 1$,
$$
\max_{k=1,\ldots,k} | \beta^{(k)} - \hbeta^\gf(k \epsilon) | \leq \\   
\frac{\epsilon\|X^T y\|_2}{2n} 
(\exp(K \epsilon s_{\max}) - 1). 
$$
\end{lemma}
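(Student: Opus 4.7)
The lemma is explicitly presented as a direct application of Theorem 212A in Butcher, which is the classical global error bound for the forward Euler method: if $\dot\beta = f(\beta)$ with $f$ globally $L$-Lipschitz in $\beta$, and if the true solution satisfies $\|\ddot\beta(t)\|_2 \leq M$ on the window of interest, then the $k$-th Euler iterate with step size $\epsilon$ satisfies $\|\beta^{(k)} - \beta(k\epsilon)\|_2 \leq (M\epsilon/(2L))\,(e^{kL\epsilon}-1)$. So my plan is simply to identify the correct $L$ and $M$ for the specific vector field $f(\beta) = (X^T/n)(y-X\beta)$, and then substitute.

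For $L$: the map $f$ is affine with Jacobian $-X^T X/n$, so it is Lipschitz in $\beta$ with constant $L = \|X^T X/n\|_{\mathrm{op}} = s_{\max}$. For $M$: the closed-form gradient flow solution from Lemma \ref{lem:gf} (or, equivalently, direct integration of the ODE) gives $\dot\beta(t) = \exp(-t X^T X/n)\,X^T y/n$. Since $X^T X/n$ is PSD, the matrix exponential has operator norm at most $1$, so $\|\dot\beta(t)\|_2 \leq \|X^T y\|_2/n$ uniformly in $t$. Differentiating the ODE once more gives $\ddot\beta(t) = -(X^T X/n)\dot\beta(t)$, hence $\|\ddot\beta(t)\|_2 \leq s_{\max}\|X^T y\|_2/n =: M$. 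Plugging $L = s_{\max}$ and this $M$ into the Euler bound and taking the max over $k \in \{1,\ldots,K\}$ reproduces exactly the stated inequality.

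If one wants a self-contained argument in place of the citation, the standard route is to set $e_k = \beta^{(k)} - \hbeta^\gf(k\epsilon)$, Taylor-expand $\hbeta^\gf((k+1)\epsilon)$ about $k\epsilon$ to second order, subtract the Euler recursion, and use Lipschitz continuity of $f$ to obtain the scalar recursion $\|e_{k+1}\|_2 \leq (1+\epsilon s_{\max})\|e_k\|_2 + \epsilon^2 M/2$. Iterating from $e_0 = 0$ and using $(1+\epsilon s_{\max})^K \leq e^{K\epsilon s_{\max}}$ yields the same estimate.

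There is no real obstacle in any of this. The only step that needs a moment of thought is the uniform bound on $\|\ddot\beta(t)\|_2$, which rests on the non-expansiveness of $\exp(-t X^T X/n)$ as a linear map on $\R^p$; the step-size assumption $\epsilon < 1/s_{\max}$ in the hypothesis is not actually required for the inequality itself, but merely ensures that the discrete iterates remain well-behaved (the Euler update matrix $I - \epsilon X^T X/n$ is a contraction) and that the right-hand side is informative.
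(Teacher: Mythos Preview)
Your proposal is correct and mirrors the paper exactly: the paper does not give its own proof, stating only that the lemma is a direct application of Theorem 212A in \citet{butcher2016numerical}, and your identification of $L = s_{\max}$ and $M = s_{\max}\|X^T y\|_2/n$ is precisely what is needed to recover the stated bound from that theorem. The optional self-contained recursion you sketch is the standard proof of Butcher's theorem itself, so it is not a different route but simply an unpacking of the citation.
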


The results to come can therefore be translated to the discrete-time 
setting, by taking a small enough $\epsilon$ and invoking Lemma
\ref{lem:gd_error}, but we omit details for brevity.  

\section{BASIC COMPARISONS}
\label{sec:basic_results}

\subsection{Spectral Shrinkage Comparison}
\label{sec:spec_compare}

To compare the ridge \eqref{eq:ridge_sol} and gradient flow \eqref{eq:gf_sol}
paths, it helps to rewrite them in terms of the singular value decomposition of
$X$.  Let \smash{$X=\sqrt{n} U S^{1/2} V^T$} be a singular value
decomposition, so that $X^T X / n = V S V^T$ is an eigendecomposition.  Then
straightforward algebra brings \eqref{eq:ridge_sol}, \eqref{eq:gf_sol}, on the
scale of fitted values, to  
\begin{align}
\label{eq:ridge_fit}
X \hbeta^\ridge(\lambda) &= U S (S + \lambda I)^{-1} U^T y, \\
\label{eq:gf_fit}
X \hbeta^\gf(t) &= U (I - \exp(-t S)) U^T y.
\end{align}
Letting $s_i$, $i=1,\ldots,p$ denote the diagonal entries of
$S$, and $u_i \in \R^n$, $i=1,\ldots,p$ denote the columns of  
$U$, we see that \eqref{eq:ridge_fit}, \eqref{eq:gf_fit} are both linear
smoothers (linear functions of $y$) of the form 
$$
\sum_{i=1}^p g(s_i,\kappa) \cdot u_i u_i^T y,
$$
for a spectral shrinkage map $g(\cdot,\kappa) : [0,\infty) \to [0,\infty)$ and
parameter $\kappa$.  This map is
\smash{$g^\ridge(s,\lambda) = s/(s+\lambda)$} for ridge, and \smash{$g^\gf(s,t) 
  = 1-\exp(-ts)$} for gradient flow. We see both apply more shrinkage  
for smaller values of $s$, \ie, lower-variance directions of $X^T X/n$, but do
so in apparently different ways. 

While these shrinkage maps agree at the extreme ends (\ie, set $\lambda=0$ and
$t=\infty$, or set $\lambda=\infty$ and $t=0$), there is no single
parametrization for $\lambda$ as a function of $t$, say $\phi(t)$, that equates
\smash{$g^\ridge(\cdot,\phi(t))$} with \smash{$g^\gf(\cdot,t)$}, for all $t \geq
0$. But the parametrization $\phi(t)=1/t$ gives the two shrinkage maps grossly
similar behaviors: see Figure \ref{fig:maps} for a visualization.  Moreover, as
we will show later in Sections \ref{sec:risk_relative}--\ref{sec:exps}, the
two shrinkage maps (under the calibration $\phi(t)=1/t$) lead to similar risk
curves for ridge and gradient flow. 

\begin{figure}[htb]
\centering
\includegraphics[width=\columnwidth]{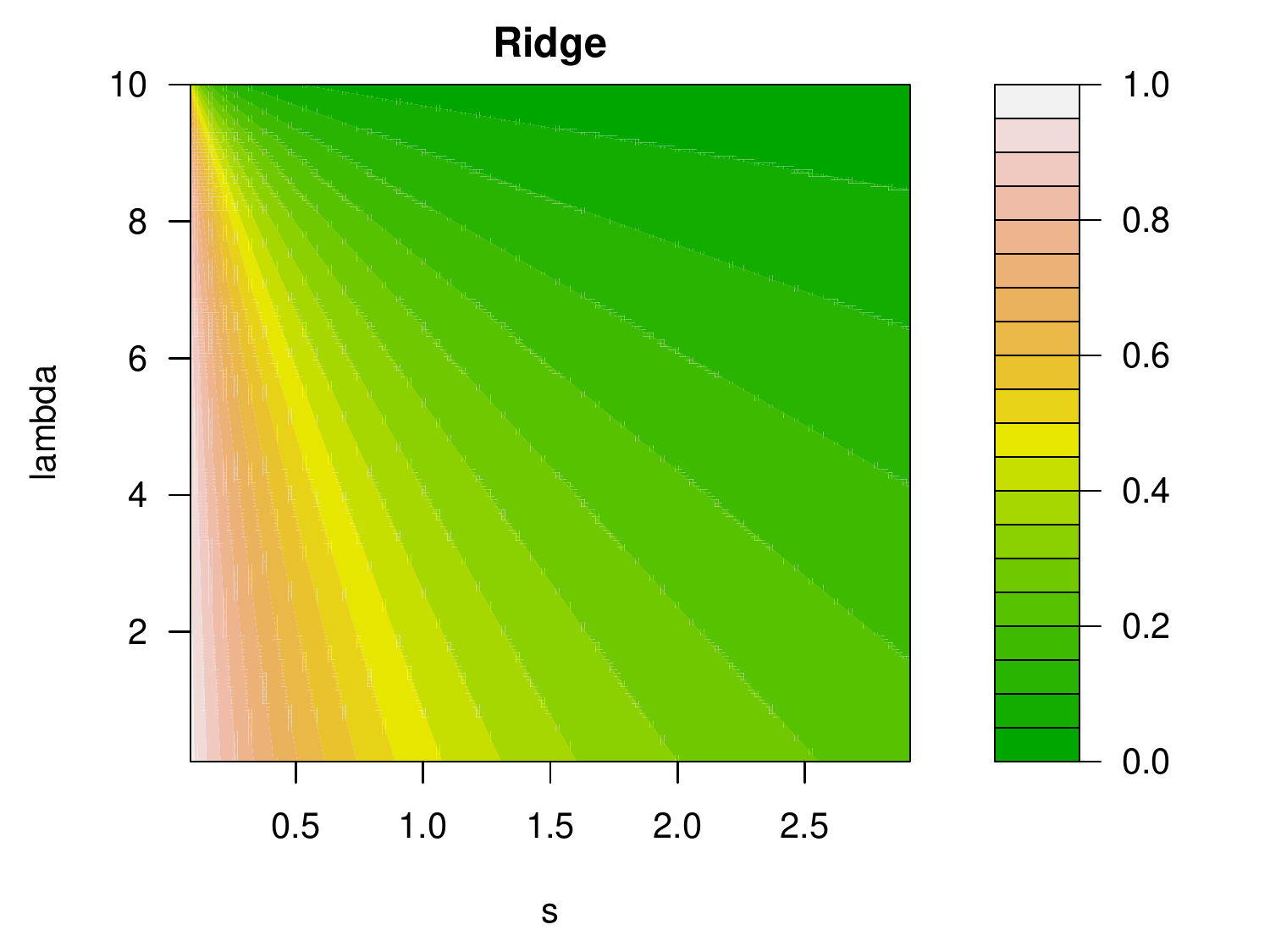} \\
\includegraphics[width=\columnwidth]{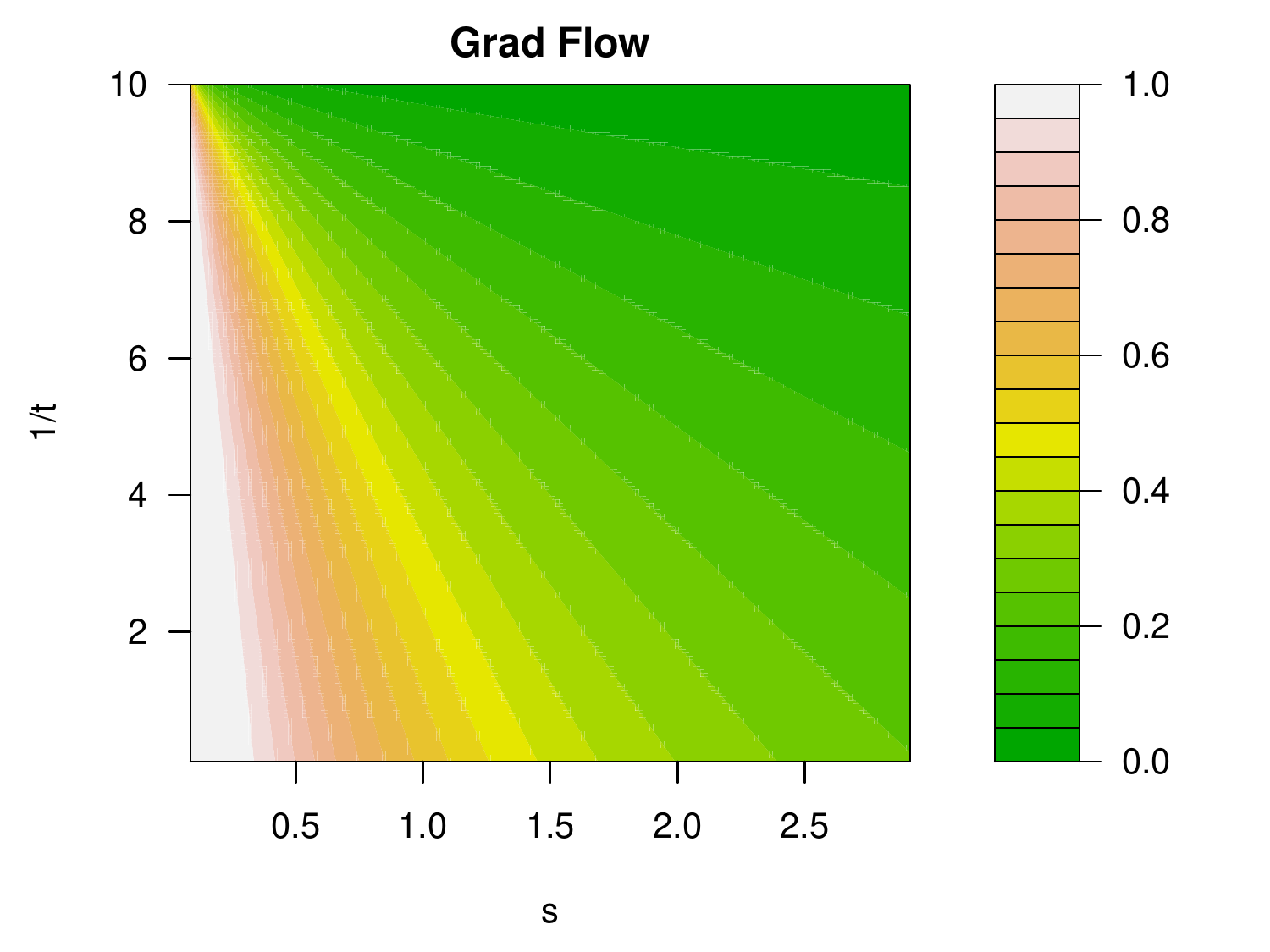}
\caption{\it \small Comparison of ridge and gradient flow spectral
  shrinkage maps, plotted as heatmaps over $(s,\lambda)$ (ridge) and $(s,t)$ 
  (gradient flow) with the calibration $\lambda=1/t$.}
\label{fig:maps}
\end{figure}

\subsection{Underlying Regularization Problems}
\label{sec:underlying}

Given our general interest in the connections between gradient descent and ridge 
regression, it is natural to wonder if gradient descent iterates can also be
expressed as solutions to a sequence of regularized least squares problems.  The
following two simple lemmas certify that this is in fact the case, in both
discrete- and continuous-time; their proofs may be found in the supplement. 

\begin{lemma} 
\label{lem:gd_opt}
Fix $y,X$, and let \smash{$X^TX/n=VSV^T$} be an eigendecomposition.  
Assume that we initialize \smash{$\beta^{(0)}=0$}, and we take the  
step size in gradient descent to satisfy $\epsilon < 1 / s_{\max}$, with
$s_{\max}$ denoting the largest eigenvalue of \smash{$X^T X/n$}. Then, for each
$k=1,2,3,\ldots$, the iterate \smash{$\beta^{(k)}$} from step $k$ in
gradient descent \eqref{eq:gd} uniquely solves the optimization problem
$$
\minimize_{\beta \in \R^p} \; \frac{1}{n} \| y - X \beta \|_2^2 + \beta^T Q_k
\beta, 
$$
where \smash{$Q_k =V S ((I - \epsilon S)^{-k} - I)^{-1} V^T$}.
\end{lemma}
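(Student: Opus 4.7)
The plan is to compute both $\beta^{(k)}$ and the minimizer of the proposed penalized problem in closed form via the eigendecomposition, and then match them term by term.

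First, I would unroll the gradient descent recursion in \eqref{eq:gd}. Letting $M=X^TX/n$, the update is $\beta^{(k)}=(I-\epsilon M)\beta^{(k-1)}+\epsilon X^Ty/n$; iterating from $\beta^{(0)}=0$ and summing the geometric series yields
\[
\beta^{(k)} \;=\; \epsilon\sum_{j=0}^{k-1}(I-\epsilon M)^{j}\,\frac{X^Ty}{n}.
\]
Substituting $M=VSV^T$, this is $V D_k V^T X^Ty/n$ where $D_k$ is diagonal with entries $d_i = \epsilon\sum_{j=0}^{k-1}(1-\epsilon s_i)^j$. On the directions with $s_i>0$, summing the geometric series gives $d_i = \bigl(1-(1-\epsilon s_i)^k\bigr)/s_i$; the step-size condition $\epsilon<1/s_{\max}$ ensures $|1-\epsilon s_i|<1$ on these directions, so the formula is well defined (and on any zero-eigenvalue directions $d_i=k\epsilon$).

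Second, I would derive the optimality condition for the penalized problem. The objective $\tfrac{1}{n}\|y-X\beta\|_2^2+\beta^T Q_k\beta$ is quadratic; setting its gradient to zero yields the normal equation $(M+Q_k)\beta = X^Ty/n$. Uniqueness follows provided $M+Q_k\succ 0$, which I will verify at the end. Now observe that $Q_k = V S\bigl((I-\epsilon S)^{-k}-I\bigr)^{-1}V^T$ is diagonalized by $V$, so $M+Q_k = V(S+\tilde Q_k)V^T$ with $\tilde Q_k$ diagonal. On the $s_i>0$ directions, a short algebraic simplification gives
\[
s_i + \frac{s_i}{(1-\epsilon s_i)^{-k}-1} \;=\; \frac{s_i(1-\epsilon s_i)^{-k}}{(1-\epsilon s_i)^{-k}-1} \;=\; \frac{s_i}{1-(1-\epsilon s_i)^k} \;=\; \frac{1}{d_i},
\]
so $(M+Q_k)^{-1}=VD_kV^T$, matching the gradient descent formula exactly. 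The zero-eigenvalue directions can be handled either by continuity (the limiting entry of $\tilde Q_k$ is $1/(k\epsilon)$, giving $1/d_i$ again) or by restricting attention to the row space of $X$, in which $\beta^{(k)}$ is trapped when initialized at $0$.

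Third, I would justify the strict convexity / invertibility claim needed for uniqueness. Since $\epsilon<1/s_{\max}$, each $1-\epsilon s_i\in[0,1)$, so $(1-\epsilon s_i)^{-k}\geq 1$ with strict inequality whenever $s_i>0$, making the relevant diagonal entries of $Q_k$ strictly positive; on null-space directions the limiting value $1/(k\epsilon)>0$ also gives strict positivity. Hence $M+Q_k\succ 0$ and the KKT point is the unique minimizer. The main obstacle is really just the bookkeeping around the null space of $X^TX$ (where the nominal expression for $Q_k$ is of the form $0\cdot 0^{-1}$); I would resolve this by interpreting $Q_k$ through the limit $s_i\to 0$, or equivalently by noting that both sides of the claimed identity live in $\col(X^T)$ so the null-space components play no role in the verification.
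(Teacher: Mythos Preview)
Your proposal is correct and follows essentially the same route as the paper: unroll the gradient descent recursion into a geometric series in the eigenbasis of $X^TX/n$, compute the minimizer of the penalized problem via its normal equation in the same eigenbasis, and match the diagonal entries. Your treatment is in fact slightly more careful than the paper's on two points---you explicitly verify strict convexity for uniqueness, and you flag and resolve the $0/0$ ambiguity in $Q_k$ along null-space directions via the limit $s_i\to 0$---whereas the paper's derivation writes $S^{-1}$ without comment.
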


\begin{lemma} 
\label{lem:gf_opt}
Fix $y,X$, and let \smash{$X^TX/n=VSV^T$} be an eigendecomposition.  
Under the initial condition $\beta(0)=0$, for all $t > 0$, the 
solution $\beta(t)$ of the gradient flow problem \eqref{eq:gf} uniquely solves
the optimization problem
$$
\minimize_{\beta \in \R^p} \; \frac{1}{n} \| y - X \beta \|_2^2 + \beta^T Q_t
\beta, 
$$
where \smash{$Q_t = V S (\exp(tS) - I)^{-1} V^T$}.
\end{lemma}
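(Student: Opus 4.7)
The plan is to characterize the unique minimizer via first-order optimality, then match it to the gradient flow formula $\hat\beta^{\gf}(t)$ from Lemma~\ref{lem:gf}.

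Since the objective is a strictly convex quadratic in $\beta$ (the Hessian $\tfrac{1}{n}X^TX + Q_t$ is positive definite on the subspace where both the loss and the penalty act nontrivially), the unique minimizer is characterized by the normal equations $\bigl(\tfrac{1}{n}X^TX + Q_t\bigr)\beta = \tfrac{1}{n}X^Ty$. Substituting $X^TX/n = VSV^T$ and $Q_t = VS(\exp(tS) - I)^{-1}V^T$ diagonalizes the system in the eigenbasis of $X^TX/n$. The key algebraic simplification is
$$
S + S(\exp(tS) - I)^{-1} = S \exp(tS)\,(\exp(tS) - I)^{-1},
$$
which follows from $(\exp(tS) - I) + I = \exp(tS)$; inverting this diagonal matrix yields $(I - \exp(-tS)) S^{-1}$. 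Hence the unique minimizer is
$$
\beta = \tfrac{1}{n} V S^{-1}(I - \exp(-tS)) V^T X^T y.
$$

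To conclude, I would match this against $\hat\beta^{\gf}(t) = (X^TX)^{+}(I - \exp(-tX^TX/n))X^Ty$ from \eqref{eq:gf_sol}: using $(X^TX)^{+} = \tfrac{1}{n} V S^{-1} V^T$ and noting that $X^Ty$ lies in the row space of $X$ so that $VV^T X^T y = X^T y$, the formula from Lemma~\ref{lem:gf} reduces to exactly the expression above.

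The only mild obstacle is handling rank deficiency of $X$: if $S$ has zero eigenvalues, then $\exp(tS) - I$ is singular and the symbols $Q_t$ and $S^{-1}$ must be interpreted through the pseudoinverse on null directions (so $Q_t$ is positive semidefinite and vanishes on the null space of $X$). Since $X^Ty$ is supported on the row space of $X$ and both candidate expressions for $\beta$ are supported there as well, strict convexity and hence uniqueness hold on that relevant subspace, and the algebraic identity above can be verified eigenvector-by-eigenvector on the positive part of the spectrum of $X^TX$, which is what the calculation requires.
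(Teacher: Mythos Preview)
Your proof is correct and takes essentially the same approach as the paper: both diagonalize the problem in the eigenbasis of $X^TX/n$ and exploit the identity linking $S$, $\exp(tS)$, and $(I-\exp(-tS))$ to match the optimization solution with the gradient flow formula \eqref{eq:gf_sol}. The only cosmetic difference is direction---the paper starts from the gradient flow solution and solves for $Q_t$, whereas you take $Q_t$ as given and verify the minimizer equals $\hat\beta^{\gf}(t)$; your handling of the rank-deficient case via pseudoinverses on null directions is also a touch more explicit than the paper's.
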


\begin{remark}
The optimization problems that underlie gradient descent and gradient flow, in  
Lemmas \ref{lem:gd_opt} and \ref{lem:gf_opt}, respectively, are both
quadratically regularized least squares problems.  In agreement 
with the intuition from the last subsection, we see that in both problems the
regularizers penalize the lower-variance directions of $X^T X/n$ more strongly,
and this is relaxed as $t$ or $k$ grow. The proof of the continuous-time is  
nearly immediate from \eqref{eq:gf_fit}; the proof of the discrete-time result 
requires a bit more work.  To see the link between the two results, set
$t=k\epsilon$, and note that as $k \to \infty$: 
$$
((1-ts/k)^{-k}-1)^{-1} \to (\exp(ts)-1)^{-1}. 
$$
\end{remark}

\section{MEASURES OF RISK}
\label{sec:risk_results}

\subsection{Estimation Risk}
\label{sec:estimation_risk}

We take the feature matrix $X \in \R^{n \times p}$ to be fixed and arbitrary,
and consider a generic response model,
\begin{equation}
\label{eq:model_y}
y | \beta_0 \sim (X\beta_0, \sigma^2 I),
\end{equation}
which we write to mean $\E(y | \beta_0) = X\beta_0$, $\Cov(y | \beta_0) =
\sigma^2 I$, for an underlying coefficient vector $\beta_0 \in \R^p$ and error
variance $\sigma^2 > 0$.  We consider a spherical prior,  
\begin{equation}
\label{eq:model_b0}
\beta_0 \sim (0, (r^2/p) I)
\end{equation}
for some signal strength $r^2 = \E \|\beta_0\|_2^2 > 0$.  

For an estimator \smash{$\hbeta$} (\ie, measurable function of $X,y$), we define
its estimation risk (or simply, risk) as 
$$
\Risk(\hbeta; \beta_0) = \E \big[\| \hbeta - \beta_0 \|_2^2 \, 
\big| \, \beta_0\big]. 
$$
We also define its Bayes risk as
\smash{$\Risk(\hbeta) = \E \| \hbeta - \beta_0 \|_2^2$}.

Next we give expressions for the risk and Bayes risk of gradient flow; the
derivations are straightforward and found in the supplement.  We denote by
$s_i$, $i=1,\ldots,p$ and $v_i$, $i=1,\ldots,p$ the eigenvalues and
eigenvectors, respectively, of $X^T X/n$.     

\begin{lemma} 
\label{lem:gf_risk}
Under the data model \eqref{eq:model_y}, for any $t \geq 0$, the risk of the 
gradient flow estimator \eqref{eq:gf_sol} is    
\begin{multline}
\label{eq:gf_risk}
\Risk(\hbeta^\gf(t); \beta_0) = \\
\sum_{i=1}^p \bigg(|v_i^T \beta_0|^2 \exp(-2 t s_i) + 
\frac{\sigma^2}{n} \frac{(1 - \exp(-t s_i))^2}{s_i} \bigg),
\end{multline} 

\vspace{-15pt}
and under the prior \eqref{eq:model_b0}, the Bayes risk is  
\begin{multline}
\label{eq:gf_risk_bayes}
\Risk(\hbeta^\gf(t)) = \\
\frac{\sigma^2}{n} \sum_{i=1}^p \bigg(\alpha \exp(-2 t s_i) +  
\frac{(1 - \exp(-t s_i))^2}{s_i} \bigg),
\end{multline}
where $\alpha = r^2 n / (\sigma^2 p)$. Here and henceforth, we take by
convention $(1-e^{-x})^2/x=0$ when $x=0$.    
\end{lemma}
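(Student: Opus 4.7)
The plan is to carry out a standard bias–variance decomposition of the conditional mean squared error and then average over the prior. The key observation is that everything diagonalizes cleanly in the eigenbasis $V$ of $X^TX/n$, so the computation reduces to coordinate-wise scalar manipulations.

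First, I would rewrite the exact gradient flow solution from Lemma \ref{lem:gf} using the eigendecomposition $X^TX/n = V S V^T$, which gives
\[
\hbeta^\gf(t) = V \, S^+ (I - \exp(-tS)) \, V^T X^T y / 1 ,
\]
where I use $(X^TX)^+ = V S^+ V^T / n$ together with the functional calculus identity $\exp(-tX^TX/n) = V \exp(-tS) V^T$. The convention $(1-e^{-x})^2/x = 0$ at $x = 0$ makes $S^+ (I - \exp(-tS))$ well-defined, with $i$th diagonal entry $(1-\exp(-ts_i))/s_i$.

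Next, condition on $\beta_0$ and apply the decomposition
\[
\Risk(\hbeta^\gf(t); \beta_0) = \big\| \E[\hbeta^\gf(t)\mid\beta_0] - \beta_0 \big\|_2^2 + \tr\!\big(\Cov(\hbeta^\gf(t)\mid\beta_0)\big).
\]
Using $\E[y\mid\beta_0] = X\beta_0$ and the identity $S^+ S \cdot V^T \beta_0 = V^T\beta_0$ coordinate-wise (using the convention at zero eigenvalues, and noting that both $\hbeta^\gf(t)$ and $\beta_0$ appear via their projections onto $v_i$ in the squared-norm), the bias contribution becomes
\[
\sum_{i=1}^p \exp(-2 t s_i)\, |v_i^T \beta_0|^2 .
\]
For the variance, I use $\Cov(y\mid\beta_0) = \sigma^2 I$ and compute
\[
\Cov(\hbeta^\gf(t)\mid\beta_0) = \tfrac{\sigma^2}{n} \, V\, \diag\!\left(\tfrac{(1-\exp(-ts_i))^2}{s_i}\right)\, V^T,
\]
whose trace gives the second term in \eqref{eq:gf_risk}.

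For the Bayes risk, I take the expectation of \eqref{eq:gf_risk} over $\beta_0 \sim (0, (r^2/p) I)$. Since $\E[\beta_0 \beta_0^T] = (r^2/p) I$, one has $\E[|v_i^T\beta_0|^2] = r^2/p$ for every $i$, so the bias term becomes $(r^2/p)\sum_i \exp(-2ts_i)$. Substituting $r^2/p = \alpha \sigma^2 / n$ from the definition $\alpha = r^2 n/(\sigma^2 p)$ and factoring $\sigma^2/n$ yields \eqref{eq:gf_risk_bayes}. There is no real obstacle here beyond bookkeeping; the only subtlety is being careful with the $S^+$ on the null space of $X^T X$, which is handled cleanly by the stated convention $(1-e^{-x})^2/x \to 0$ as $x \to 0$, ensuring that coordinates with $s_i = 0$ contribute $|v_i^T\beta_0|^2$ to the bias (since $\exp(0) = 1$) and $0$ to the variance, as expected.
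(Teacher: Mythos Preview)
Your proposal is correct and follows essentially the same route as the paper: a bias--variance decomposition, diagonalization in the eigenbasis of $X^TX/n$, and then averaging over the prior. Aside from a stray ``$/1$'' (where a factor $1/n$ belongs) and a slightly informal appeal to ``$S^+S\cdot V^T\beta_0 = V^T\beta_0$'' (the paper instead argues that $(X^TX)^+X^TX$ acts as the identity on the image of $I-\exp(-t\hSigma)$, which lies in the row space of $X$), the argument matches the paper's.
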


\begin{remark}
Compare \eqref{eq:gf_risk} to the risk of ridge regression,
\begin{multline}
\label{eq:ridge_risk}
\Risk(\hbeta^\ridge(\lambda); \beta_0) = \\
\sum_{i=1}^p \bigg( |v_i^T \beta_0|^2 \frac{\lambda^2}{(s_i + \lambda)^2} +  
\frac{\sigma^2}{n} \frac{s_i}{(s_i + \lambda)^2} \bigg).
\end{multline}
and compare \eqref{eq:gf_risk_bayes} to the Bayes risk of ridge,
\begin{equation}
\label{eq:ridge_risk_bayes}
\Risk(\hbeta^\ridge(\lambda)) = 
\frac{\sigma^2}{n} \sum_{i=1}^p 
\frac{\alpha \lambda^2 + s_i}{(s_i + \lambda)^2},
\end{equation}
where $\alpha = r^2 n / (\sigma^2 p)$.  These ridge results follow from
standard calculations, found in many other papers; for completeness, we give
details in the supplement.  
\end{remark}

\begin{remark}
\label{rem:lambda_opt}
For ridge regression, the Bayes risk \eqref{eq:ridge_risk_bayes} is minimized at
$\lambda^*=1/\alpha$.  There are (at least) two easy proofs of this fact.  For
the first, we note the Bayes risk of ridge does not depend on the distributions
of $y|\beta_0$ and $\beta_0$ in \eqref{eq:model_y} and \eqref{eq:model_b0}
(just on the first two moments); in the special case that both distributions
are normal, we know that \smash{$\hbeta^\ridge(\lambda^*)$} is the Bayes
estimator, which achieves the optimal Bayes risk (hence certainly the lowest
Bayes risk over the whole ridge family).  For the second proof, following
\citet{dicker2016ridge}, we rewrite each summand in \eqref{eq:ridge_risk_bayes}
as 
$$
\frac{\alpha \lambda^2 + s_i}{(s_i + \lambda)^2} = 
\frac{\alpha}{s_i + \alpha} + 
\frac{s(\lambda \alpha - 1)^2}{(s_i + \lambda)^2 (s_i + \alpha)},
$$
and observe that this is clearly minimized at $\lambda^*=1/\alpha$. 
\end{remark}

\begin{remark}
As far as we can tell, deriving the tuning parameter value $t^*$ minimizing the
gradient flow Bayes risk \eqref{eq:gf_risk_bayes} is difficult.  Nevertheless,
as we will show in Section \ref{sec:risk_opt_relative}, we can still obtain
interesting bounds on the optimal risk itself,
\smash{$\Risk(\hbeta^\gf(t^*))$}. 
\end{remark}

\subsection{Prediction Risk}

We now define two predictive notions of risk.  Let 
\begin{equation}
\label{eq:model_x0}
x_0 \sim (0, \Sigma)
\end{equation}
for a positive semidefinite matrix $\Sigma \in \R^{p \times p}$, and assume  
$x_0$ is independent of $y|\beta_0$. We define in-sample prediction risk 
and out-of-sample prediction risk (or simply, prediction risk) as, respectively, 
\begin{align*}
\Risk^\ini(\hbeta; \beta_0) &= 
\frac{1}{n} \E \big[ \|X\hbeta - X\beta_0\|_2^2 \,\big|\, \beta_0 \big], \\ 
\Risk^\out(\hbeta; \beta_0) &= 
\E \big[ ( x_0^T \hbeta - x_0^T \beta_0)^2 \,\big|\, \beta_0 \big], 
\end{align*}
and their Bayes versions as, respectively,
\smash{$\Risk^\ini(\hbeta) =$}
\smash{$(1/n) \E \|X\hbeta - X\beta_0\|_2^2$},
\smash{$\Risk^\out(\hbeta) =
\E[(x_0^T \hbeta - x_0^T \beta_0)^2]$}. 

For space reasons, in the remainder, we will focus on out-of-sample 
prediction risk, and defer detailed discussion of in-sample prediction risk
to the supplement.  The next lemma, proved in the supplement, gives
expressions for the prediction risk and Bayes prediction risk of gradient
flow. We denote \smash{$\hSigma = X^T X/n$}.    

\begin{lemma}
\label{lem:gf_risk_out}
Under \eqref{eq:model_y}, \eqref{eq:model_x0}, the prediction risk of the
gradient flow estimator \eqref{eq:gf_sol} is
\begin{multline}
\label{eq:gf_risk_out}
\Risk^\out(\hbeta^\gf(t); \beta_0) = 
\beta_0^T \exp(-t\hSigma) \Sigma \exp(-t\hSigma) \beta_0 +{} \\
\frac{\sigma^2}{n} \tr\big[\hSigma^+ (I-\exp(-t\hSigma))^2 \Sigma\big], 
\end{multline} 
and under \eqref{eq:model_b0}, the Bayes prediction risk is 
\begin{multline}
\label{eq:gf_risk_out_bayes}
\Risk^\out(\hbeta^\gf(t)) =
\frac{\sigma^2}{n} \tr\big[\alpha \exp(-2t\hSigma) \Sigma + {} \\
\hSigma^+ (I-\exp(-t\hSigma))^2 \Sigma \big].
\end{multline}
\end{lemma}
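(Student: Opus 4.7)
The plan is to prove both identities via a standard bias-variance decomposition of the quadratic risk, conditional on $\beta_0$, together with the closed form \eqref{eq:gf_sol} for $\hbeta^\gf(t)$ from Lemma \ref{lem:gf}. Because $x_0$ is independent of $y \mid \beta_0$ and has mean $0$, covariance $\Sigma$, I would first rewrite
$$
\Risk^\out(\hbeta^\gf(t);\beta_0) = \E\!\left[(\hbeta^\gf(t)-\beta_0)^T \Sigma (\hbeta^\gf(t)-\beta_0) \,\big|\, \beta_0\right],
$$
and then split into a deterministic bias-squared part $\mu(t)^T \Sigma \mu(t)$ and a variance part $\tr[\Sigma\, \Cov(\hbeta^\gf(t)\mid\beta_0)]$, where $\mu(t) = \E[\hbeta^\gf(t)\mid\beta_0] - \beta_0$.

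Next I would compute the conditional mean and covariance by plugging \eqref{eq:gf_sol} into the data model \eqref{eq:model_y}. The key algebraic observation is that $\hSigma^+$, $I-\exp(-t\hSigma)$, and $\exp(-t\hSigma)$ are all functions of $\hSigma$, hence mutually commuting and simultaneously diagonalized in the eigenbasis of $\hSigma$. From $\E[\hbeta^\gf(t)\mid\beta_0] = \hSigma^+ (I-\exp(-t\hSigma))\hSigma\,\beta_0$, I would use the pseudoinverse identity $\hSigma^+ \hSigma = P$ (projection onto $\col(\hSigma)$) together with the fact that $I-\exp(-t\hSigma)$ vanishes on $\nul(\hSigma)$ to collapse this expression to $(I-\exp(-t\hSigma))\beta_0$, yielding $\mu(t) = -\exp(-t\hSigma)\beta_0$ and hence the bias term $\beta_0^T \exp(-t\hSigma)\Sigma\exp(-t\hSigma)\beta_0$. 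For the covariance, since $\Cov(y\mid\beta_0)=\sigma^2 I$, a direct computation gives
$$
\Cov(\hbeta^\gf(t)\mid\beta_0) = \frac{\sigma^2}{n}\,\hSigma^+ (I-\exp(-t\hSigma))\,\hSigma\,(I-\exp(-t\hSigma))\,\hSigma^+,
$$
which simplifies, via commutativity and $\hSigma^+\hSigma\hSigma^+ = \hSigma^+$, to $\frac{\sigma^2}{n}\hSigma^+ (I-\exp(-t\hSigma))^2$. Taking $\tr[\Sigma\,\cdot]$ and using the cyclic property produces the second term in \eqref{eq:gf_risk_out}.

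For \eqref{eq:gf_risk_out_bayes} I would then take expectation of \eqref{eq:gf_risk_out} with respect to $\beta_0\sim(0,(r^2/p)I)$. Since the variance term does not depend on $\beta_0$, only the bias contributes, and a trace trick gives $\E[\beta_0^T\exp(-t\hSigma)\Sigma\exp(-t\hSigma)\beta_0]=(r^2/p)\tr[\exp(-2t\hSigma)\Sigma]$, where I used $\exp(-t\hSigma)^2=\exp(-2t\hSigma)$. Substituting $r^2/p = \alpha\sigma^2/n$ from the definition of $\alpha$ and merging terms yields the stated Bayes formula.

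The only genuinely delicate step is the pseudoinverse bookkeeping in computing $\E[\hbeta^\gf(t)\mid\beta_0]$: one must justify replacing $\hSigma^+(I-\exp(-t\hSigma))\hSigma$ by $I-\exp(-t\hSigma)$ even when $\hSigma$ is rank deficient. This is handled cleanly by passing to the eigenbasis of $\hSigma$ and noting that the $i$th diagonal entry of $I-\exp(-tS)$ is zero precisely when $s_i=0$, so the zero eigenvalues of $\hSigma^+$ cause no loss. Everything else is routine matrix-exponential and trace manipulation.
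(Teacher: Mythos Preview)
Your proposal is correct and follows essentially the same approach as the paper: a bias-variance decomposition of the $\Sigma$-weighted quadratic risk, using the closed form \eqref{eq:gf_sol}, the simplification $\hSigma^+(I-\exp(-t\hSigma))\hSigma = I-\exp(-t\hSigma)$ via the eigenbasis/row-space argument, the identity $\hSigma^+\hSigma\hSigma^+=\hSigma^+$ for the covariance, and then the trace trick with $\E[\beta_0\beta_0^T]=(r^2/p)I$ for the Bayes version. The paper in fact reuses the mean computation from its proof of Lemma~\ref{lem:gf_risk} (display \eqref{eq:gf_sol_exp}) rather than rederiving it, but your self-contained treatment of that step is equivalent.
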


\begin{remark}
Compare \eqref{eq:gf_risk_out} and \eqref{eq:gf_risk_out_bayes} to the
prediction risk and Bayes prediction risk of ridge, respectively,
\begin{multline}
\label{eq:ridge_risk_out}
\Risk^\out(\hbeta^\ridge(\lambda); \beta_0) = \\
\lambda^2 \beta_0^T (\hSigma+\lambda I)^{-1} \Sigma 
(\hSigma+\lambda I)^{-1} \beta_0 +{} \\ 
\frac{\sigma^2}{n} \tr\big[\hSigma (\hSigma+\lambda I)^{-2} \Sigma\big], 
\end{multline} 

\vspace{-30pt}
\begin{multline}
\label{eq:ridge_risk_out_bayes}
\Risk^\out(\hbeta^\ridge(\lambda)) = 
\frac{\sigma^2}{n} \tr\big[ \lambda^2 \alpha 
(\hSigma+\lambda I)^{-2} \Sigma + {} \\ 
\hSigma (\hSigma+\lambda I)^{-2} \Sigma \big]. 
\end{multline}
These ridge results are standard, and details are given in the supplement.   
\end{remark}

\begin{remark}
\label{rem:lambda_opt_out}
The Bayes prediction risk of ridge \eqref{eq:ridge_risk_out_bayes} is again
minimized at $\lambda^*=1/\alpha$. This is not at all clear analytically, but it
can be established by specializing to a normal-normal likelihood-prior pair,
where (for fixed $x_0$) we know that \smash{$x_0^T \hbeta^\ridge(\lambda^*)$} is 
the Bayes estimator for the parameter $x_0^T \beta_0$ (similar to the arguments
in Remark \ref{rem:lambda_opt} for the Bayes estimation risk).
\end{remark}

\section{RELATIVE RISK BOUNDS}
\label{sec:risk_relative}

\subsection{Relative Estimation Risk}
\label{sec:risk_est_relative}

We start with a simple but key lemma.  

\begin{lemma}
\label{lem:simple}
For all $x \geq 0$, we have (a) $e^{-x} \leq 1/(1+x)$ and (b) $1-e^{-x}   
\leq 1.2985 \, x/(1+x)$.  
\end{lemma}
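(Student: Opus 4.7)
\medskip
\noindent\textbf{Proof proposal.}
Part (a) is the standard tangent-line bound on $e^x$: I would set $f(x) = e^x - 1 - x$, observe $f(0) = 0$ and $f'(x) = e^x - 1 \geq 0$ for $x \geq 0$, so $f \geq 0$ on $[0,\infty)$, which rearranges to $e^{-x} \leq 1/(1+x)$.

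For part (b), my plan is to rewrite the claim as $\phi(x) := Cx - (1-e^{-x})(1+x) \geq 0$ on $[0,\infty)$ with $C = 1.2985$, and analyze $\phi$ by elementary calculus. One checks $\phi(0) = 0$ and $\phi'(x) = (C-1) - xe^{-x} = 0.2985 - xe^{-x}$. Since $xe^{-x}$ is unimodal on $[0,\infty)$ with maximum $1/e$ at $x=1$ and $1/e > 0.2985$, the equation $\phi'(x) = 0$ has exactly two positive roots $x_1 < 1 < x_2$; hence $\phi$ is increasing on $[0,x_1]$, decreasing on $[x_1,x_2]$, and increasing on $[x_2,\infty)$, with $\phi(x) \to +\infty$ as $x \to \infty$. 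Consequently the infimum of $\phi$ on $[0,\infty)$ is $\min\{\phi(0),\phi(x_2)\} = \min\{0,\phi(x_2)\}$, so it suffices to show $\phi(x_2) \geq 0$.

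To evaluate $\phi(x_2)$, I would use the defining relation $e^{-x_2} = 0.2985/x_2$ inside $\phi(x_2)$; a short algebraic simplification gives
\begin{equation*}
\phi(x_2) \;=\; 0.2985\,(x_2 + 1/x_2) \;-\; 0.7015.
\end{equation*}
The inequality $\phi(x_2) \geq 0$ is therefore equivalent to $x_2 + 1/x_2 \geq 0.7015/0.2985$. Letting $x^{\dagger}$ denote the larger root of the quadratic $x + 1/x = 0.7015/0.2985$ (numerically $x^{\dagger} \approx 1.79209$), and using that $xe^{-x}$ is strictly decreasing on $[1,\infty)$, this in turn reduces to the single scalar inequality $x^{\dagger} e^{-x^{\dagger}} \geq 0.2985$, which I would verify by direct numerical computation.

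The main obstacle is that the constant $1.2985$ is chosen to be numerically tight: the true supremum of $(1-e^{-x})(1+x)/x$ is roughly $1.2984$, so the final scalar check at $x^{\dagger}$ has only a very thin positive margin, and a fully rigorous argument requires careful rational or interval arithmetic rather than a purely symbolic reduction. Apart from this single numerical step, every part of the proof is entirely elementary.
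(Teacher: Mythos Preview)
Your proposal is correct and follows essentially the same route as the paper: part (a) is the standard $e^x \geq 1+x$ bound (the paper phrases this as ``Taylor series''), and part (b) in the paper is proved by the one-line assertion that one numerically maximizes $x \mapsto (1-e^{-x})(1+x)/x$. Your treatment of (b) is more elaborate---you localize the numerical step by analyzing the critical points of $\phi$ and reducing to a single scalar check at $x^\dagger$---but this is a refinement of the same idea rather than a different approach, and as you correctly note, the tightness of the constant $1.2985$ means some numerical verification is unavoidable either way.
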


\vspace{-10pt}
\begin{proof}
Fact (a) can by shown via Taylor series and (b) by numerically    
maximizing $x \mapsto (1-e^{-x})(1+x)/x$.
\end{proof}

A bound on the relative risk of gradient flow to ridge, under the
calibration $\lambda=1/t$, follows immediately.

\begin{theorem}
\label{thm:risk_est_relative}
Consider the data model \eqref{eq:model_y}. 

\begin{enumerate}[label=(\alph*), topsep=0pt, itemsep=2pt]
\item For all $\beta_0 \in \R^p$, and all $t \geq 0$,
  \smash{$\Risk(\hbeta^\gf(t); \beta_0) \leq$} 
  \smash{$1.6862 \cdot \Risk(\hbeta^\ridge(1/t); \beta_0)$}.  

\item The inequality in part (a) holds for the Bayes risk with respect to
  any prior on $\beta_0$.

\item The results in parts (a), (b) also hold for in-sample prediction risk. 
\end{enumerate}
\end{theorem}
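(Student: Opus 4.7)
The plan is to establish parts (a)--(c) simultaneously via a termwise comparison of the spectral bias--variance decompositions, using Lemma \ref{lem:simple} as the sole analytic input.

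First, using Lemma \ref{lem:gf_risk} and the ridge formula \eqref{eq:ridge_risk}, I would write both risks as sums indexed by the eigenpairs $(s_i, v_i)$ of $\hSigma = X^T X/n$, and substitute the calibration $\lambda = 1/t$ into the ridge expression. Setting $x_i := t s_i$ and clearing powers of $t$ on the ridge side, the $i$-th summands become
\begin{align*}
\text{GF:}\quad & |v_i^T \beta_0|^2\, e^{-2 x_i} + \tfrac{\sigma^2}{n s_i}(1-e^{-x_i})^2, \\
\text{ridge:}\quad & |v_i^T \beta_0|^2\, (1+x_i)^{-2} + \tfrac{\sigma^2}{n s_i}\, x_i^2 (1+x_i)^{-2}.
\end{align*}
Next, I would apply Lemma \ref{lem:simple}(a) squared, giving $e^{-2 x_i} \leq (1+x_i)^{-2}$, so the GF bias coefficient is dominated by the ridge bias coefficient with ratio at most $1$; and Lemma \ref{lem:simple}(b) squared, giving $(1-e^{-x_i})^2 \leq 1.2985^2 \cdot x_i^2 (1+x_i)^{-2}$, so the GF variance coefficient is at most $1.2985^2 \approx 1.6862$ times the ridge variance coefficient. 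Since every summand of the GF risk is bounded by $1.6862$ times the matching summand of the ridge risk, part (a) follows by summation. Zero eigenvalues contribute zero variance on both sides and equal bias $|v_i^T \beta_0|^2$, so they are handled automatically by the stated convention for $(1-e^{-x})^2/x$ at $x=0$.

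Part (b) is immediate: the inequality in (a) holds pointwise in $\beta_0 \in \R^p$, so integrating against any prior preserves it, and the relative bound transfers to Bayes risk. For part (c), the in-sample prediction risk $\Risk^\ini(\hbeta;\beta_0) = (\hbeta-\beta_0)^T \hSigma (\hbeta-\beta_0)$ admits the same eigenbasis decomposition as estimation risk but with each $i$-th summand multiplied by an extra factor of $s_i$; this factor appears identically in the GF and ridge expansions and so cancels from the termwise ratios, and the two scalar inequalities apply verbatim to give the same constant $1.6862$.

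The main (and essentially only) obstacle is verifying the scalar bounds in Lemma \ref{lem:simple}, which the authors handle numerically; given (a) and (b), the theorem reduces to a few lines of termwise algebra. The calibration $\lambda = 1/t$ is precisely what makes the ridge denominator $(1+x_i)^{-2}$ line up with the two exponential bounds; a different calibration would change the scalar optimization problem and almost certainly yield a worse constant, and the constant cannot be $1$ since the two shrinkage maps do not coincide pointwise.
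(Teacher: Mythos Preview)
Your proof is correct and follows essentially the same approach as the paper: a termwise comparison of the spectral bias--variance decompositions under the calibration $\lambda=1/t$, bounding the bias term via Lemma~\ref{lem:simple}(a) (squared) and the variance term via Lemma~\ref{lem:simple}(b) (squared), then summing; parts (b) and (c) are handled identically by integrating over the prior and by noting the extra factor $s_i$ cancels from the termwise ratios.
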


\vspace{-10pt}
\begin{proof}
For part (a), set $\lambda=1/t$ and compare the $i$th summand in
\eqref{eq:gf_risk}, call it $a_i$, to that in \eqref{eq:ridge_risk}, call it
$b_i$.  Then  
\begin{align*}
a_i &= |v_i^T \beta_0|^2 \exp(-2 t s_i) + \frac{\sigma^2}{n} 
\frac{(1 - \exp(-t s_i))^2}{s_i} \\
&\leq |v_i^T \beta_0|^2 \frac{1}{(1 + t s_i)^2} +
\frac{\sigma^2}{n} 1.2985^2 \frac{t^2 s_i}{(1 + t s_i)^2} \\
&\leq 1.6862 \bigg(|v_i^T \beta_0|^2 \frac{(1/t)^2}{(1/t + s_i)^2} + 
\frac{\sigma^2}{n} \frac{s_i}{(1/t + s_i)^2} \bigg) \\
&= 1.6862 \, b_i,
\end{align*} 
where in the second line, we used Lemma \ref{lem:simple}. Summing over
$i=1,\ldots,p$ gives the desired result. 

Part (b) follows by taking an expectation on each side of the inequality 
in part (a).  Part (c) follows similarly, with details given in the supplement.  
\end{proof}

\begin{remark}
For any $t>0$, gradient flow is in fact a unique Bayes estimator, corresponding
to a normal likelihood in \eqref{eq:model_y} and normal prior $\beta_0 \sim N(0,  
(\sigma^2/n) Q_t^{-1})$, where $Q_t$ is as in Lemma \ref{lem:gf_opt}.  It is
therefore admissible. This means the result in part (a) in the theorem (and part
(b), for the same reason) cannot be true for any universal constant strictly
less than 1.  
\end{remark}


\subsection{Relative Prediction Risk}
\label{sec:risk_out_relative}

We extend the two simple inequalities in Lemma \ref{lem:simple} to matrix
exponentials.  We use $\preceq$ to denote the Loewner ordering on positive
semidefinite matrices, \ie, we use $A \preceq B$ to mean that $B-A$ is
positive semidefinite. 

\begin{lemma}
\label{lem:simple_matrix}
For all $X \succeq 0$, we have (a) $\exp(-2X) \preceq (I+X)^{-2}$ and
(b) $X^+ (I - \exp(-X))^2 \preceq 1.6862 \, X (I + X)^{-2}$.
\end{lemma}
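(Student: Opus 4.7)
The plan is to reduce both matrix inequalities to their scalar counterparts via simultaneous diagonalization. Since $X \succeq 0$, by the spectral theorem write $X = Q \Lambda Q^T$ with $Q$ orthogonal and $\Lambda = \diag(\lambda_1, \ldots, \lambda_p)$, $\lambda_i \ge 0$. Every matrix appearing in (a) and (b)---namely $\exp(-2X)$, $(I+X)^{-2}$, $X^+$, $(I - \exp(-X))^2$, and $X(I+X)^{-2}$---is a function of $X$, hence is diagonalized by the same $Q$. Consequently, each Loewner inequality $A \preceq B$ is equivalent to the pointwise scalar inequality between the corresponding eigenvalues, and we only need to check the scalar versions.

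For part (a), after conjugating by $Q$, the inequality reduces to $e^{-2\lambda_i} \le 1/(1+\lambda_i)^2$ for each $i$, which is just the square of Lemma \ref{lem:simple}(a) applied to $x = \lambda_i \ge 0$. For part (b), it reduces to
\[
\frac{(1 - e^{-\lambda_i})^2}{\lambda_i} \;\le\; 1.6862 \cdot \frac{\lambda_i}{(1+\lambda_i)^2}
\]
for each $\lambda_i > 0$ (with the convention from Lemma \ref{lem:gf_risk} handling $\lambda_i = 0$). Squaring the inequality $1 - e^{-\lambda_i} \le 1.2985\, \lambda_i / (1 + \lambda_i)$ from Lemma \ref{lem:simple}(b) gives $(1 - e^{-\lambda_i})^2 \le 1.2985^2 \, \lambda_i^2/(1+\lambda_i)^2$, and since $1.2985^2 \le 1.6862$, dividing by $\lambda_i$ yields the required bound.

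The only thing to be slightly careful about is the $\lambda_i = 0$ case in part (b): the convention $(1-e^{-x})^2/x = 0$ at $x = 0$ matches the value $\lambda_i/(1+\lambda_i)^2 = 0$ on the right, so the inequality degenerates to $0 \le 0$ and causes no trouble in the spectral reconstruction. I do not expect any real obstacle here---the proof is essentially a functional-calculus lift of Lemma \ref{lem:simple}, with the numerical constant $1.2985^2$ being the reason the same $1.6862$ appears in both the estimation-risk bound (Theorem \ref{thm:risk_est_relative}) and the matrix inequality. The main point is simply to observe the joint diagonalizability, which removes any need to handle noncommuting matrices.
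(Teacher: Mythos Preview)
Your proposal is correct and follows essentially the same approach as the paper: reduce to scalar inequalities via simultaneous diagonalizability (all matrices being functions of $X$), then obtain the needed bounds by squaring the two inequalities in Lemma~\ref{lem:simple}. The paper's argument is terser but identical in substance, and your explicit handling of the $\lambda_i=0$ case in part (b) is a welcome clarification.
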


\vspace{-10pt}
\begin{proof}
All matrices in question are simultaneously diagonalizable, so the claims
reduce to ones about eigenvalues, \ie, reduce to checking that $e^{-2x} \leq   
1/(1+x)^2$ and $(1-e^{-x})^2/x \leq 1.6862 \, x/(1+x)^2$, for $x \geq 0$, and  
these follow by manipulating the facts in Lemma \ref{lem:simple}.
\end{proof}

With just a bit more work, we can bound the relative Bayes prediction risk of
gradient flow to ridge, again under the calibration $\lambda=1/t$.

\begin{theorem}
\label{thm:risk_out_relative}
Consider the data model \eqref{eq:model_y}, prior \eqref{eq:model_b0}, and
(out-of-sample) feature distribution \eqref{eq:model_x0}.  For all $t \geq
0$, \smash{$\Risk^\out(\hbeta^\gf(t)) \leq 1.6862 \cdot
  \Risk^\out(\hbeta^\ridge(1/t))$}.  
\end{theorem}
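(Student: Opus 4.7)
The plan is to extend the summand-by-summand comparison used in Theorem \ref{thm:risk_est_relative} to the matrix-trace setting, by leveraging Lemma \ref{lem:simple_matrix} together with the basic fact that $\tr(A\Sigma) \leq \tr(B\Sigma)$ whenever $A \preceq B$ and $\Sigma \succeq 0$. The two risk formulas \eqref{eq:gf_risk_out_bayes} and \eqref{eq:ridge_risk_out_bayes} each split naturally into a bias piece (containing $\alpha$) and a variance piece, so I would bound them separately and then add.

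First I would substitute $X = t\hSigma$ into the two matrix inequalities of Lemma \ref{lem:simple_matrix}. Part (a) gives $\exp(-2t\hSigma) \preceq (I + t\hSigma)^{-2}$, and after pulling out the scalar $t$ this becomes $\exp(-2t\hSigma) \preceq \lambda^2 (\hSigma + \lambda I)^{-2}$ under the calibration $\lambda = 1/t$. Part (b) similarly yields $(t\hSigma)^+(I-\exp(-t\hSigma))^2 \preceq 1.6862\, (t\hSigma)(I+t\hSigma)^{-2}$; using $(t\hSigma)^+ = (1/t)\hSigma^+$ and rearranging the factors of $t$ gives $\hSigma^+(I-\exp(-t\hSigma))^2 \preceq 1.6862\, \hSigma(\hSigma+\lambda I)^{-2}$. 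These are the matrix analogues of the scalar bounds that drove the proof of Theorem \ref{thm:risk_est_relative}.

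Next I would convert these Loewner inequalities into trace inequalities by multiplying each on the right by $\Sigma$ and taking the trace. Because $\Sigma \succeq 0$, and because $\tr(M\Sigma) = \tr(\Sigma^{1/2} M \Sigma^{1/2}) \geq 0$ whenever $M \succeq 0$, the ordering is preserved, yielding $\tr[\exp(-2t\hSigma)\Sigma] \leq \tr[\lambda^2 (\hSigma+\lambda I)^{-2} \Sigma]$ and $\tr[\hSigma^+(I-\exp(-t\hSigma))^2 \Sigma] \leq 1.6862 \cdot \tr[\hSigma(\hSigma+\lambda I)^{-2} \Sigma]$. Adding the $\alpha$-weighted bias term to the variance term and using $1 \leq 1.6862$ on the bias coefficient then yields
\begin{align*}
\Risk^\out(\hbeta^\gf(t)) &\leq 1.6862 \cdot \tfrac{\sigma^2}{n} \tr\bigl[\alpha \lambda^2 (\hSigma+\lambda I)^{-2} \Sigma + \hSigma(\hSigma+\lambda I)^{-2} \Sigma\bigr] \\
&= 1.6862 \cdot \Risk^\out(\hbeta^\ridge(1/t)),
\end{align*}
exactly as desired.

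There is no real obstacle here beyond careful bookkeeping: the scalar argument of Theorem \ref{thm:risk_est_relative} already contains all the analytic content, and the only new ingredient is the passage from scalar to matrix bounds. The one point that needs care is that $\exp(-t\hSigma)$, $\hSigma^+$, and $(\hSigma+\lambda I)^{-1}$ all share eigenvectors with $\hSigma$ (so Lemma \ref{lem:simple_matrix} really does apply), and that the subsequent trace step against $\Sigma$ does not require $\Sigma$ and $\hSigma$ to commute. Both are standard, so the argument should be compact.
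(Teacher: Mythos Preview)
Your proposal is correct and follows essentially the same route as the paper: apply Lemma \ref{lem:simple_matrix} with $X=t\hSigma$ to bound the bias and variance matrices in the Loewner order, then use $\tr(A\Sigma)\le\tr(B\Sigma)$ for $\Sigma\succeq 0$ to pass to traces. The paper combines the two Loewner bounds into a single inequality before tracing, whereas you trace each piece separately and then add, but this is a cosmetic difference only.
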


\vspace{-10pt}
\begin{proof}
Consider the matrices inside the traces in \eqref{eq:gf_risk_out_bayes} and
\eqref{eq:ridge_risk_out_bayes}.  Applying Lemma \ref{lem:simple_matrix}, we
have 
\begin{align*}
&\alpha \exp(-2t \hSigma) + \hSigma^+ (I-\exp(-t\hSigma))^2 \\
&\qquad \preceq \alpha (I + t \hSigma)^{-2} + 1.6862 \, t^2 \hSigma (I + t
  \hSigma)^{-2} \\  
&\qquad \preceq 1.6862 \Big( \alpha (1/t)^2 (I/t + \hSigma)^{-2} + 
\hSigma (I/t + \hSigma)^{-2} \Big).
\end{align*}
Let $A,B$ be the matrices on the first and last lines in the above display,
respectively.  As $A \preceq B$ and $\Sigma \succeq 0$, we have $\tr(A\Sigma)
\leq \tr(B\Sigma)$, completing the proof.
\end{proof}

\begin{remark}
The Bayes perspective here is critical; the proof breaks down for prediction
risk, at an arbitrary fixed $\beta_0$, and it is not clear to us whether the
result is true for prediction risk in general.  
\end{remark}

\subsection{Relative Risks at Optima}
\label{sec:risk_opt_relative}

We present one more helpful inequality, and defer its proof to the supplement
(it is more technical than the proofs of Lemmas \ref{lem:simple} and
\ref{lem:simple_matrix}, but still straightforward).   

\begin{lemma}
\label{lem:simple_opt}
For all $X \succeq 0$, it holds that $\exp(-2X) + X^+ (I-\exp(-X))^2 \preceq  
1.2147 \, (I+X)^{-1}$.   
\end{lemma}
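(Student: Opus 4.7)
The plan is to mimic the strategy used in Lemma \ref{lem:simple_matrix}: reduce the Loewner inequality to a scalar inequality via simultaneous diagonalization, then verify the scalar bound. Since $\exp(-2X)$, $X^+(I-\exp(-X))^2$, and $(I+X)^{-1}$ are each analytic functions of $X$ (interpreting the Moore--Penrose piece spectrally, with $(1-e^{-s})^2/s \mapsto 0$ when $s=0$, as in the convention fixed after Lemma \ref{lem:gf_risk}), they commute and share an eigenbasis with $X$. Writing $X = Q \,\mathrm{diag}(x_1,\ldots,x_p)\, Q^T$ with $x_i \geq 0$, the claimed matrix inequality is equivalent to the family of scalar inequalities
\begin{equation*}
e^{-2x} + \frac{(1-e^{-x})^2}{x} \,\leq\, \frac{1.2147}{1+x}, \qquad x \geq 0,
\end{equation*}
where the left side is defined to equal $1$ at $x=0$.

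The next step is to set $g(x) = (1+x)\bigl[e^{-2x} + (1-e^{-x})^2/x\bigr]$ and show $\sup_{x \geq 0} g(x) \leq 1.2147$. The endpoints are easy: $g(0) = 1$, and as $x \to \infty$ the term $(1+x)e^{-2x}$ vanishes while $(1+x)(1-e^{-x})^2/x \to 1$, so $g(x)\to 1$ as well. Hence any excess over $1$ must occur at an interior maximum. A short calculation shows $g$ is smooth on $(0,\infty)$, so the supremum is attained at a critical point of $g$, which can be located numerically (it lies near $x \approx 3$, where $g(x) \approx 1.2147$).

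To turn the numerical check into a rigorous certificate, I would restrict to a compact interval $[0, M]$ on which the maximum must occur (one can choose $M$ analytically by, for example, noting that for $x \geq M$ we have $e^{-2x} \leq e^{-2M}$ and $(1-e^{-x})^2/x \leq 1/x$, so $g(x) \leq (1+x)e^{-2M} + (1+x)/x$, which is easily shown to stay well below $1.2147$ once $M$ is taken large enough, e.g.\ $M = 10$). On $[0,M]$ one bounds $|g'|$ uniformly, evaluates $g$ on a grid fine enough that the Lipschitz constant times the spacing is below $1.2147 - \max_{\text{grid}} g$, and concludes. Alternatively, one can solve $g'(x) = 0$ directly: $g'$ is an elementary combination of $e^{-x}$, $e^{-2x}$, and rational terms, and its unique positive root can be isolated by a sign analysis and then bounded by interval arithmetic.

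The main obstacle is simply that the constant $1.2147$ is essentially sharp (it is the true maximum of $g$, to four decimals), so no cleaner closed-form bound suffices; the proof has to commit to the careful scalar analysis outlined above rather than a one-line algebraic manipulation like in Lemma \ref{lem:simple}. Everything else — the reduction to the scalar inequality and the endpoint behavior — is routine.
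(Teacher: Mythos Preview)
Your proposal is correct and follows the same overall strategy as the paper: reduce the Loewner inequality to the scalar inequality $e^{-2x} + (1-e^{-x})^2/x \leq 1.2147/(1+x)$ via simultaneous diagonalization, then verify the scalar bound numerically. The paper differs in one step: instead of directly maximizing $g(x) = (1+x)\bigl[e^{-2x} + (1-e^{-x})^2/x\bigr]$ as you propose, it first completes the square, writing (with $u = (1+x)e^{-x}$)
\[
(1+x)\Bigl[e^{-2x} + \tfrac{(1-e^{-x})^2}{x}\Bigr] \;=\; \frac{u^2 - 2u + (1+x)}{x} \;=\; 1 + \frac{\bigl((1+x)e^{-x} - 1\bigr)^2}{x},
\]
so the bound $g(x) \leq 1+C^2$ is equivalent to $1 - (1+x)e^{-x} \leq C\sqrt{x}$, and the paper then numerically maximizes $[1-(1+x)e^{-x}]/\sqrt{x}$ to get $C \approx 0.4634$, i.e.\ $1+C^2 \approx 1.2147$. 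This buys a slightly simpler target function and explains the structure of the constant, but is otherwise of the same character as your direct maximization; your grid-plus-Lipschitz certificate is in fact more careful about rigor than the paper's bare appeal to numerical maximization.
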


We now have the following result, on the relative Bayes risk (and Bayes
prediction risk), of gradient descent to ridge regression, when both are
optimally tuned.

\begin{theorem}
\label{thm:risk_opt_relative}
Consider the data model \eqref{eq:model_y}, prior \eqref{eq:model_b0}, and 
(out-of-sample) feature distribution \eqref{eq:model_x0}.

\begin{enumerate}[label=(\alph*), topsep=0pt, itemsep=2pt]
\item It holds that 
  \vspace{-6pt}
  $$
  1 \leq \frac{\inf_{t \geq 0} \Risk(\hbeta^\gf(t))}
  {\inf_{\lambda \geq0} \, \Risk(\hbeta^\ridge(\lambda))}
  \leq  1.2147.
  \vspace{-6pt}
  $$

\item The same result as in part (a) holds for both in-sample and out-of-sample
  prediction risk.  
\end{enumerate}
\end{theorem}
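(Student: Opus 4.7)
The plan is to handle the lower bound ($\ge 1$) and the upper bound ($\le 1.2147$) separately, and then upgrade the same argument from estimation risk to the two prediction risks in part (b). For the lower bound in part (a): by Remark \ref{rem:lambda_opt}, the ridge Bayes risk \eqref{eq:ridge_risk_bayes} is minimized at $\lambda^* = 1/\alpha$. I would specialize to a jointly Gaussian likelihood--prior pair whose first two moments match \eqref{eq:model_y}--\eqref{eq:model_b0}: there, $\hbeta^{\ridge}(1/\alpha)$ is the posterior mean and hence the (global) Bayes estimator, so its Bayes risk lower-bounds the Bayes risk of \emph{any} estimator, including $\hbeta^{\gf}(t)$. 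Since both Bayes-risk formulas \eqref{eq:gf_risk_bayes}, \eqref{eq:ridge_risk_bayes} depend only on the first two moments of $(y,\beta_0)$, the same inequality extends to the general model; taking $\inf_{t\ge 0}$ on the left gives the bound $\ge 1$.

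For the upper bound in part (a), I would use the calibration $t = 1/\lambda^* = \alpha$. A direct plug-in of $\lambda=1/\alpha$ into \eqref{eq:ridge_risk_bayes} collapses the $i$th summand to $\alpha/(1+\alpha s_i)$. For gradient flow at $t=\alpha$, the $i$th summand of \eqref{eq:gf_risk_bayes} is $\alpha e^{-2\alpha s_i} + (1-e^{-\alpha s_i})^2/s_i$; substituting $u_i=\alpha s_i$ rewrites it as $\alpha\bigl[e^{-2u_i}+(1-e^{-u_i})^2/u_i\bigr]$. The scalar form of Lemma \ref{lem:simple_opt} then bounds this by $1.2147\,\alpha/(1+u_i) = 1.2147\,\alpha/(1+\alpha s_i)$, which is exactly $1.2147$ times the ridge summand. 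Summing over $i$ and using $\inf_t \Risk(\hbeta^{\gf}(t)) \le \Risk(\hbeta^{\gf}(\alpha))$ completes part (a).

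For part (b), the lower bound repeats the Bayes-estimator argument using Remark \ref{rem:lambda_opt_out}: at fixed $x_0$, $x_0^T\hbeta^{\ridge}(1/\alpha)$ is the Bayes estimator of $x_0^T\beta_0$ in the Gaussian case, and integrating over $x_0$ (or over the empirical row distribution of $X$, for in-sample risk) preserves the optimality of $\lambda^*=1/\alpha$. For the upper bound, I would apply Lemma \ref{lem:simple_opt} to $\alpha\hSigma$ and multiply through by $\alpha$, using $(\alpha\hSigma)^{+} = \hSigma^{+}/\alpha$ and $\alpha(I+\alpha\hSigma)^{-1} = (I/\alpha + \hSigma)^{-1}$, to obtain
\begin{equation*}
\alpha\exp(-2\alpha\hSigma) + \hSigma^{+}\bigl(I-\exp(-\alpha\hSigma)\bigr)^{2} \;\preceq\; 1.2147\,(I/\alpha + \hSigma)^{-1}.
\end{equation*}
A direct simplification of \eqref{eq:ridge_risk_out_bayes} at $\lambda=1/\alpha$ shows its inner matrix collapses to $(\hSigma+I/\alpha)^{-1}$. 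Since $\Sigma \succeq 0$, $A\preceq B$ implies $\tr(A\Sigma)\le \tr(B\Sigma)$, giving $\Risk^{\out}(\hbeta^{\gf}(\alpha)) \le 1.2147\,\Risk^{\out}(\hbeta^{\ridge}(1/\alpha))$, and in-sample risk is the specialization $\Sigma=\hSigma$.

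The main obstacle is really Lemma \ref{lem:simple_opt} itself, which is the engine of the upper bound: the naive bound obtained by summing the two separate inequalities of Lemma \ref{lem:simple} yields only the $1.6862$ constant of Theorem \ref{thm:risk_est_relative}, so the improvement to $1.2147$ requires exploiting the fact that the worst cases of the bias and variance terms do not coincide. Proving the scalar inequality $e^{-2x}+(1-e^{-x})^2/x \le 1.2147/(1+x)$ for all $x\ge 0$ therefore looks more delicate; I would expect a proof that splits into small-$x$ and large-$x$ regimes, combining Taylor expansions with the two one-variable inequalities of Lemma \ref{lem:simple}, or else a numerical maximization of $x\mapsto (1+x)\bigl[e^{-2x}+(1-e^{-x})^2/x\bigr]$ to certify the sharp constant $\approx 1.2147$. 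The matrix version then follows by simultaneous diagonalization exactly as in the proof of Lemma \ref{lem:simple_matrix}.
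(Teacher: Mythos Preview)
Your proposal is correct and follows essentially the same approach as the paper: the lower bound via Bayes optimality of $\hbeta^\ridge(1/\alpha)$ in the normal--normal case plus moment-only dependence of the risk formulas, and the upper bound via the choice $t=\alpha$ together with Lemma~\ref{lem:simple_opt}, applied termwise for estimation and in-sample risks and in its matrix form (with the $\tr(A\Sigma)\le\tr(B\Sigma)$ step) for out-of-sample risk. Your anticipation about Lemma~\ref{lem:simple_opt} is also on target: the paper completes the square to rewrite $e^{-2x}+(1-e^{-x})^2/x$ as $((1+x)e^{-x}-1)^2/(x(1+x)) + x/(1+x)$ and then numerically maximizes $\bigl[1-(1+x)e^{-x}\bigr]/\sqrt{x}$ to obtain the constant $1+C^2=1.2147$.
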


\vspace{-10pt}
\begin{proof}
For part (a), recall from Remark \ref{rem:lambda_opt} that the optimal ridge
tuning parameter is $\lambda^*=1/\alpha$ and further, in the special case of a 
normal-normal likelihood-prior pair, we know that
\smash{$\hbeta^\ridge(\lambda^*)$} is the Bayes estimator so the Bayes risk   
of \smash{$\hbeta^\gf(t)$}, for any $t \geq 0$, must be at least that of
\smash{$\hbeta^\ridge(\lambda^*)$}.  But because these Bayes risks
\eqref{eq:gf_risk_bayes}, \eqref{eq:ridge_risk_bayes} do not depend on the form
of likelihood and prior (only on their first two moments), we know that the same
must be true in general, which proves the lower bound on the risk
ratio.  For the upper bound, we take $t=\alpha$, and compare the $i$th summand  
in \eqref{eq:gf_risk_bayes}, call it $a_i$, to that in
\eqref{eq:ridge_risk_bayes}, call it $b_i$.  We have  
\begin{align*}
a_i &= \alpha \exp(-2 \alpha s_i) + \frac{(1 - \exp(-\alpha s_i))^2}{s_i} \\ 
&\leq 1.2147 \frac{\alpha}{1 + \alpha s_i} = 1.2147 \, b_i,
\end{align*}
where in the second line, we applied Lemma \ref{lem:simple_opt} (to the case of  
scalar $X$).  Summing over $i=1,\ldots,p$ gives the desired result.

Parts (b) follows similarly, with details in the supplement.  
\end{proof}

\section{ASYMPTOTIC RISK ANALYSIS}
\label{sec:risk_asymp}

\subsection{Marchenko-Pastur Asymptotics}

Notice the Bayes risk for gradient flow \eqref{eq:gf_risk_bayes} and ridge
regression \eqref{eq:ridge_risk_bayes} depend only on the predictor matrix $X$
via the eigenvalues of the (uncentered) sample covariance \smash{$\hSigma=X^T 
  X/n$}. Random matrix theory gives us a precise understanding of the behavior 
of these eigenvalues, in large samples.  The following assumptions are
standard ones in random matrix theory (\eg, \citealt{bai2010spectral}). Given a
symmetric matrix $A \in \R^{p \times p}$, recall that its {\it spectral
  distribution} is defined as \smash{$F_A(x) = (1/p) \sum_{i=1}^p
  \ones(\lambda_i(A) \leq x)$}, where $\lambda_i(A)$, $i=1,\ldots,p$ are the
eigenvalues of $A$, and $\ones(\cdot)$ denotes the 0-1 indicator function.   

\begin{assumption}{A}{1}
\label{as:x_dist}
The predictor matrix satisfies $X = Z \Sigma^{1/2}$, for a random matrix $Z
\in \R^{n \times p}$ of i.i.d.\ entries with zero mean and unit variance, and a
deterministic positive semidefinite covariance $\Sigma \in \R^{p \times p}$.   
\end{assumption}

\begin{assumption}{A}{2}
\label{as:aspect_ratio}
The sample size $n$ and dimension $p$ both diverge, \ie, $n,p \to  
\infty$, with $p/n \to \gamma \in (0,\infty)$.  
\end{assumption}

\begin{assumption}{A}{3}
\label{as:spec_measure}
The spectral measure $F_\Sigma$ of the predictor covariance $\Sigma$ converges
weakly as $n,p \to \infty$ to some limiting spectral measure $H$. 
\end{assumption}

Under the above assumptions, the seminal Marchenko-Pastur theorem 
describes the weak limit of the spectral
measure \smash{$F_{\hSigma}$} of the sample covariance \smash{$\hSigma$}.  

\begin{theorem}[\citealt{marchenko1967distribution,silverstein1995strong,bai2010spectral}]  
\label{thm:mp} 
Assuming \ref{as:x_dist}--\ref{as:spec_measure}, 
almost surely, the spectral measure \smash{$F_{\hSigma}$} of
\smash{$\hSigma$} converges weakly to a law \smash{$F_{H,\gamma}$},
called the {\em empirical spectral distribution}, that depends only on 
$H,\gamma$.      
\end{theorem}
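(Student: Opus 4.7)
The plan is to proceed via the Stieltjes transform method, which is the standard route for Marchenko--Pastur type results. For $z \in \C^+ = \{z \in \C : \Im(z) > 0\}$, define
$$
m_n(z) = \frac{1}{p}\tr\bigl((\hSigma - zI)^{-1}\bigr) = \int \frac{dF_{\hSigma}(x)}{x-z}.
$$
Because weak convergence of probability measures on $\R$ is equivalent to pointwise convergence of their Stieltjes transforms on $\C^+$ (together with tightness, which here is immediate since the $F_{\hSigma}$ are supported on $[0,\infty)$), it suffices to show that $m_n(z)$ converges almost surely for each $z \in \C^+$ to a deterministic function $m(z)$ depending only on $H$ and $\gamma$, and that $m$ is itself the Stieltjes transform of some probability measure $F_{H,\gamma}$.

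First I would prove concentration, $m_n(z) - \E[m_n(z)] \to 0$ almost surely. Let $\mathcal{F}_k$ denote the $\sigma$-algebra generated by the first $k$ rows of $Z$, and decompose the difference as the telescoping sum of martingale increments $\E[m_n(z)\mid\mathcal{F}_k] - \E[m_n(z)\mid\mathcal{F}_{k-1}]$. Each increment is controlled by the observation that swapping one row of $Z$ is a rank-two perturbation of $\hSigma$, and standard rank-one resolvent bounds (interlacing for the resolvent trace) show that the corresponding change in $m_n(z)$ is at most $O(1/(p\,\Im z))$. Azuma's inequality gives exponentially small tail probabilities in $p$, and Borel--Cantelli yields the a.s.\ convergence.

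The heart of the argument is to characterize the deterministic limit of $\E[m_n(z)]$. Writing $\hSigma = (1/n)\sum_{i=1}^n x_i x_i^T$ with $x_i = \Sigma^{1/2} z_i$ and using the Sherman--Morrison identity to peel off sample $i$, one expresses $(\hSigma - zI)^{-1}$ in terms of the leave-one-out resolvent $(\hSigma_i - zI)^{-1}$. The key probabilistic input is the \emph{trace lemma}: since $x_i$ is independent of $\hSigma_i$, the quadratic form $(1/n)\,x_i^T (\hSigma_i - zI)^{-1} x_i$ concentrates around $(1/n)\tr\bigl(\Sigma (\hSigma_i - zI)^{-1}\bigr)$ with high probability under assumption \ref{as:x_dist}. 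Summing over $i$, combining with the rank-one bound that $\tr(\Sigma(\hSigma_i-zI)^{-1}) \approx \tr(\Sigma(\hSigma-zI)^{-1})$, and passing to the limit using assumptions \ref{as:aspect_ratio}--\ref{as:spec_measure}, yields the Silverstein fixed-point equation
$$
m(z) = \int \frac{dH(\tau)}{\tau\bigl(1 - \gamma - \gamma\,z\,m(z)\bigr) - z}, \qquad z \in \C^+,
$$
whose right-hand side depends only on $H$ and $\gamma$.

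Finally I would invoke the standard fact that this equation admits a unique solution $m : \C^+ \to \C^+$ (proved by a contraction argument on a suitable half-plane, as in Silverstein--Choi), and that any such solution is automatically the Stieltjes transform of a probability measure $F_{H,\gamma}$ on $[0,\infty)$, obtained via the inversion formula. Together with the concentration step, this gives $m_n(z) \to m(z)$ almost surely for each $z \in \C^+$, hence $F_{\hSigma} \Rightarrow F_{H,\gamma}$ weakly, almost surely. The main obstacle is the trace/quadratic-form concentration, which is where the i.i.d., zero-mean, unit-variance assumption on the entries of $Z$ is used crucially; verifying it uniformly in $n$ under only the assumed moment conditions, and carefully controlling the propagation of errors through the fixed-point equation, is the technical heart of Marchenko--Pastur.
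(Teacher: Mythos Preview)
The paper does not give its own proof of this theorem; it is stated with attribution to \citet{marchenko1967distribution}, \citet{silverstein1995strong}, and \citet{bai2010spectral}, and is used purely as a black-box input to the asymptotic risk calculations that follow. So there is no ``paper's proof'' to compare against.

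That said, your outline is a faithful high-level sketch of exactly the Stieltjes-transform argument developed in those cited references (particularly \citet{silverstein1995strong} and Chapter 4 of \citet{bai2010spectral}): martingale-difference concentration of $m_n(z)$ via rank-one resolvent perturbation bounds, the leave-one-out expansion through Sherman--Morrison, the quadratic-form/trace lemma under the i.i.d.\ assumption \ref{as:x_dist}, and passage to the Silverstein self-consistent equation followed by its uniqueness in $\C^+$. One minor point worth tightening: tightness of $\{F_{\hSigma}\}$ is not automatic from support in $[0,\infty)$ alone. In the references this is handled either by bounding the first moment $\int x\,dF_{\hSigma}(x) = p^{-1}\tr(\hSigma)$, or---more in the spirit of the Stieltjes approach---by verifying directly that the limiting $m$ satisfies $\lim_{y\to\infty} iy\,m(iy) = -1$, which forces the limit measure to have total mass one.
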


\begin{remark}
In general, a closed form for the empirical spectral distribution
\smash{$F_{H,\gamma}$} is not known, except in very special cases (\eg, when 
$\Sigma=I$ for all $n,p$).  However, numerical methods for approximating
\smash{$F_{H,\gamma}$} have been proposed (see 
\citealt{dobriban2015efficient} and references therein).
\end{remark}

\subsection{Limiting Gradient Flow Risk}

The limiting Bayes risk of gradient flow is now immediate from the 
representation in \eqref{eq:gf_risk_bayes}.

\begin{theorem}
\label{thm:gf_risk_lim}
Assume \ref{as:x_dist}--\ref{as:spec_measure}, as well as a data model
\eqref{eq:model_y} and prior \eqref{eq:model_b0}.  Then as $n,p \to \infty$ with
$p/n \to \gamma \in (0,\infty)$, for each $t \geq 0$, the Bayes risk
\eqref{eq:gf_risk_bayes} of gradient flow converges almost surely to    
\begin{equation} 
\label{eq:gf_risk_lim}
\sigma^2 \gamma \int \bigg[ \alpha_0 \exp(-2ts) +
\frac{(1-\exp(-ts))^2}{s} \bigg] \, dF_{H,\gamma}(s),
\end{equation}
where $\alpha_0 = r^2/(\sigma^2 \gamma)$, and $F_{H,\gamma}$ is the empirical
spectral distribution from Theorem \ref{thm:mp}.
\end{theorem}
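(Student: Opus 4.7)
The plan is to view the Bayes risk formula \eqref{eq:gf_risk_bayes} as an integral against the empirical spectral distribution of $\hSigma$ and then invoke the Marchenko-Pastur theorem (Theorem \ref{thm:mp}). Concretely, writing $\alpha = r^2 n/(\sigma^2 p)$ and
$$
f_t(s) = \alpha \exp(-2ts) + \frac{(1-\exp(-ts))^2}{s}
$$
(with the convention $(1-e^{-x})^2/x=0$ at $x=0$), \eqref{eq:gf_risk_bayes} can be rearranged to
$$
\Risk(\hbeta^\gf(t)) = \sigma^2 \cdot \frac{p}{n} \cdot \frac{1}{p}\sum_{i=1}^p f_t(s_i) = \sigma^2 \cdot \frac{p}{n} \int f_t(s)\, dF_{\hSigma}(s),
$$
where $s_1,\dots,s_p$ are the eigenvalues of $\hSigma=X^TX/n$ and $F_{\hSigma}$ is its spectral distribution.

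Under Assumption \ref{as:aspect_ratio}, $p/n \to \gamma$ and hence $\alpha \to \alpha_0 = r^2/(\sigma^2\gamma)$. So I would decouple $f_t$ from its $n$-dependence by introducing the limiting integrand
$$
f_{t,\infty}(s) = \alpha_0 \exp(-2ts) + \frac{(1-\exp(-ts))^2}{s},
$$
and writing $f_t(s) = f_{t,\infty}(s) + (\alpha-\alpha_0)\exp(-2ts)$. Since $\exp(-2ts)\le 1$, the error term contributes at most $|\alpha-\alpha_0| = o(1)$ to the integral, uniformly in $F_{\hSigma}$. It then suffices to show $\int f_{t,\infty}\, dF_{\hSigma} \to \int f_{t,\infty}\, dF_{H,\gamma}$ almost surely. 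For this, I would verify that $f_{t,\infty}$ is bounded and continuous on $[0,\infty)$: continuity at $s=0$ follows from the Taylor expansion $(1-e^{-ts})^2/s = t^2 s + O(s^2)$, which matches the convention; boundedness follows since $\exp(-2ts)\le 1$ and $(1-e^{-ts})^2/s$ is maximized at an interior point and decays like $1/s$ at infinity. Given these properties, the Marchenko-Pastur theorem together with the Portmanteau theorem yields $\int f_{t,\infty}\, dF_{\hSigma} \to \int f_{t,\infty}\, dF_{H,\gamma}$ almost surely.

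Combining the three pieces---$p/n\to\gamma$, the vanishing error from $(\alpha-\alpha_0)$, and the almost-sure weak convergence---gives the stated limit \eqref{eq:gf_risk_lim}. The only subtlety, and the main bookkeeping obstacle, is that the integrand $f_t$ itself depends on $n,p$ through $\alpha$, so one cannot apply weak convergence of $F_{\hSigma}$ directly; the splitting $f_t = f_{t,\infty} + (\alpha-\alpha_0)\exp(-2ts)$ handles this cleanly. Continuity of $f_{t,\infty}$ at the origin is the one place where the convention $(1-e^{-x})^2/x=0$ at $x=0$ is essential, since in the overparameterized regime $\gamma>1$ the limiting measure $F_{H,\gamma}$ places positive mass at $s=0$.
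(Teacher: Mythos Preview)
Your proposal is correct and takes essentially the same approach as the paper: rewrite the Bayes risk as an integral against $F_{\hSigma}$ and invoke weak convergence from Theorem~\ref{thm:mp}. The paper handles the $n$-dependence of $\alpha$ by pulling it out as a prefactor (writing the risk as $(\sigma^2 p/n)[\alpha\int h_1\,dF_{\hSigma} + \int h_2\,dF_{\hSigma}]$ and taking limits in each factor) rather than via your additive splitting, and it is less explicit than you are about boundedness and continuity at $s=0$; these are cosmetic differences.
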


\vspace{-10pt}
\begin{proof}
Note that we can rewrite the Bayes risk in \eqref{eq:gf_risk_bayes} as  
\smash{$(\sigma^2 p)/n [\int \alpha h_1(s) \, dF_{\hSigma}(s) + \int h_2(s) \,  
  dF_{\hSigma}(s)]$}, where we let $h_1(s) = \exp(-2ts)$, $h_2(s) =
(1-\exp(-ts))^2/s$.  Weak convergence of \smash{$F_{\hSigma}$} to
$F_{H,\gamma}$, from Theorem \ref{thm:mp}, implies
\smash{$\int h(s) \, dF_{\hSigma}(s) \to \int h(s) \, dF_{H,\gamma}(s)$} for all
bounded, continuous functions $h$, which proves the result.   
\end{proof}

A similar result is available for the limiting Bayes in-sample prediction risk,
given in the supplement.  Studying the the limiting Bayes (out-of-sample)
prediction risk is much more challenging, as \eqref{eq:gf_risk_out_bayes} is not
simply a function of eigenvalues of \smash{$\hSigma$}. The proof of the next
result, deferred to the supplement, relies on a key fact on the Laplace
transform of the map $x \mapsto \exp(x A)$, and the asymptotic limit of a 
certain trace functional involving  \smash{$\hSigma,\Sigma$}, from
\citet{ledoit2011eigenvectors}.  

\begin{theorem}
\label{thm:gf_risk_out_lim}
Under the conditions of Theorem \ref{thm:gf_risk_lim}, also assume 
\smash{$\E(Z_{ij}^{12}) \leq C_1$}, $\|\Sigma\|_2 \leq C_2$, for all 
$n,p$ and constants $C_1,C_2>0$. For each $t \geq 0$, the Bayes  
prediction risk \eqref{eq:gf_risk_out_bayes} of gradient flow converges 
almost surely to       
\begin{equation}
\label{eq:gf_risk_out_lim}
\sigma^2 \gamma \bigg[ \alpha_0 f(2t) + 2 \int_0^t (f(u) - f(2u)) \, du\bigg],  
\end{equation}
where $f$ is the inverse Laplace transform of the function 
$$
x \mapsto \frac{1}{\gamma}
\bigg(\frac{1}{1-\gamma + \gamma x m(F_{H,\gamma})(-x)}  - 1\bigg),  
$$
and \smash{$m(F_{H,\gamma})$} is the Stieltjes transform of
\smash{$F_{H,\gamma}$} (defined precisely in the supplement).   
\end{theorem}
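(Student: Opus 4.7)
The plan is to reduce both traces in \eqref{eq:gf_risk_out_bayes} to the single scalar functional
\[
\phi_n(u) := \tfrac{1}{n}\tr\bigl[\exp(-u\hSigma)\Sigma\bigr],
\]
identify its limit $\phi(u) = \gamma f(u)$ by Laplace-inverting a known resolvent-trace formula, and then reassemble the two contributions.

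First I would set up the Laplace bridge. For any $x > 0$ and $A \succeq 0$, $(xI+A)^{-1} = \int_0^\infty e^{-xu} e^{-uA}\,du$, so tracing against $\Sigma$ gives
\[
\tfrac{1}{n}\tr\bigl[(xI+\hSigma)^{-1}\Sigma\bigr] = \int_0^\infty e^{-xu}\phi_n(u)\,du,
\]
i.e.\ $x \mapsto \tfrac{1}{n}\tr[(xI+\hSigma)^{-1}\Sigma]$ is the Laplace transform of $\phi_n$. Under \ref{as:x_dist}--\ref{as:spec_measure} together with $\E Z_{ij}^{12} \leq C_1$ and $\|\Sigma\|_2 \leq C_2$, the trace formula of \citet{ledoit2011eigenvectors} yields, almost surely for every $x > 0$,
\[
\tfrac{1}{p}\tr\bigl[(xI+\hSigma)^{-1}\Sigma\bigr] \to \tfrac{1}{\gamma}\Bigl(\tfrac{1}{1-\gamma+\gamma x\,m(F_{H,\gamma})(-x)} - 1\Bigr).
\]
Multiplying by $p/n \to \gamma$ and reading off the inverse Laplace transform identifies $\phi(u) = \gamma f(u)$ with $f$ exactly as in the statement.

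For the first summand of \eqref{eq:gf_risk_out_bayes}, combining $\alpha \to \alpha_0$ with $\phi_n(2t) \to \gamma f(2t)$ gives $\sigma^2\alpha_0 \gamma f(2t)$. For the second, I would use the spectral identity
\[
\hSigma^+ (I - e^{-t\hSigma})^2 = \int_0^t e^{-u\hSigma}\,du - \int_t^{2t} e^{-u\hSigma}\,du,
\]
which is verified by diagonalizing and checking the scalar identity $(1-e^{-ts})^2/s = \int_0^t e^{-us}\,du - \int_t^{2t} e^{-us}\,du$ for each $s \geq 0$. Taking traces against $\Sigma$ rewrites the second summand as $\int_0^t \phi_n(u)\,du - \int_t^{2t}\phi_n(u)\,du$; dominated convergence (using $0 \leq \phi_n(u) \leq \tfrac{p}{n}\|\Sigma\|_2 \leq \gamma C_2 + o(1)$) and the substitution $v = 2u$ then turn this into $2\gamma \int_0^t (f(u)-f(2u))\,du$. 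Summing the two pieces yields \eqref{eq:gf_risk_out_lim}.

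The main obstacle is the Laplace-inversion step: the Ledoit--Péché formula gives only pointwise a.s.\ convergence of the resolvent trace $\mathcal{L}[\phi_n](x)$, so identifying $\phi_n(u) \to \gamma f(u)$ requires a uniqueness argument. This is manageable because $u \mapsto \phi_n(u)$ is nonnegative, nonincreasing, and uniformly bounded by $\gamma C_2 + o(1)$: Helly selection on a countable dense set of $u$ extracts subsequential limits, and uniqueness of the Laplace transform on bounded monotone functions pins each such limit down to $\gamma f$. The enhanced moment bound $\E Z_{ij}^{12} \leq C_1$ and operator-norm bound $\|\Sigma\|_2 \leq C_2$ are used both to invoke the Ledoit--Péché formula and to supply the dominating function for the integrals in $u$.
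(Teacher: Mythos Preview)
Your approach is correct and takes a genuinely different route from the paper. Both arguments feed on the Ledoit--P\'ech\'e limit for $\tfrac{1}{p}\tr[(\hSigma+zI)^{-1}\Sigma]$, but the paper inverts the Laplace transform \emph{complex-analytically}: it writes $\exp(-t\hSigma)\Sigma=\cL^{-1}((\hSigma+zI)^{-1}\Sigma)(t)$, expands the inverse as a Bromwich line integral along $\{a+ib:b\in\R\}$, reduces to $b<0$ via a conjugate-symmetry lemma so that Ledoit--P\'ech\'e (stated for $z\in\C_-$) applies, and passes to the limit under the contour integral. You instead work entirely on the real line: you recognize $\phi_n(u)=\tfrac1n\tr[\exp(-u\hSigma)\Sigma]$ as the Laplace transform of a positive measure (indeed $\phi_n(u)=\tfrac1n\sum_i (V^T\Sigma V)_{ii}e^{-us_i}$ is completely monotone), observe that its Laplace transform is the resolvent trace, and recover $\phi_n\to\gamma f$ by a continuity theorem for Laplace transforms of bounded monotone functions. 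Your variance identity $\hSigma^+(I-e^{-t\hSigma})^2=\int_0^t e^{-u\hSigma}du-\int_t^{2t}e^{-u\hSigma}du$ is equivalent to the paper's differentiate-then-integrate step, but packages it more cleanly. Your route is more elementary---no complex contours, and your dominated-convergence step is over the bounded interval $[0,2t]$ with an honestly bounded integrand---whereas the paper's Bromwich-integral DCT requires more care. One small point to tighten: the paper cites Ledoit--P\'ech\'e only for $z\in\C_-$, so you should note that the result extends to real $x>0$ either by direct appeal to their statement or by taking $z=x-i\epsilon\to x$ and using analyticity of both sides in $\{\Re z>0\}$.
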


An interesting feature of the results \eqref{eq:gf_risk_lim},
\eqref{eq:gf_risk_out_lim} is that they are asymptotically {\it exact} (no 
hidden constants). Analogous results for ridge (by direct arguments, and 
\citealt{dobriban2018high}, respectively) are compared in the supplement, 
for space reasons.  

\section{NUMERICAL EXAMPLES}
\label{sec:exps}

We give numerical evidence for our theoretical results: both our relative risk
bounds in Section \ref{sec:risk_relative}, and our asymptotic risk expressions
in Section \ref{sec:risk_asymp}. We generated features via $X = \Sigma^{1/2} Z$, 
for a matrix $Z$ with i.i.d.\ entries from a distribution $G$ (with mean zero
and unit variance), for three choices of $G$: standard Gaussian, Student $t$
with 3 degrees of freedom, and Bernoulli with probability 0.5 (the last two
distributions were standardized).  We took $\Sigma$ to have all diagonal entries
equal to 1 and all off-diagonals equal to $\rho=0$ (\ie, $\Sigma=I$), or
$\rho=0.5$.  For the problem dimensions, we considered $n=1000$, $p=500$ and
$n=500$, $p=1000$.  For both gradient flow and ridge, we used a range
of 200 tuning parameters equally spaced on the log scale from $2^{-10}$ to
$2^{10}$. Lastly, we set $\sigma^2=r^2=1$, where $\sigma^2$ is the noise
variance in \eqref{eq:model_y} and $r^2$ is the prior radius in
\eqref{eq:model_b0}. For each configuration of $G,\Sigma,n,p$, we computed the 
Bayes risk and Bayes prediction risk gradient flow and ridge, as in
\eqref{eq:gf_risk_bayes}, \eqref{eq:ridge_risk_bayes}, 
\eqref{eq:gf_risk_out_bayes},  \eqref{eq:ridge_risk_out_bayes}.
For $\Sigma=I$, the empirical spectral distribution from Theorem \ref{thm:mp}
has a closed form, and so we computed the limiting Bayes risk for gradient flow
\eqref{eq:gf_risk_lim} via numerical integration (and similarly for ridge,
details in the supplement). 

\begin{figure}[t]
\centering
\includegraphics[width=0.415\textwidth]{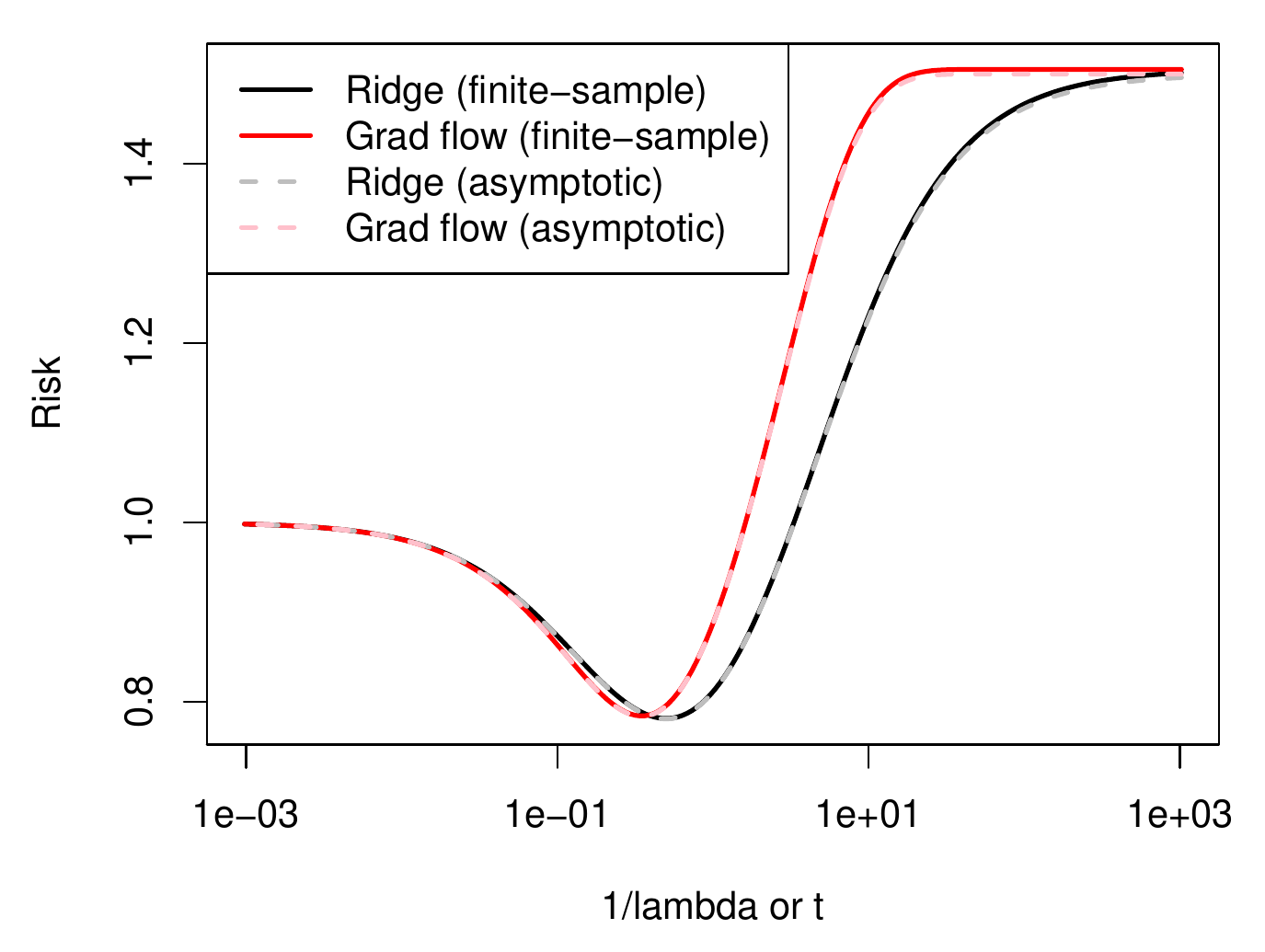} \\
\includegraphics[width=0.415\textwidth]{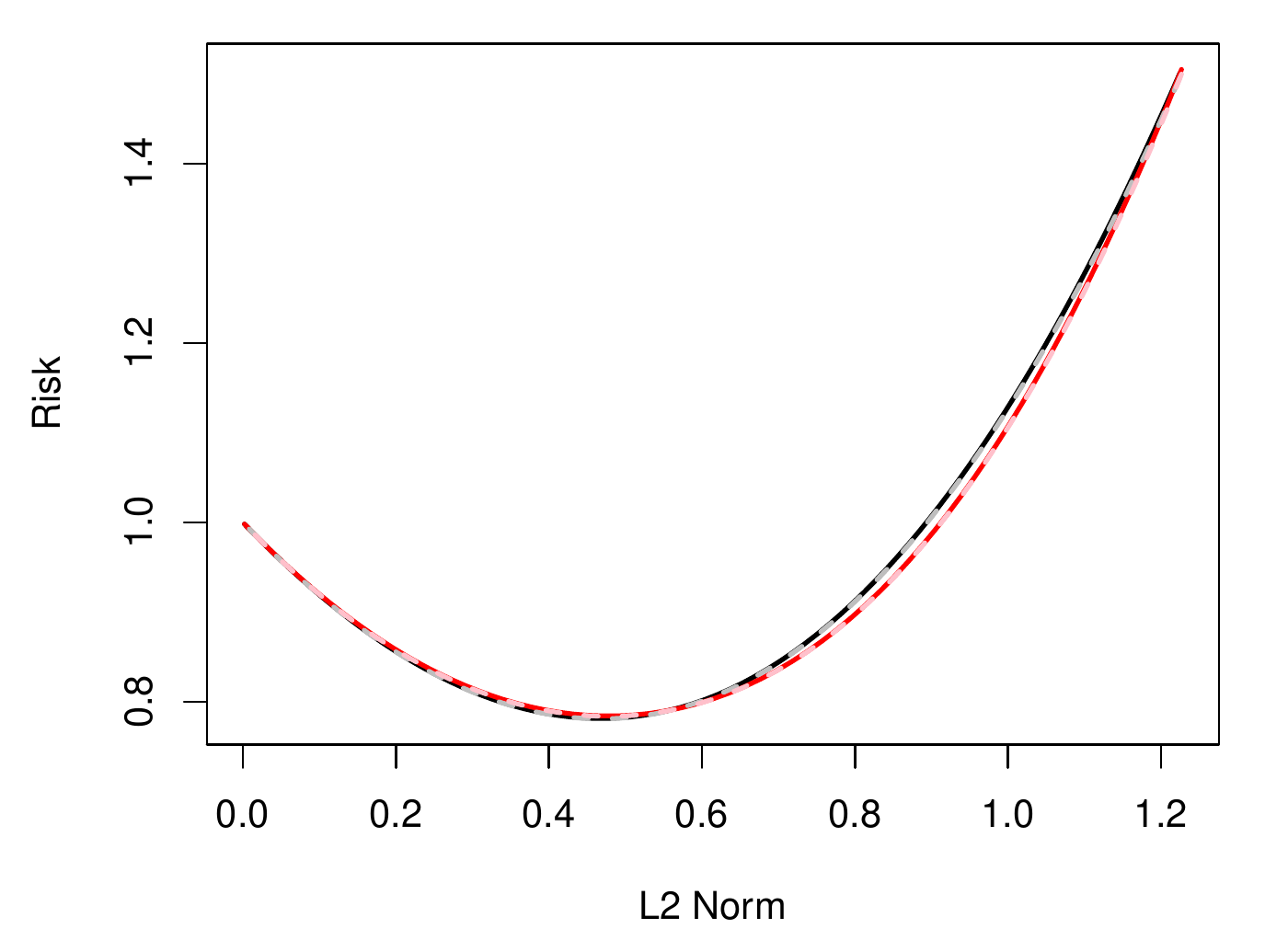}
\vspace{-5pt}
\caption{\it \small Comparison of Bayes risks for gradient flow and ridge, with 
  Gaussian features, $\Sigma=I$, $n=500$, $p=1000$.}
\label{fig:risk} 
\vspace{-10pt}
\end{figure}

Figure \ref{fig:risk} shows the results for Gaussian features, $\Sigma=I$,
$n=500$, and $p=1000$; the supplement shows results for all other 
cases (the results are grossly similar).   The top plot shows the risk
curves when calibrated according to $\lambda=1/t$ (as per our theory).
Here we see fairly strong agreement between the two risk curves, especially
around their minimums; the maximum ratio of gradient flow to ridge risks is
1.2164 over the entire path (cf.\ the upper bound of 1.6862 from Theorem 
\ref{thm:risk_est_relative}), and the ratio of the minimums is 1.0036 (cf.\
the upper bound of 1.2147 from Theorem \ref{thm:risk_opt_relative}).  The bottom 
plot shows the risks when parametrized by the $\ell_2$ norms of the
underlying estimators.  We see remarkable agreement over the whole path, with 
a maximum ratio of 1.0050. Moreover, in both plots, we can see that the
finite-sample (dotted lines) and asymptotic risk curves (solid lines) are
identical, meaning that the convergence in Theorem \ref{thm:gf_risk_lim} is very
rapid (and similarly for ridge).

\section{DISCUSSION}
\label{sec:disc}

In this work, we studied gradient flow (\ie, gradient descent with
infinitesimal step sizes) for least squares, and pointed out a number of
connections to ridge regression.  We showed that, under minimal assumptions on
the data model, and using a calibration $t=1/\lambda$---where $t$ denotes the
time parameter in gradient flow, and $\lambda$ the tuning parameter in
ridge---the risk of gradient flow is no more than 1.69 times that of ridge, for
all $t \geq 0$. We also showed that the same holds for prediction risk, in an 
average (Bayes) sense, with respect to any spherical prior.  Though we did not
pursue this, it is clear that these risk couplings could be used to port risk
results from the literature on ridge regression (\eg,
\citealt{hsu2012random,raskutti2014early,dicker2016ridge,dobriban2018high},
etc.) to gradient flow.  


Our numerical experiments revealed that calibrating the risk curves by the
underlying $\ell_2$ norms of the estimators results in a much tighter coupling;
developing theory to explain this phenomenon is an 
important challenge left to future work. Other interesting directions are
to analyze the risk of a continuum version of stochastic gradient descent, or  
to study gradient flow beyond least squares, \eg, for logistic regression.

\smallskip
\noindent
{\bf Acknolwedgements.} 
We thank Veeranjaneyulu Sadhanala, whose insights led us to 
completely revamp the main results in our paper.
AA was supported by DoE CSGF no.\ DE-FG02-97ER25308.
ZK was supported by DARPA YFA no.\ N66001-17-1-4036.

\bibliographystyle{plainnat}
\bibliography{refs}

\appendix
\renewcommand\thesection{S.\arabic{section}}  
\renewcommand\theequation{S.\arabic{equation}}
\renewcommand\thelemma{S.\arabic{lemma}}
\renewcommand\thetable{S.\arabic{table}}
\renewcommand\thefigure{S.\arabic{figure}}
\renewcommand\thealgorithm{S.\arabic{algorithm}}
\setcounter{figure}{0}
\setcounter{section}{0}
\setcounter{equation}{0}
\setcounter{lemma}{0}
\setcounter{algorithm}{0}
\onecolumn

\section*{Supplementary Material}

\section{Proof of Lemma \ref{lem:gd_opt}}

Let $X^T X/n = V S V^T$ be an eigendecomposition of $X^T X/n$.  Then we can
rewrite the gradient descent iteration \eqref{eq:gd} as
$$
\beta^{(k)} = \beta^{(k-1)} + \frac{\epsilon}{n} \cdot X^T(y - X \beta^{(k-1)}) 
= (I - \epsilon V S V^T) \beta^{(k-1)} + \frac{\epsilon}{n} \cdot X^T y.
$$
Rotating by $V^T$, we get
$$
\tilde \beta^{(k)} = (I - \epsilon S) \tilde \beta^{(k-1)} + \tilde y,
$$
where we let \smash{$\tilde \beta^{(j)} = V^T \beta^{(j)}$}, $j=1,2,3,\ldots$
and \smash{$\tilde y = (\epsilon/n) V^T X^T y$}. Unraveling the preceding
display, we find that 
$$
\tilde \beta^{(k)} = (I - \epsilon S)^k \tilde \beta^{(0)} + 
\sum_{j=0}^{k-1} (I - \epsilon S)^j \tilde y. 
$$
Furthermore applying the assumption that the initial point $\beta^{(0)} = 0$
yields 
$$
\tilde \beta^{(k)} = \sum_{j=0}^{k-1} (I - \epsilon S)^j \tilde y 
= (\epsilon S)^{-1} (I - (I - \epsilon S)^k) \tilde y,
$$
with the second equality following after a short inductive argument.  

Now notice that \smash{$\beta^{(k)} = V \tilde \beta^{(k)}$}, since  
$V V^T$ is the projection onto the row space of $X$, and $\beta^{(k)}$ lies in 
the row space. Rotating back to the original space then gives
$$
\beta^{(k)} = V (\epsilon S)^{-1} (I - (I - \epsilon S)^k) \tilde y
= \frac{1}{n} V S^{-1} (I - (I - \epsilon S)^k) V^T X^T y.
$$
Compare this to the solution of the optimization problem in
Lemma \ref{lem:gd_opt}, which is  
$$
(X^T X + nQ_k)^{-1} X^T y = \frac{1}{n}(VSV^T + Q_k)^{-1} X^T y. 
$$
Equating the last two displays, we see that we must have
$$
V S^{-1} (I - (I - \epsilon S)^k) V^T = (VSV^T + Q_k)^{-1}.
$$
Inverting both sides and rearranging, we get
$$
Q_k = V S (I - (I - \epsilon S)^k)^{-1} V^T - VSV^T,
$$
and an application of the matrix inversion lemma shows that 
$(I - (I - \epsilon S)^k)^{-1} = I + ((I - \epsilon S)^{-k} - I)^{-1}$, so 
$$
Q_k = V S ((I - \epsilon S)^{-k} - I)^{-1} V^T,
$$
as claimed in the lemma.

\section{Proof of Lemma \ref{lem:gf_opt}}

Recall that Lemma \ref{lem:gf} gives the gradient flow solution at time $t$, in
\eqref{eq:gf_sol}.  Compare this to the solution of the optimization problem in
Lemma \ref{lem:gf_opt}, which is  
$$
(X^T X + nQ_t)^{-1} X^T y.
$$
To equate these two, we see that we must have
$$
(X^T X)^+ (I - \exp(-t X^T X/n)) = (X^T X + nQ_t)^{-1},
$$
\ie, writing $X^T X/n=V S V^T$ as an eigendecomposition of $X^T X/n$, 
$$
V S^+ (I - \exp(-tS)) V^T = (V S V^T + Q_t)^{-1}.
$$
Inverting both sides and rearranging, we find that
$$
Q_t = V S (I - \exp(-tS))^{-1} V^T - V S V^T,
$$
which is as claimed in the lemma.

\section{Proof of Lemma \ref{lem:gf_risk}}

For fixed $\beta_0$, and any estimator \smash{$\hbeta$}, recall the
bias-variance decomposition   
$$
\Risk(\hbeta; \beta_0) = \|\E(\hbeta) - \beta_0\|_2^2 + \tr[\Cov(\hbeta)]. 
$$
For the gradient flow estimator in \eqref{eq:gf_sol}, we have  
\begin{align}
\nonumber
\E[\hbeta^\gf(t)] 
&= (X^T X)^+ (I - \exp(-t X^T X/n)) X^T X \beta_0 \\  
\nonumber
&= (X^T X)^+ X^T X (I - \exp(-t X^T X/n)) \beta_0 \\ 
\label{eq:gf_sol_exp}
&= (I - \exp(-t X^T X/n)) \beta_0.
\end{align}
In the second line, we used the fact that $X^T X$ and $(I - \exp(-t X^T 
X/n))$ are simultaneously diagonalizable, and so they commute; in the third
line, we used the fact that $(X^T X)^+ X^T X = X^+ X$ is the projection onto the
row space of $X$, and the image of $I - \exp(-t X^T X/n)$ is already in the row
space.  Hence the bias is, abbreviating \smash{$\hSigma = X^TX/n$}, 
\begin{equation}
\label{eq:gf_risk_bias}
\big\|\E [\hbeta^\gf(t)] - \beta_0\big\|_2^2 =
 \|\exp(-t \hSigma) \beta_0\|_2^2 
= \sum_{i=1}^p |v_i^T \beta_0|^2 \exp(-2t s_i). 
\end{equation}
As for the variance, we have
\begin{align}
\nonumber
\tr \big( \Cov[\hbeta^\gf(t)] \big) &= \sigma^2 \tr \big[ (X^T X)^+ 
(I - \exp(-t \hSigma)) (X^T X) (I - \exp(-t \hSigma)) (X^T X)^+ \big] \\ 
\nonumber
&= \frac{\sigma^2}{n} \tr \big[ \hSigma^+ (I - \exp(-t \hSigma))^2 \big] \\
\label{eq:gf_risk_var}
&= \frac{\sigma^2}{n} \sum_{i=1}^p \frac{(1 - \exp(-t s_i))^2}{s_i},
\end{align}
where in the second line we used the fact that \smash{$\hSigma^+$} and  
\smash{$(I - \exp(-t \hSigma))$} are simultaneously diagonalizable, and
hence commute, and also the fact that \smash{$\hSigma^+ \hSigma 
\hSigma^+ = \hSigma^+$}.  Putting together \eqref{eq:gf_risk_bias} and
\eqref{eq:gf_risk_var} proves the result in \eqref{eq:gf_risk}.

When $\beta_0$ follows the prior in \eqref{eq:model_b0}, the variance
\eqref{eq:gf_risk_var} remains unchanged. The expectation of the bias 
\eqref{eq:gf_risk_bias} (over $\beta_0$) is
$$
\E \big[ \beta_0^T \exp(-2t \hSigma) \beta_0 \big] =
\tr \big[ \E(\beta_0 \beta_0^T) \exp(-2t \hSigma) \big] = \frac{r^2}{p}
\sum_{i=1}^p \exp(-2t s_i),
$$
which leads to \eqref{eq:gf_risk_bayes}, after the appropriate definition of 
$\alpha$. 

\section{Derivation of \eqref{eq:ridge_risk}, \eqref{eq:ridge_risk_bayes}}

As in the calculations in the last section, consider for the ridge estimator in
\eqref{eq:ridge_sol},
\begin{equation}
\label{eq:ridge_sol_exp}
\E[\hbeta^\ridge(\lambda)] =
(X^T X + n \lambda I)^{-1} X^T X \beta_0 =
(\hSigma + \lambda I)^{-1} \hSigma \beta_0,
\end{equation}
where we have again abbreviated \smash{$\hSigma = X^TX/n$}.  The bias is thus 
\begin{align}
\nonumber
\big\|\E [\hbeta^\ridge(\lambda)] - \beta_0\big\|_2^2 &= 
\big\| (\hSigma + \lambda I)^{-1} (\hSigma - I) \beta_0 \big\|_2^2 \\
\nonumber
&= \big\| \lambda  (\hSigma + \lambda I)^{-1} \beta_0 \big\|_2^2 \\ 
\label{eq:ridge_risk_bias}
&= \sum_{i=1}^p |v_i^T \beta_0|^2 \frac{\lambda^2}{(s_i + \lambda)^2},
\end{align}
the second equality following after adding and subtracting $\lambda I$ to the
second term in parentheses, and expanding.  For the variance, we compute
\begin{align}
\nonumber
\tr \big( \Cov[\beta^\ridge(\lambda)] \big) &= \sigma^2 \tr \big[ 
(X^T X + n \lambda I)^{-1} X^T X (X^T X + n \lambda I)^{-1} \big] \\
\nonumber
&= \frac{\sigma^2}{n} \tr \big[ \hSigma (\hSigma + \lambda I)^{-2} \big] \\ 
\label{eq:ridge_risk_var}
&= \frac{\sigma^2}{n} \sum_{i=1}^p \frac{s_i}{(s_i + \lambda)^2},
\end{align}
the second equality following by noting that \smash{$\hSigma$} and
\smash{$(\hSigma + \lambda I)^{-1}$} are simultaneously diagonalizable, and
therefore commute. Putting together \eqref{eq:ridge_risk_bias} and
\eqref{eq:ridge_risk_var} proves the result in \eqref{eq:ridge_risk}.  The Bayes
result \eqref{eq:ridge_risk_bayes} follows by taking an expectation of the bias 
\eqref{eq:ridge_risk_bias} (over $\beta_0$), just as in the last section for 
gradient flow.  

\section{Proof of Lemma \ref{lem:gf_risk_out}}

First, observe that for fixed $\beta_0$, and any estimator \smash{$\hbeta$}, 
$$
\Risk^\out(\hbeta; \beta_0) = \E \|\hbeta - \beta_0\|_\Sigma^2,
$$
where $\| z \|_A^2 = z^T A z$.  The bias-variance decomposition for
out-of-sample prediction risk is hence 
$$
\Risk^\out(\hbeta; \beta_0) =
\|\E(\hbeta) - \beta_0\|_\Sigma^2 + \tr[\Cov(\hbeta) \Sigma]. 
$$
For gradient flow, we can compute the bias, from \eqref{eq:gf_sol_exp}, 
\begin{equation}
\label{eq:gf_risk_out_bias}
\big\|\E[\hbeta^\gf(t)] - \beta_0\big\|_\Sigma^2 = 
\|\exp(-t \hSigma) \beta_0\|_\Sigma^2 =
\beta_0^T \exp(-t\hSigma) \Sigma \exp(-t\hSigma) \beta_0,
\end{equation}
and likewise the variance, 
\begin{align}
\nonumber
\tr \big( \Cov[\beta^\gf(t)] \big) &= \sigma^2 \tr \big[ (X^T X)^+ 
(I - \exp(-t \hSigma)) (X^T X) (I - \exp(-t \hSigma)) (X^T X)^+ \Sigma \big] \\   
\label{eq:gf_risk_out_var}
&= \frac{\sigma^2}{n} \tr \big[ \hSigma^+ (I - \exp(-t \hSigma))^2 \Sigma \big].  
\end{align}
Putting together \eqref{eq:gf_risk_out_bias} and \eqref{eq:gf_risk_out_var} proves
the result in \eqref{eq:gf_risk_out}.  The Bayes result
\eqref{eq:gf_risk_out_bayes}  follows by taking an 
expectation over the bias, as argued previously. 

We note that the in-sample prediction risk is given by the same formulae except
with $\Sigma$ replaced by \smash{$\hSigma$}, which leads to  
\begin{align}
\nonumber
\Risk^\ini(\hbeta^\gf(t); \beta_0) &= 
\beta_0^T \exp(-t\hSigma) \hSigma \exp(-t\hSigma) \beta_0 +
\frac{\sigma^2}{n} \tr\big[\hSigma^+ (I-\exp(-t\hSigma))^2 \hSigma \big] \\ 
\label{eq:gf_risk_in}
&= \sum_{i=1}^p \bigg(|v_i^T \beta_0|^2 s_i \exp(-2 t s_i) + 
\frac{\sigma^2}{n} (1 - \exp(-t s_i))^2 \bigg),
\end{align} 
and
\begin{align}
\nonumber
\Risk^\out(\hbeta^\gf(t)) &=
\frac{\sigma^2}{n} \tr\big[\alpha \exp(-2t\hSigma) \hSigma + 
\hSigma^+ (I-\exp(-t\hSigma))^2 \hSigma \big] \\
\label{eq:gf_risk_in_bayes}
&= \frac{\sigma^2}{n} \sum_{i=1}^p \big[ \alpha s_i \exp(-2 t s_i) +  
(1 - \exp(-t s_i))^2 \big].
\end{align}

\section{Derivation of \eqref{eq:ridge_risk_out},
  \eqref{eq:ridge_risk_out_bayes}}

For ridge, we can compute the bias, from \eqref{eq:ridge_sol_exp},  
\begin{equation}
\label{eq:ridge_risk_out_bias}
\big\|\E [\hbeta^\ridge(\lambda)] - \beta_0\big\|_\Sigma^2 
= \big\| \lambda  (\hSigma + \lambda I)^{-1} \beta_0 \big\|_\Sigma^2 
= \lambda^2 \beta_0^T (\hSigma+\lambda I)^{-1} \Sigma 
(\hSigma+\lambda I)^{-1} \beta_0,
\end{equation}
and also the variance, 
\begin{align}
\nonumber
\tr \big( \Cov[\beta^\ridge(\lambda)] \Sigma \big) &= \sigma^2 \tr \big[ 
(X^T X + n \lambda I)^{-1} X^T X (X^T X + n \lambda I)^{-1} X^T \Sigma \big] \\ 
\label{eq:ridge_risk_out_var}
&= \frac{\sigma^2}{n} \tr \big[ \hSigma (\hSigma + \lambda I)^{-2} \Sigma
  \big]. 
\end{align}
Putting together \eqref{eq:ridge_risk_out_bias} and \eqref{eq:ridge_risk_out_var} proves
\eqref{eq:ridge_risk_out}, and the Bayes result
\eqref{eq:ridge_risk_out_bayes} follows by taking an 
expectation over the bias, as argued previously. 

Again, we note that the in-sample prediction risk expressions is given by replacing
$\Sigma$ replaced by \smash{$\hSigma$}, yielding 
\begin{align}
\nonumber
\Risk^\ini(\hbeta^\ridge(\lambda); \beta_0) &= 
\lambda^2 \beta_0^T (\hSigma+\lambda I)^{-1} \hSigma 
(\hSigma+\lambda I)^{-1} \beta_0 +
\frac{\sigma^2}{n} \tr\big[\hSigma (\hSigma+\lambda I)^{-2} \hSigma\big] \\
\label{eq:ridge_risk_in}
&= \sum_{i=1}^p \bigg( |v_i^T \beta_0|^2 \frac{\lambda^2 s_i}{(s_i + \lambda)^2}
  + \frac{\sigma^2}{n} \frac{s_i^2}{(s_i + \lambda)^2} \bigg),
\end{align}
and 
\begin{align}
\nonumber
\Risk^\ini(\hbeta^\ridge(\lambda)) &= 
\frac{\sigma^2}{n} \tr\big[ \lambda^2 \alpha 
(\hSigma+\lambda I)^{-2} \hSigma +
\hSigma (\hSigma+\lambda I)^{-2} \hSigma \big] \\
\label{eq:ridge_risk_in_bayes}
&= \frac{\sigma^2}{n} \sum_{i=1}^p 
\frac{\alpha \lambda^2 s_i + s_i^2}{(s_i + \lambda)^2}.
\end{align}

\section{Proof of Theorem \ref{thm:risk_est_relative}, Part (c)}

As we can see from comparing \eqref{eq:gf_risk}, \eqref{eq:ridge_risk} to
\eqref{eq:gf_risk_in}, \eqref{eq:ridge_risk_in}, the only difference in the
latter in-sample prediction risk expressions is that each summand has been
multiplied by $s_i$.  Therefore the exact same relative bounds apply termwise,
\ie, the arguments for part (a) apply here.  The Bayes result again follows just
by taking expectations.  

\section{Proof of Lemma \ref{lem:simple_opt}}

As in the proof of Lemma \ref{lem:simple_matrix}, because all matrices here are
simultaneously diagonalizable, the claim reduces to one about eigenvalues, and
it suffices to check that $e^{-2x} + (1-e^{-x})^2/x \leq 1.2147 / (1+x)$ for all $x
\geq 0$.  Completing the square and simplifying, 
\begin{align*}
e^{-2x} + \frac{(1-e^{-x})^2}{x}
&= \frac{(1 + x) e^{-2x} - 2 e^{-x}+ 1}{x} \\ 
&= \frac{(\sqrt{1 + x} e^{-x} - \frac{1}{\sqrt{1 + x}})^2}{x}
+ \frac{x}{1+x}.
\end{align*}
Now observe that, for any constant $C > 0$,
\begin{align}
\label{eq:desired_ineq}
\frac{(\sqrt{1 + x} e^{-x} - \frac{1}{\sqrt{1 + x}})^2}{x}
+ \frac{x}{1+x} & \leq (1+C^2) \frac{1}{1+x} \\
\nonumber
\iff | (1 + x) e^{-x} - 1| & \leq C \sqrt{x} \\ 
\nonumber
\iff 1 - (1 + x) e^{-x} & \leq C \sqrt{x},
\end{align}
the last line holding because the basic inequality $e^x \geq 1+x$ implies that
$e^{-x} \leq 1/(1+x)$, for $x >-1$.  We see that for the above line to hold, we
may take  
$$
C = \max_{x \geq 0} \, \big[ 1 - (1 + x) e^{-x}\big] / \sqrt{x} = 0.4634, 
$$
which has been computed by numerical maximization, \ie, we find that the desired
inequality \eqref{eq:desired_ineq} holds with $(1+C^2) = 1.2147$. 

\section{Proof of Theorem \ref{thm:risk_opt_relative}, Part (b)}

The lower bounds for the in-sample and out-of-sample prediction risks follow by
the same arguments as in the estimation risk case (the ridge estimator here is
the Bayes estimator in the case of a normal-normal likelihood-prior pair, and
the risks here do not depend on the specific form of the likelihood and prior). 

For the upper bounds, for in-sample prediction risk, we can see from comparing
\eqref{eq:gf_risk_bayes}, \eqref{eq:ridge_risk_bayes} 
to \eqref{eq:gf_risk_in_bayes}, \eqref{eq:ridge_risk_in_bayes}, the only
difference in the latter expressions is that each summand has been multiplied by
$s_i$, and hence the same relative bounds apply termwise, \ie, the arguments for 
part (a) carry over directly here. 

And for out-of-sample prediction risk, the matrix inside the trace in
\eqref{eq:gf_risk_out_bayes} when $t=\alpha$ is
$$
\alpha \exp(-2\alpha\hSigma) + \hSigma^+ (I-\exp(-\alpha\hSigma))^2,
$$ 
and the matrix inside the trace in \eqref{eq:ridge_risk_out_bayes} when
$\lambda=1/\alpha$ is 
$$
1/\alpha (\hSigma+(1/\alpha) I)^{-2} + \hSigma (\hSigma+(1/\alpha) I)^{-2} 
= \alpha (\alpha \hSigma + I)^{-1}. 
$$
By Lemma \ref{lem:simple_opt}, we have
$$
\alpha \exp(-2\alpha\hSigma) + \hSigma^+ (I-\exp(-\alpha\hSigma))^2 \preceq 
1.2147 \alpha (\alpha \hSigma + I)^{-1}. 
$$
Letting $A,B$ denote the matrices on the left- and right-hand sides above,
since $A \preceq B$ and $\Sigma \succeq 0$, it holds that $\tr(A\Sigma)
\leq \tr(B\Sigma)$, which gives the desired result. 

\section{Proof of Theorem \ref{thm:gf_risk_out_lim}}
\label{sec:gf_risk_out_lim}

Denote $\C_- = \{ z \in \C : \Im(z) < 0 \}$.  By Lemma 2 in
\citet{ledoit2011eigenvectors}, under the conditions stated in the theorem,
for each $z \in \C_-$, we have 
\begin{equation}
\label{eq:lp_result}
\lim_{n,p \to \infty} \frac{1}{p} \tr \big[ (\hSigma + z I)^{-1} \Sigma \big]
\to \theta(z) := \frac{1}{\gamma} 
\bigg(\frac{1}{1-\gamma + \gamma z m(F_{H,\gamma})(-z)}  - 1\bigg),    
\end{equation}
almost surely, where \smash{$m(F_{H,\gamma})$} denotes the {\it Stieltjes
  transform} of the empirical spectral distribution \smash{$F_{H,\gamma}$}, 
\begin{equation}
\label{eq:stieltjes}
m(F_{H,\gamma})(z) = \int \frac{1}{u - z} \, dF_{H,\gamma}(u). 
\end{equation}
It is evident that \eqref{eq:lp_result} is helpful for understanding the Bayes
prediction risk of ridge regression \eqref{eq:ridge_risk_out_bayes}, where the
resolvent functional \smash{$\tr[(\hSigma + z I)^{-1} \Sigma]$} plays a
prominent role.  

For the Bayes prediction risk of gradient flow \eqref{eq:gf_risk_out_bayes}, the
connection is less clear.  However, the Laplace transform is the key link
between \eqref{eq:gf_risk_out_bayes} and \eqref{eq:lp_result}.  In particular,
defining $g(t) = \exp(tA)$, it is a standard fact that its Laplace transform
\smash{$\cL(g)(z) = \int e^{-tz} g(t) \, dt$} (meaning elementwise integration)
is in fact 
\begin{equation}
\label{eq:mat_exp_lap}
\cL(\exp(tA))(z) = (A - zI)^{-1}.
\end{equation}
Using linearity (and invertibility) of the Laplace transform, this means 
\begin{equation}
\label{eq:gf_bias_lap}
\exp(-2 t \hSigma) \Sigma = \cL^{-1} 
\big( (\hSigma + z I)^{-1} \Sigma \big)(2t), 
\end{equation}
Therefore, we have for the bias term in \eqref{eq:gf_risk_out_bayes},
\begin{align}
\nonumber
\frac{\sigma^2 \alpha}{n} \tr \big[ \exp(-2 t \hSigma) \Sigma \big]  
& = \frac{\sigma^2 \alpha}{n} \tr \big[ \cL^{-1} \big((\hSigma + z I)^{-1}  
  \Sigma \big)(2t) \big] \\ 
\label{eq:gf_bias_lap2}
& = \frac{\sigma^2 p \alpha}{n} \cL^{-1} 
\Big( \tr \big[ p^{-1} (\hSigma + z I)^{-1} \Sigma \big] \Big)(2t), 
\end{align}
where in the second line we again used linearity of the (inverse) Laplace
transform.  In what follows, we will show that we can commute the limit as $n,p
\to \infty$ with the inverse Laplace transform in \eqref{eq:gf_bias_lap2},
allowing us to apply the Ledoit-Peche result \eqref{eq:lp_result}, to derive an 
explicit form for the limiting bias.  We first give a more explicit
representation for the inverse Laplace transform in terms of a line integral in
the complex plane
$$
\frac{\sigma^2 p \alpha}{n} 
\cL^{-1} \Big( \tr \big[ p^{-1} (\hSigma + z I)^{-1} \Sigma \big] \Big)(2t)
= \frac{\sigma^2 p \alpha}{n} \frac{1}{2 \pi i} 
\int _{a - i \infty}^{a + i \infty} \tr \big[ p^{-1} (\hSigma + z I)^{-1} \Sigma
\big] \exp(2 t z) \, dz,   
$$
where $i = \sqrt{-1}$, and $a \in \R$ is chosen so that the line 
$[a - i \infty, a + i \infty]$ lies to the right of all singularities of
the map $z \mapsto \tr [p^{-1} (\hSigma + z I)^{-1} \Sigma]$.  Thus, we may fix 
any $a>0$, and reparametrize the integral above as
\begin{align}
\nonumber
\frac{\sigma^2 p \alpha}{n} 
\cL^{-1} \Big( \tr \big[ p^{-1} (\hSigma + z I)^{-1} \Sigma \big] \Big)(2t) 
&= \frac{\sigma^2 p \alpha}{n} \frac{1}{2 \pi} \int_{-\infty}^\infty 
\tr \big[ p^{-1} \big(\hSigma + (a+ib) I\big)^{-1} \Sigma \big] \exp(2t(a + ib))
  \, db \\  
\label{eq:gf_bias_lap3}
&= \frac{\sigma^2 p \alpha}{n} \frac{1}{\pi} \int_{-\infty}^0 
\Re \Big(\tr \big[ p^{-1} \big(\hSigma + (a+ib) I\big)^{-1} \Sigma \big]
  \exp(2t(a + ib)) \Big) \, db.  
\end{align}
The second line can be explained as follows.  A straightforward calculation,
given in Lemma \ref{lem:conj_prop_res}, shows that the function
\smash{$h_{n,p}(z)=\tr [p^{-1} \big(\hSigma + z I\big)^{-1} \Sigma] \exp(2tz)$}
satisfies \smash{$h_{n,p}(\overline{z}) = \overline{h_{n,p}(z)}$}; another short
calculation, deferred to Lemma \ref{lem:conj_prop_int}, shows that for any
function with such a property, its integral over a vertical line in the complex
plane reduces to the integral of twice its real part, over the line segment
below the real axis.  Now, noting that the integrand above satisfies
$$
|h_{n,p}(z)| \leq \|(\hSigma + z I)^{-1}\|_2 \| \Sigma \|_2 \leq C_2/a,
$$
for all $z \in [a - i\infty, a + i \infty]$, we can take limits in
\eqref{eq:gf_bias_lap3} and apply the dominated convergence theorem, to yield
that almost surely,
\begin{align}
\nonumber
\lim_{n,p \to \infty} \frac{\sigma^2 p \alpha}{n} 
\cL^{-1} \Big( \tr \big[ p^{-1} (\hSigma + z I)^{-1} &\Sigma \big] \Big)(2t) \\ 
\nonumber
&= \sigma^2 \gamma \alpha_0 \frac{1}{\pi} \int_{-\infty}^0
 \lim_{n,p \to \infty} \Re \Big(\tr \big[ p^{-1} \big(\hSigma + (a+ib)
  I\big)^{-1} \Sigma \big] \exp(2t(a + ib)) \Big) \, db \\
\nonumber
&= \sigma^2 \gamma \alpha_0 \frac{1}{\pi} 
\int_{-\infty}^0 \Re \big(\theta(a+ib) \exp(2t(a + ib)) \big) \, db \\ 
\nonumber
&= \sigma^2 \gamma \alpha_0 \frac{1}{2\pi} \int_{-\infty}^\infty 
  \theta(a+ib) \exp(2t(a + ib)) \, db \\  
\label{eq:gf_bias_lap4}
&= \sigma^2 \gamma \alpha_0 \cL^{-1}(\theta)(2t).
\end{align}
In the second equality, we used the Ledoit-Peche result \eqref{eq:lp_result}, 
which applies because $a+ib \in \C_-$ for $b$ in the range of integration. 
In the third and fourth equalities, we essentially reversed the arguments
leading to \eqref{eq:gf_bias_lap2}, but with $h(z)=\theta(z) \exp(2tz)$ in
place of \smash{$h_{n,p}$} (note that $h$ must also satisfy
\smash{$h(\overline{z}) = \overline{h(z)}$}, as it is the pointwise limit of
\smash{$h_{n,p}$}, which has this same property). 

As for the variance term in \eqref{eq:gf_risk_out_bayes}, consider
differentiating with respect to $t$, to yield  
\begin{align*}
\frac{d}{dt} \frac{\sigma^2}{n} \tr \big[ \hSigma^+ (I - \exp(-t \hSigma))^2
  \Sigma \big] 
& = \frac{2 \sigma^2}{n} \tr \big[ \hSigma^+ \hSigma (I - \exp(-t \hSigma))
  \exp(-t \hSigma) \Sigma \big] \\ 
& = \frac{2 \sigma^2}{n} \tr \big[ ( I - \exp(-t \hSigma) ) \exp(-t \hSigma)
  \Sigma \big],  
\end{align*}
with the second line following because the column space of \smash{$I - \exp(-t 
  \hSigma)$} matches that of \smash{$\hSigma$}.  The fundamental theorem of
calculus then implies that the variance equals
\begin{multline*}
\frac{2 \sigma^2}{n} \int_0^t \tr \big[ (\exp(-u \hSigma) - \exp(-2 u \hSigma)) 
\Sigma \big] \, du = \\ 
\frac{2 \sigma^2p}{n} \int_0^t 
\Big[\cL^{-1} \Big( \tr \big[ p^{-1} (\hSigma + z I)^{-1} \Sigma \big] \Big)(u)
- \cL^{-1} \Big( \tr \big[ p^{-1} (\hSigma + z I)^{-1} \Sigma \big] \Big)(2u)
\Big] \, du,
\end{multline*}
where the equality is due to inverting the Laplace transform fact
\eqref{eq:mat_exp_lap}, as done in \eqref{eq:gf_bias_lap} for the bias.  The
same arugments for the bias now carry over here, to imply 
\begin{multline}
\label{eq:gf_var_lap}
\lim_{n,p \to \infty} \frac{2 \sigma^2}{n} \int_0^t 
\Big[\cL^{-1} \Big( \tr \big[ p^{-1} (\hSigma + z I)^{-1} \Sigma \big] \Big)(u)
- \cL^{-1} \Big( \tr \big[ p^{-1} (\hSigma + z I)^{-1} \Sigma \big] \Big)(2u)
\Big] \, du = \\ 
2 \sigma^2 \gamma \int_0^t \big(\cL^{-1}(\theta)(u) - \cL^{-1}(\theta)(2u)\big)
\, du. 
\end{multline}
Putting together \eqref{eq:gf_bias_lap4} and \eqref{eq:gf_var_lap} completes the
proof. 

\section{Supporting Lemmas}

\begin{lemma} 
\label{lem:conj_prop_res}
For any real matrices $A, B \succeq 0$ and $t \geq 0$, define   
$$
f(z) = \tr\big[(A + zI)^{-1} B\big] \exp(2 t z),
$$
over $z \in \C_+ = \{z \in \C : \Im(z) > 0\}$.  Then 
\smash{$f(\overline{z}) = \overline{f(z)}$}. 
\end{lemma}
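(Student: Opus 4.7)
The plan is to verify the conjugation identity entry by entry, relying on the fact that $A$ and $B$ are real matrices and $t$ is a real scalar, so complex conjugation commutes with all the algebraic operations involved.

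First, I would observe that since $A$ is real, the matrix $A + zI$ and its inverse have entries that are rational functions of $z$ with real coefficients. More directly, $\overline{A + zI} = A + \overline{z}I$ (entrywise conjugation), and taking the inverse commutes with entrywise conjugation of an invertible matrix, so
\[
\overline{(A + zI)^{-1}} = (A + \overline{z} I)^{-1}.
\]
This identity is well-defined on $\C_+$ because $A \succeq 0$ ensures that $A + zI$ is invertible for $z$ off the nonpositive real axis.

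Next, I would use that $B$ is real to conclude $\overline{(A+zI)^{-1} B} = \overline{(A+zI)^{-1}} \cdot B = (A + \overline{z} I)^{-1} B$, and then invoke the fact that the trace is a sum of diagonal entries, so it commutes with complex conjugation: $\overline{\tr[M]} = \tr[\overline{M}]$. Combined with $\overline{\exp(2tz)} = \exp(2t\overline{z})$, which holds because $2t \in \R$, this gives
\[
\overline{f(z)} = \overline{\tr[(A+zI)^{-1}B]} \cdot \overline{\exp(2tz)} = \tr[(A + \overline{z}I)^{-1} B] \exp(2t\overline{z}) = f(\overline{z}).
\]

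There is no real obstacle here; the lemma is essentially the Schwarz reflection property for a function built entirely out of real-matrix data and real parameters. The only mild care required is to make sure that all the invocations of ``conjugation commutes with $\cdot$'' are legitimate, which amounts to noting that $A$, $B$, $I$, and $2t$ are all real.
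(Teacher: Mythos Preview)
Your proof is correct. The paper's proof reaches the same conclusion but routes the key step through the adjoint rather than entrywise conjugation: it writes $C_z=(A+zI)^{-1}$, notes that $\overline{\tr(C_zB)}=\tr(C_z^{*}B)$ (using that $B$ is real and $C_z$ is complex symmetric since $A$ is real symmetric), and then establishes $C_z^{*}=C_{\overline{z}}$ via an inner product computation $\langle C_z x,y\rangle=\langle x,C_{\overline{z}}y\rangle$. Your approach---observing directly that entrywise conjugation commutes with matrix inversion and with multiplication by real matrices, hence $\overline{(A+zI)^{-1}}=(A+\overline{z}I)^{-1}$---is more elementary and avoids the detour through adjoints and inner products. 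Both arguments ultimately rest on the same fact that all data ($A$, $B$, $2t$) are real; yours simply exploits this more transparently.
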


\begin{proof}
First note that \smash{$\exp(2t \overline{z}) = \overline{\exp(2tz)}$} by
Euler's formula. As the conjugate of a product is the product of conjugates, it 
suffices to show that \smash{$\tr[(A + \overline{z}I)^{-1} B] = \overline{\tr[(A
    + zI)^{-1} B]}$}.  To this end, denote \smash{$C_z = (A + z I)^{-1}$}, and
denote by $C_z^*$ its adjoint (conjugate transpose).  Note that
\smash{$\overline{\tr(C_z B)} = \tr(C_z^* B)$}; we will show that \smash{$C_z^* 
  =  C_{\overline{z}}$}, which would then imply the desired result. 
Equivalent to \smash{$C_z^* =  C_{\overline{z}}$} is \smash{$\langle C_z x, y
  \rangle = \langle x, C_{\overline{z}} y \rangle$} for all complex vectors
$x,y$ (where $\langle \cdot, \cdot \rangle$ denotes the standard inner product).
Observe 
\begin{align*}
\langle C_z x, y \rangle 
&= \langle C_z x, \, (A + \overline{z}I) C_{\overline{z}} y \rangle \\
&= \langle (A + \overline{z}I)^* C_z x, \, C_{\overline{z}} y \rangle \\
&= \langle (A + zI) C_z x, \, C_{\overline{z}} y \rangle \\
&= \langle x, C_{\overline{z}} y \rangle,
\end{align*}
which completes the proof.
\end{proof}

\begin{lemma} 
\label{lem:conj_prop_int}
If $f : \C \to \C$ satisfies \smash{$f(\overline{z}) = \overline{f(z)}$}, then 
for any $a \in \R$,
$$
\int_{-\infty}^\infty f(a+ib) \, db = 2 \int_{-\infty}^0 \Re(f(a+ib)) \, db. 
$$
\end{lemma}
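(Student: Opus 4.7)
The plan is to split the real line into its negative and positive halves and exploit the conjugate-symmetry hypothesis to fold the positive half onto the negative half.

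First I would write
\[
\int_{-\infty}^{\infty} f(a+ib)\, db = \int_{-\infty}^{0} f(a+ib)\, db + \int_{0}^{\infty} f(a+ib)\, db.
\]
On the second summand I would apply the change of variables $b \mapsto -b$, which sends the interval $[0,\infty)$ to $(-\infty,0]$ and introduces a sign that cancels the orientation flip, yielding
\[
\int_{0}^{\infty} f(a+ib)\, db = \int_{-\infty}^{0} f(a-ib)\, db.
\]

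Next I would invoke the hypothesis $f(\overline{z}) = \overline{f(z)}$ with $z = a+ib$, so that $f(a-ib) = f(\overline{a+ib}) = \overline{f(a+ib)}$. Substituting back and combining the two integrals over $(-\infty,0]$ gives
\[
\int_{-\infty}^{\infty} f(a+ib)\, db = \int_{-\infty}^{0} \bigl(f(a+ib) + \overline{f(a+ib)}\bigr)\, db = 2\int_{-\infty}^{0} \Re\bigl(f(a+ib)\bigr)\, db,
\]
where the last step uses the elementary identity $z + \overline{z} = 2\Re(z)$ for any $z \in \C$. This is precisely the claimed equality.

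There is really no obstacle here: the lemma is a purely formal manipulation that depends only on the conjugate-symmetry identity and a sign change of the integration variable. The only mild point of care is ensuring the integrability needed to split the integral into two halves and to apply the change of variables, but in the context of Section \ref{sec:gf_risk_out_lim} the function $f$ is applied to integrands that decay sufficiently along the contour (as already justified there via the dominated convergence argument and the bound $|h_{n,p}(z)| \leq C_2/a$), so the splitting and substitution are valid.
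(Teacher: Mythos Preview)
Your proof is correct and follows essentially the same approach as the paper: both exploit the conjugate symmetry $f(a-ib)=\overline{f(a+ib)}$ via the substitution $b\mapsto -b$, with the paper phrasing it as evenness of $\Re(f(a+ib))$ and oddness of $\Im(f(a+ib))$ in $b$, while you use the equivalent identity $z+\overline{z}=2\Re(z)$ directly.
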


\begin{proof}
The property \smash{$f(\overline{z}) = \overline{f(z)}$} means that
$\Re(f(a-ib)) = \Re(f(a+ib))$, and $\Im(f(a-ib))=-\Im(f(a+ib))$.  Thus
\begin{align*}
\int_{-\infty}^\infty f(a+ib) \, db 
&= \int_{-\infty}^\infty \Re(f(a+ib)) \, db + 
i \int_{-\infty}^\infty \Im(f(a+ib)) \, db \\
&= 2 \int_{-\infty}^0 \Re(f(a+ib)) \, db + 0,
\end{align*}
which completes the proof.
\end{proof}

\section{Asymptotics for Ridge Regression} 

Under the conditions of Theorem \ref{thm:gf_risk_lim}, for each $\lambda \geq
0$, the Bayes risk \eqref{eq:ridge_risk_bayes} of ridge regression converges
almost surely to 
\begin{equation}
\label{eq:ridge_risk_lim} 
\sigma^2 \gamma \int \frac{\alpha_0 \lambda^2 + s}{(s + \lambda)^2} \,
dF_{H,\gamma}. 
\end{equation}
This is simply an application of weak convergence of \smash{$F_{\hSigma}$} to
\smash{$F_{H,\gamma}$} (as argued the proof of Theorem \ref{thm:gf_risk_lim}),
and can also be found in, \eg, Chapter 3 of \citet{tulino2004random}.  

The limiting Bayes prediction risk is a more difficult calculation.  It is shown in
\citet{dobriban2018high} that, under the conditions of Theorem
\ref{thm:gf_risk_out_lim},  for each $\lambda \geq
0$, the Bayes prediction risk \eqref{eq:ridge_risk_out_bayes} of ridge
regression converges almost surely to 
\begin{equation}
\label{eq:ridge_risk_out_lim} 
\sigma^2 \gamma \big[
\theta(\lambda) + \lambda (1 - \alpha_0 \lambda) \theta'(\lambda) \big], 
\end{equation}
where $\theta(\lambda)$ is as defined in \eqref{eq:lp_result}.  The calculation
\eqref{eq:ridge_risk_out_bayes} makes use of the Ledoit-Peche result
\eqref{eq:lp_result}, and Vitali's theorem (to assure the convergence of the 
derivative of the resolvent functional in \eqref{eq:lp_result}).   

It is interesting to compare the limiting Bayes prediction risks
\eqref{eq:ridge_risk_out_lim} and \eqref{eq:gf_risk_out_lim}.  For concreteness,
we can rewrite the latter as
\begin{equation}
\label{eq:gf_risk_out_lim2}
\sigma^2 \gamma \bigg[ \alpha_0 \cL^{-1}(\theta)(2t) + 2 \int_0^t
(\cL^{-1}(\theta) (u) - \cL^{-1}(\theta) (2u)) \, du\bigg].
\end{equation}
We see that \eqref{eq:ridge_risk_out_lim} features $\theta$ and its 
derivative, while \eqref{eq:gf_risk_out_lim2} features the 
inverse Laplace transform $\cL^{-1}(\theta)$ and its antiderivative.  

In fact, a similar structure can be observed by rewriting the limiting
risks \eqref{eq:ridge_risk_lim} and \eqref{eq:gf_risk_lim}.   By
simply expanding $s=(s+\lambda)-\lambda$ in the numerator in
\eqref{eq:ridge_risk_lim}, and using the definition of the Stieltjes
transform \eqref{eq:stieltjes}, the limiting Bayes risk of ridge becomes
\begin{equation}
\label{eq:ridge_risk_lim2} 
\sigma^2 \gamma \big[ m(F_{H,\gamma})(-\lambda) -
\lambda (1 - \alpha_0 \lambda) m(F_{H,\gamma})'(-\lambda) \big].
\end{equation}
By following arguments similar to the treatment of the variance term in the
proof of Theorem \ref{thm:gf_risk_out_lim}, in Section
\ref{sec:gf_risk_out_lim}, the limiting Bayes risk of gradient flow becomes
\begin{equation}
\label{eq:gf_risk_lim2} 
\sigma^2 \gamma \bigg[ \alpha_0 \cL(f_{H,\gamma})(2t) + 2 \int_0^t
(\cL(f_{H,\gamma}) (u) - \cL(f_{H,\gamma}) (2u)) \, du\bigg],
\end{equation}
where \smash{$f_{H,\gamma}=dF_{H,\gamma}/ds$} denotes the density of the
empirical spectral distribution \smash{$F_{H,\gamma}$}, and
\smash{$\cL(f_{H,\gamma})$} its Laplace transform.  We see
\eqref{eq:ridge_risk_lim2} features \smash{$m(F_{H,\lambda})$} and its 
derivative, and \eqref{eq:gf_risk_lim2} features \smash{$\cL(f_{H,\gamma})$} and 
its antiderivative.  But indeed \smash{$\cL(\cL(f_{H,\gamma}))(\lambda) =
  m(F_{H,\lambda})(-\lambda)$}, since we can (in general) view the Stieltjes 
transform as an iterated Laplace transform.  This creates a symmetric link 
between \eqref{eq:ridge_risk_lim2}, \eqref{eq:gf_risk_lim2} and 
\eqref{eq:ridge_risk_out_lim}, \eqref{eq:gf_risk_out_lim2}, where
\smash{$m(F_{H,\gamma})(-\lambda)$} in the former plays the role of
$\theta(\lambda)$ in the latter.

\section{Additional Numerical Results}

Here we show the complete set of numerical results comparing gradient flow and
ridge regression.  The setup is as described in Section \ref{sec:exps}.  Figure 
\ref{fig:risk_gaus_lo} shows the results for Gaussian features in the
low-dimensional case ($n=1000$, $p=500$).  The first row shows the estimation
risk when $\Sigma=I$, with the left plot using $\lambda=1/t$ calibration, and
the right plot using $\ell_2$ norm calibration (details on this calibration
explained below).  The second row shows the estimation risk when $\Sigma$ has
all off-diagonals equal to $\rho=0.5$.  The third row shows the prediction risk
for the same $\Sigma$ (n.b., the prediction risk when $\Sigma=I$ is the same as
the estimation risk, so it is redundant to show both).  The conclusions
throughout are similar to that made in Section \ref{sec:exps}. Calibration by
$\ell_2$ norm gives extremely good agreement: the maximum ratio of gradient flow
to ridge risk (over the entire path, in any of the three rows) is 1.0367.
Calibration by $\lambda=1/t$ is still quite good, but markedly worse: the
maximum ratio of gradient flow to ridge risk (again over the entire path, in any
of the three rows) is 1.4158. 

Figures \ref{fig:risk_gaus_hi} shows analogous results for Gaussian features in
the high-dimensional case ($n=500$, $p=1000$).  Figures
\ref{fig:risk_t_lo}--\ref{fig:risk_bern_hi} show the results for Student
$t$ and Bernoulli features.  The results are similar throughout: the maximum
ratio of gradient flow to ridge risk, under $\ell_2$ norm calibration (over the
entire path, in any setting), is 1.0371; the maximum ratio, under $\lambda=1/t$ 
calibration (over the entire path, in any setting), is 1.4154.
(One noticeable, but unremarkable difference between the settings is that the 
finite-sample risks seem to be converging slower to their asymptotic analogs in
the case of $t$ features.  This is likely due to the fact that the tails here
are very fat---they are as fat as possible for the $t$ family, subject to the
second moment being finite.)

It helps to give further details for a few of the calculations.  For $\ell_2$
norm calibration, note that we can compute the expected squared $\ell_2$ norm of
the ridge and gradient flow estimators under the data model \eqref{eq:model_y}
and prior \eqref{eq:model_b0}: 
\begin{align*}
\E \|\hbeta^\ridge(\lambda)\|_2^2 &=
\frac{1}{n} \Big( \tr \big[\alpha (\hSigma + \lambda I)^{-2} \hSigma^2\big] 
+ \tr \big[ (\hSigma + \lambda I)^{-2} \hSigma\big] \Big) \\ 
&= \frac{1}{n} \sum_{i=1}^p \frac{\alpha s_i^2 + s_i}{(s_i+\lambda)^2}, \\
\E \|\hbeta^\gf(t)\|_2^2 &=
\frac{1}{n} \Big( \tr \big[ \alpha (I-\exp(-t\hSigma))^2\big] +
\tr \big[(I-\exp(-t\hSigma))^2 \hSigma^+\big] \Big) \\
&= \frac{1}{n} \sum_{i=1}^p \bigg( \alpha  (1-\exp(-ts_i))^2 + 
\frac{(1-\exp(-ts_i))^2}{s_i} \bigg). 
\end{align*} 
We thus calibrate according to the square root of the quantities above (this is
what is plotted on the x-axis in the left columns of all the figures).  The
above expressions have the following limits under the asymptotic model studied
in Theorem \ref{thm:gf_risk_lim}:
\begin{align*}
\E \|\hbeta^\ridge(\lambda)\|_2^2 &\to \gamma \int \frac{\alpha_0 s^2 + s}
{(s+\lambda)^2} \, dF_{H,\gamma}(s), \\   
\E \|\hbeta^\gf(t)\|_2^2 &\to \gamma \int \bigg( \alpha_0 (1-\exp(-ts))^2 +  
\frac{(1-\exp(-ts))^2}{s} \bigg) \, dF_{H,\gamma}(s). 
\end{align*} 

Furthermore, we note that when $\Sigma=I$, the empirical spectral distribution
from Theorem \ref{thm:mp}  abbreviated as $F_\gamma$, sometimes called the {\em
  Marchenko-Pastur (MP) law} and has a closed form.  For $\gamma \leq 1$, its
density is  
$$
\frac{d F_\gamma(s)}{ds} = \frac{1}{2 \pi \gamma s} \sqrt{(b-s)(s-a)},
$$
and is supported on $[a,b]$, where \smash{$a = (1-\sqrt\gamma)^2$} and
\smash{$b = (1+\sqrt\gamma)^2$}.  For $\gamma > 1$, the MP law
$F_\gamma$ has an additional point mass at zero of probability $1-1/\gamma$.   
This allows us to evaluate the integrals in \eqref{eq:gf_risk_lim},
\eqref{eq:ridge_risk_lim} via numerical integration, to compute limiting risks 
for gradient flow and ridge regression.  (It also allows us to compute the
integrals in the second to last display, to calibrate according to limiting
$\ell_2$ norms.)

\begin{figure*}[p]
\centering
\includegraphics[width=0.475\textwidth]{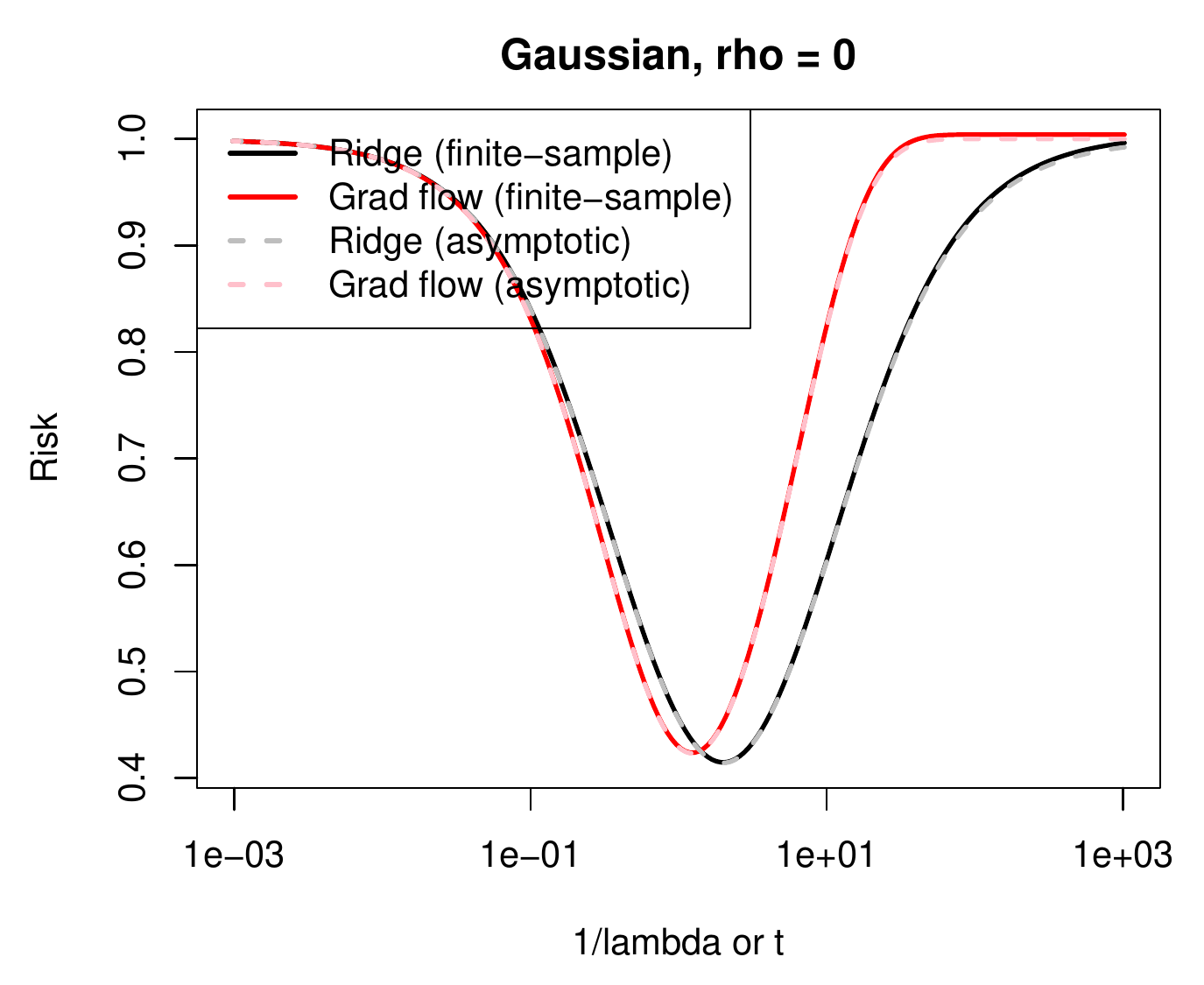} 
\includegraphics[width=0.475\textwidth]{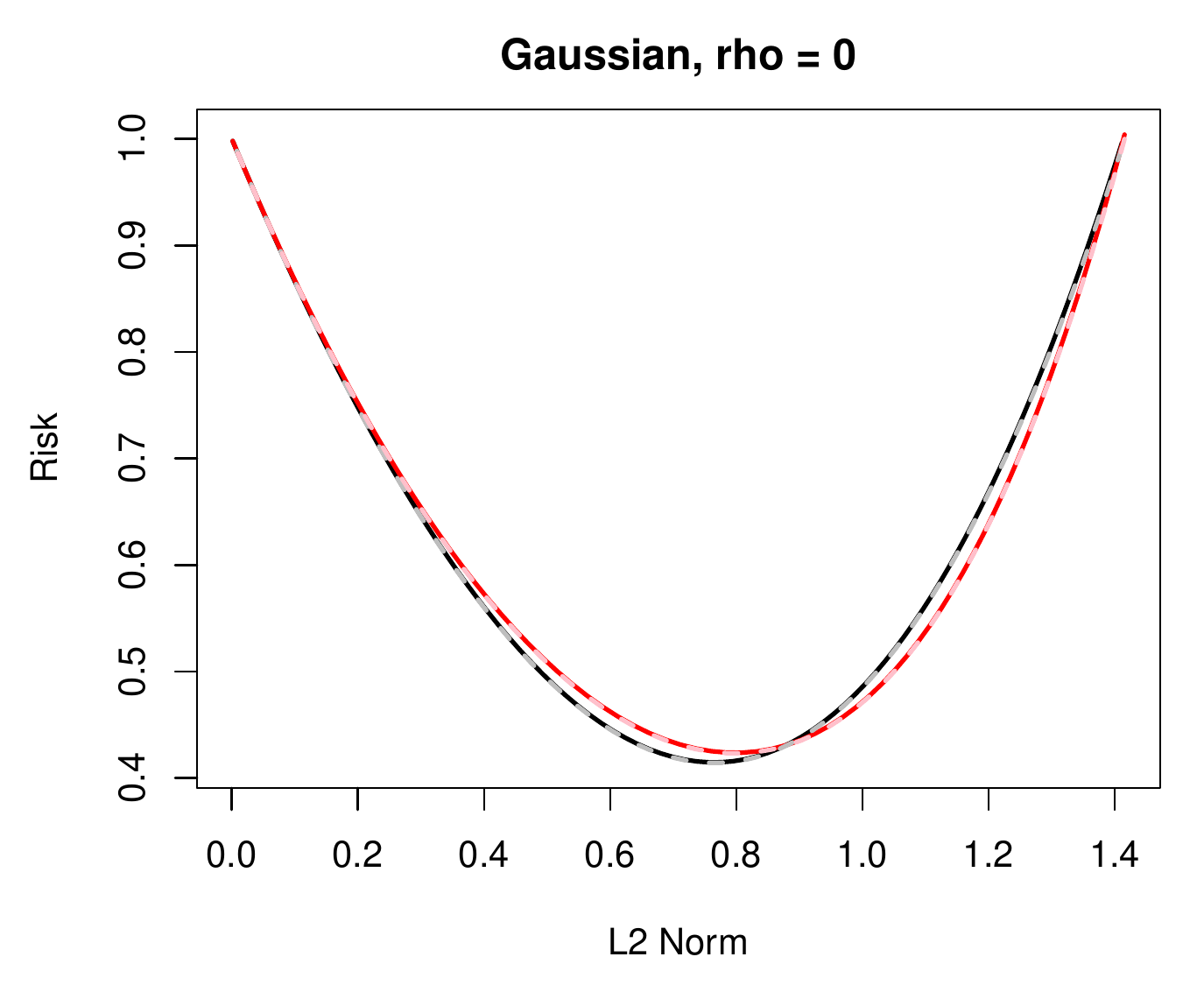}
\includegraphics[width=0.475\textwidth]{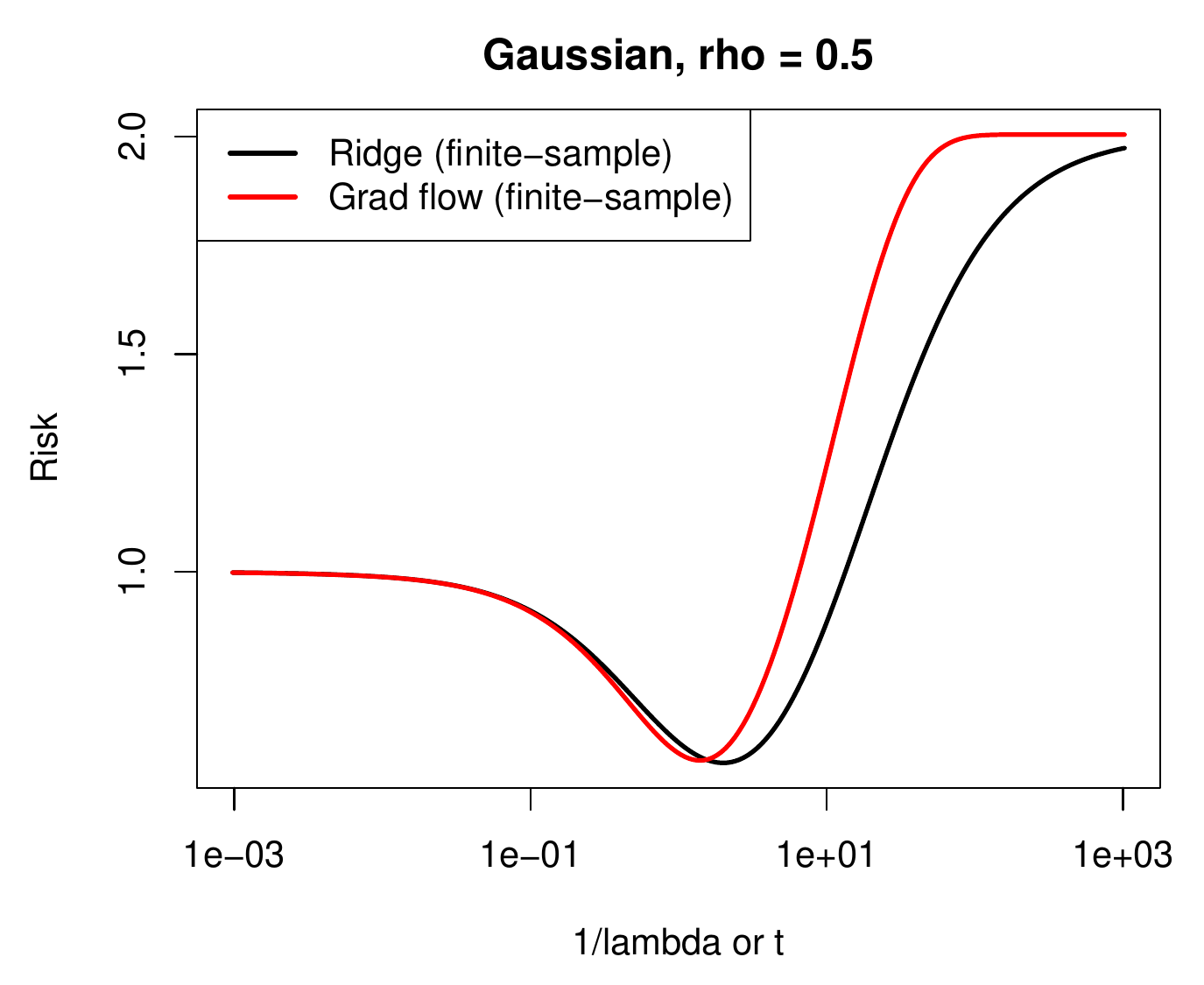} 
\includegraphics[width=0.475\textwidth]{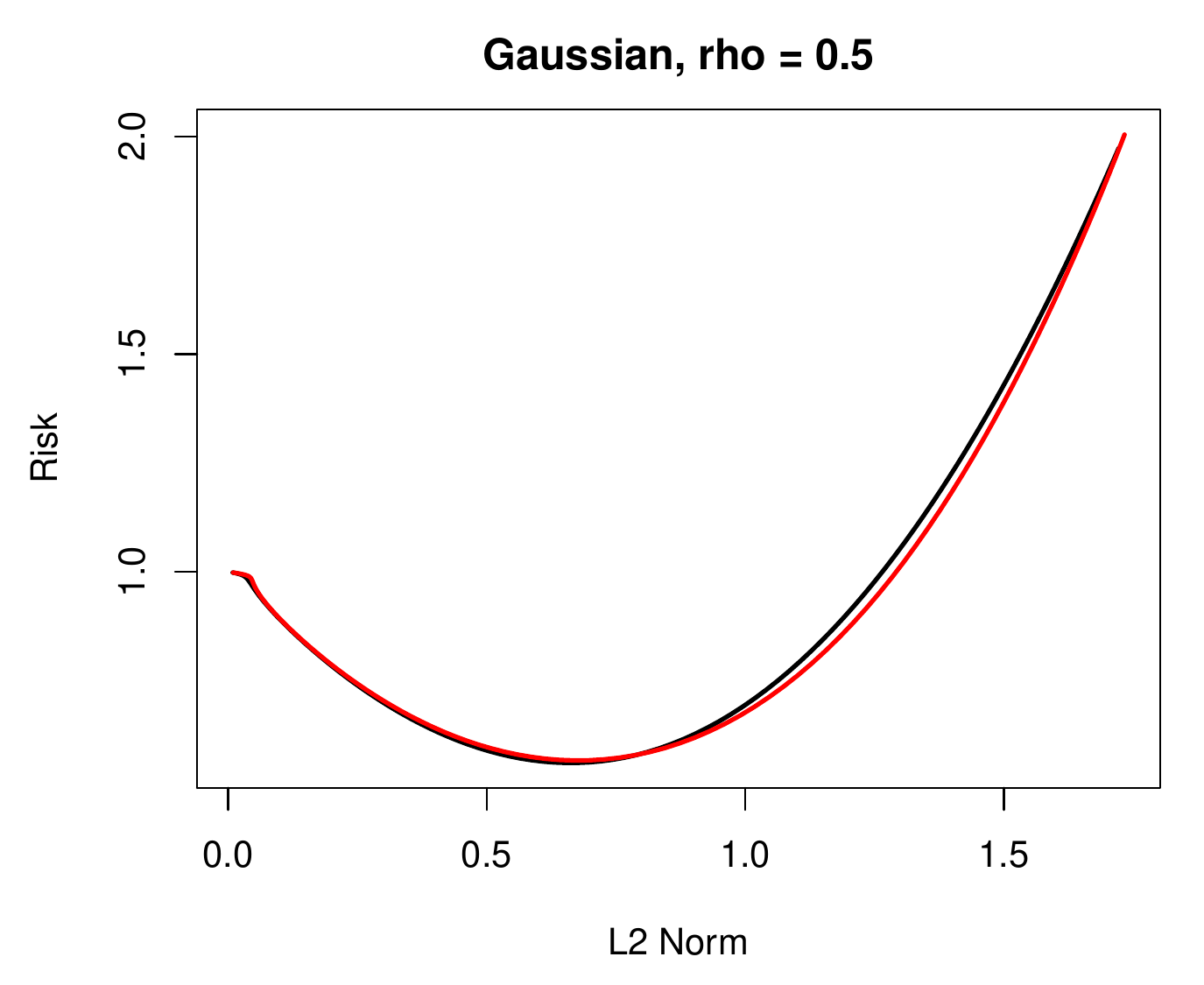}
\includegraphics[width=0.475\textwidth]{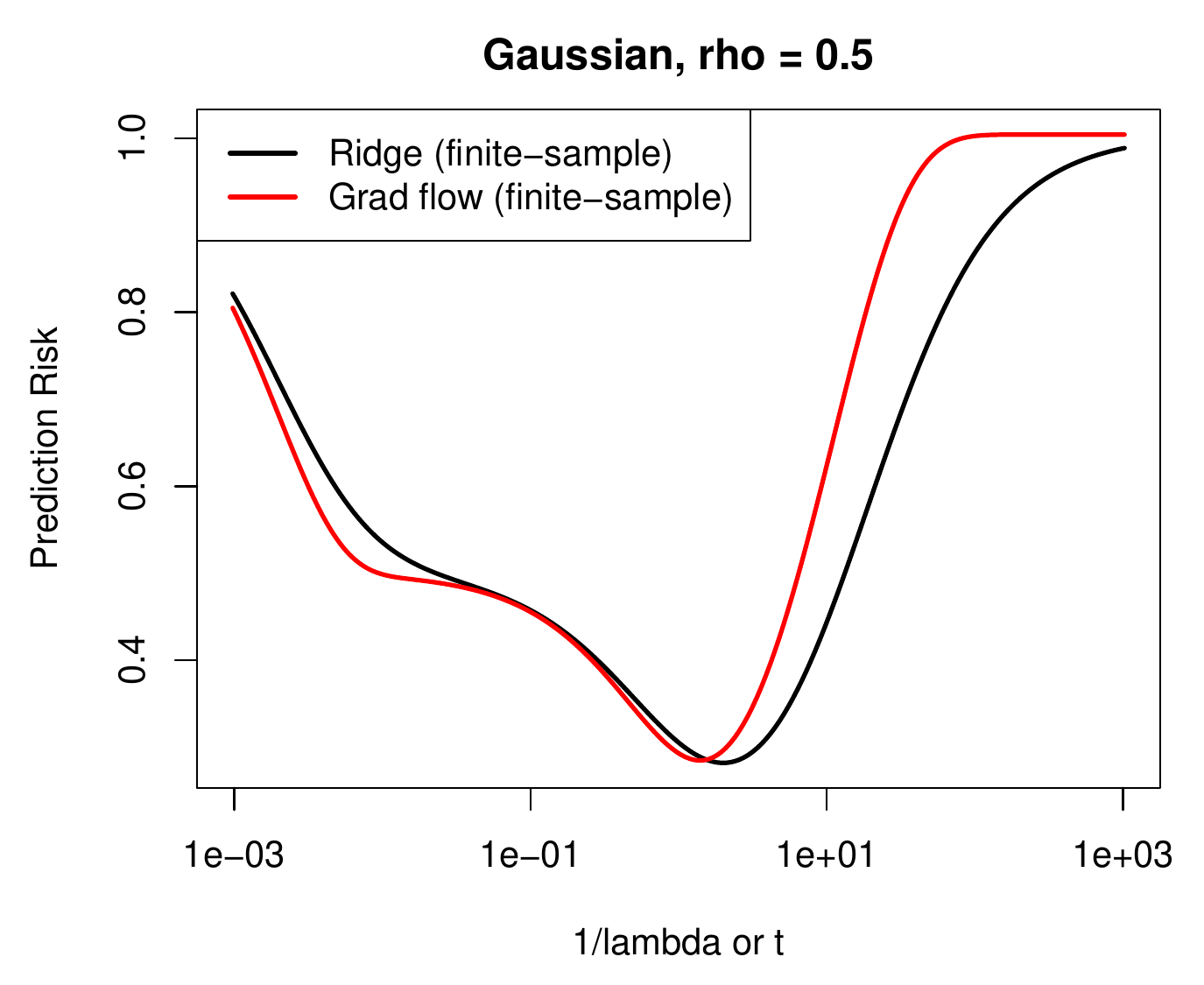} 
\includegraphics[width=0.475\textwidth]{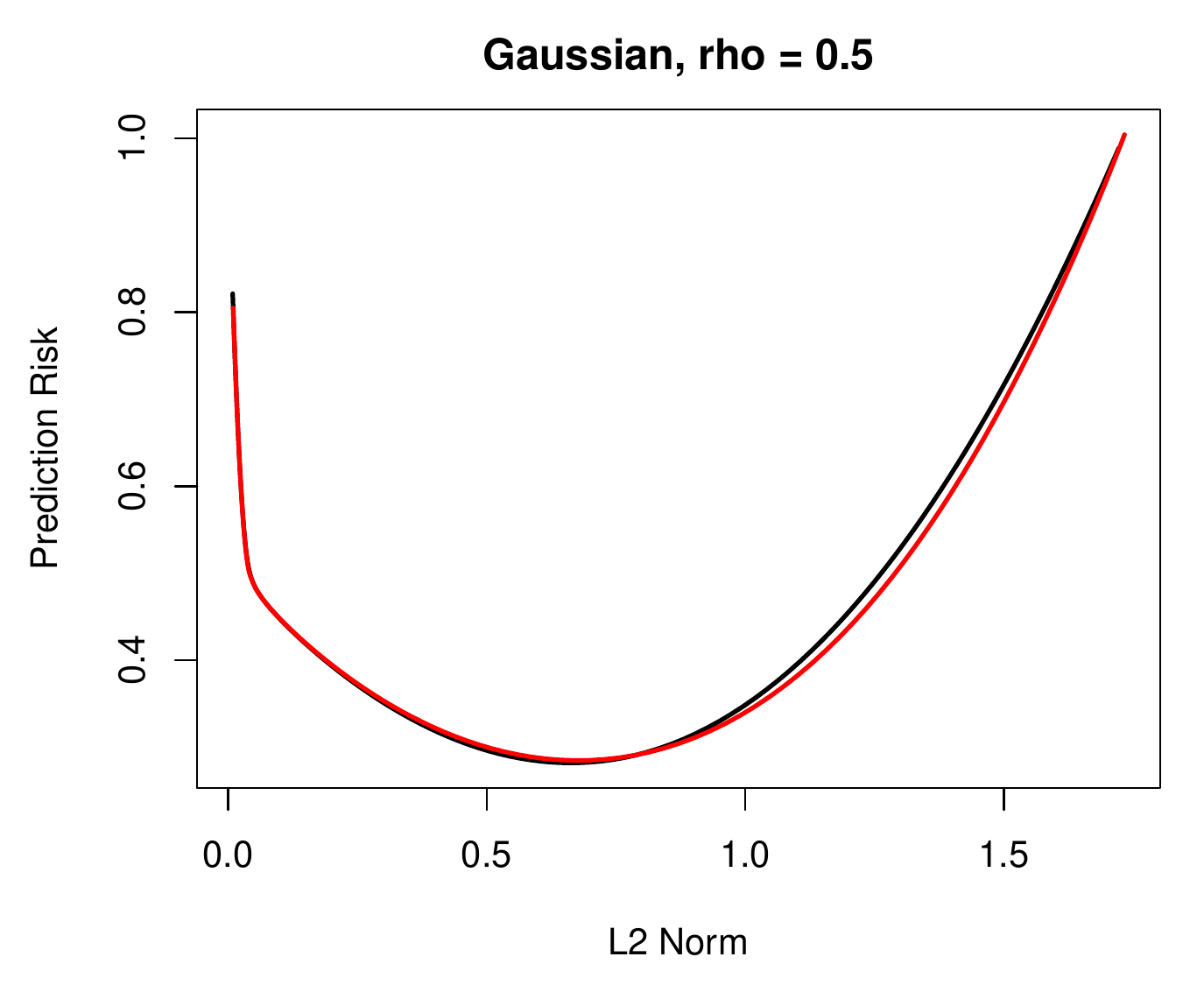}
\caption{\it \small Gaussian features, with $n=1000$ and $p=500$.}
\label{fig:risk_gaus_lo} 
\end{figure*}

\begin{figure*}[p]
\centering
\includegraphics[width=0.475\textwidth]{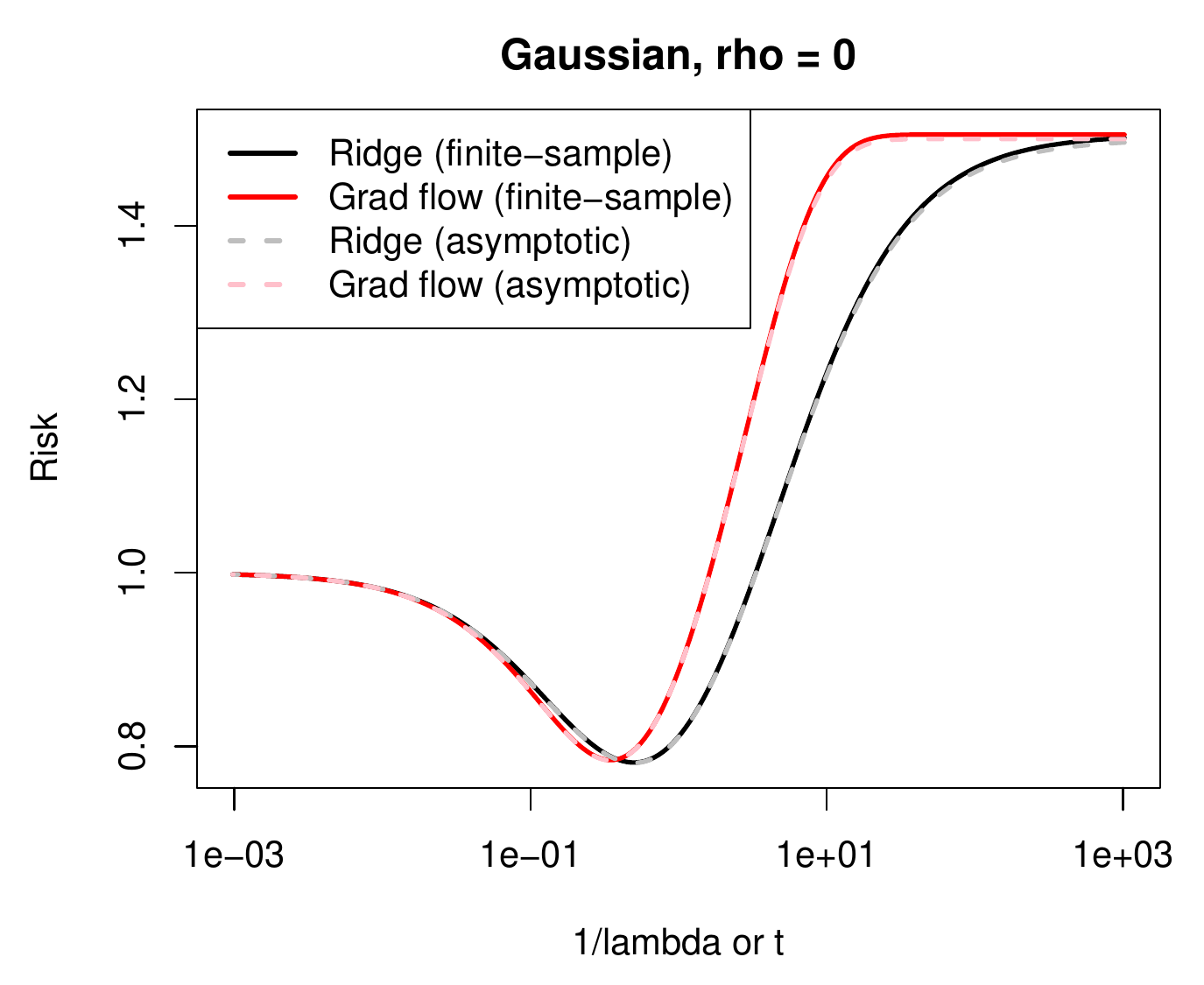} 
\includegraphics[width=0.475\textwidth]{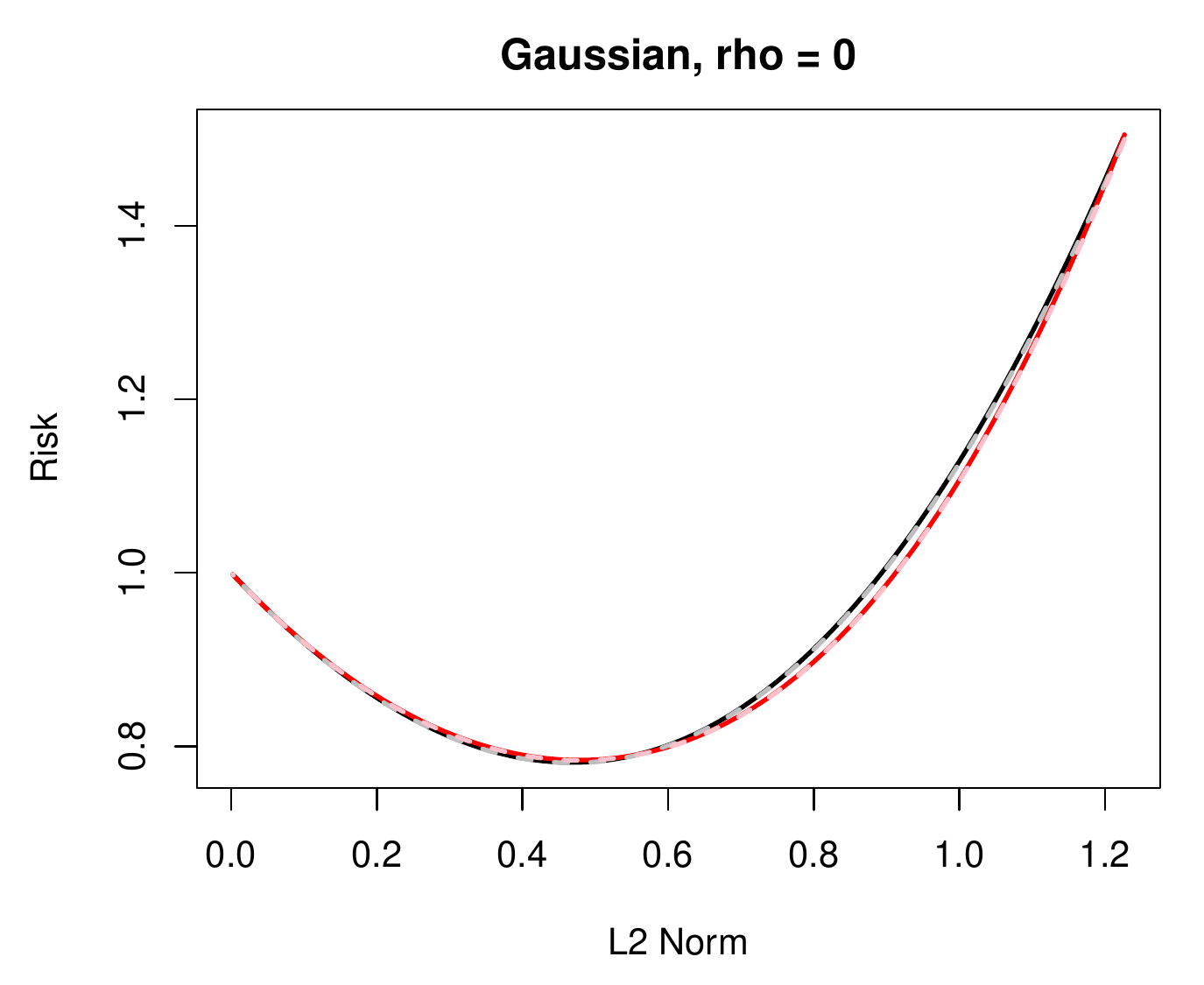}
\includegraphics[width=0.475\textwidth]{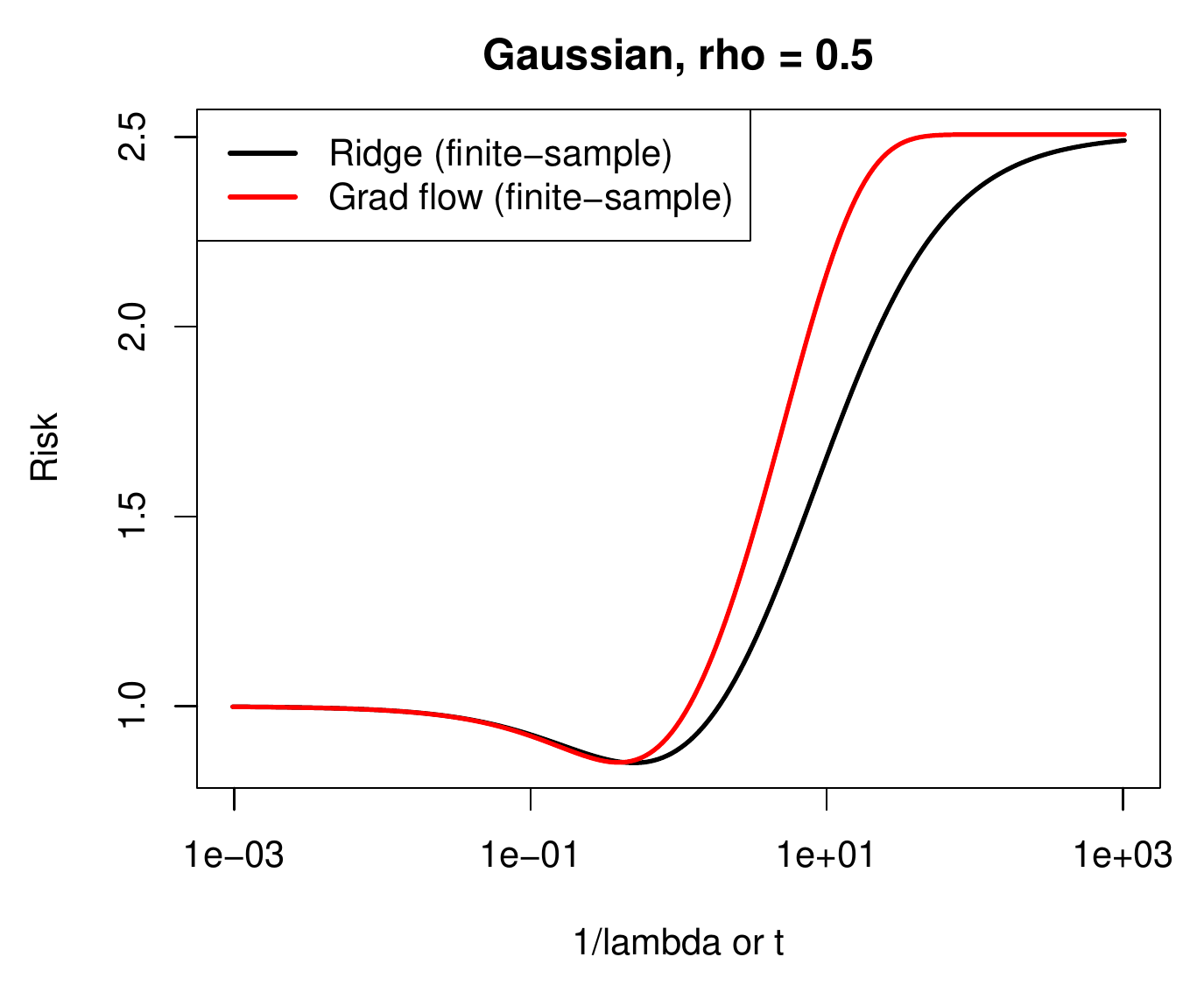} 
\includegraphics[width=0.475\textwidth]{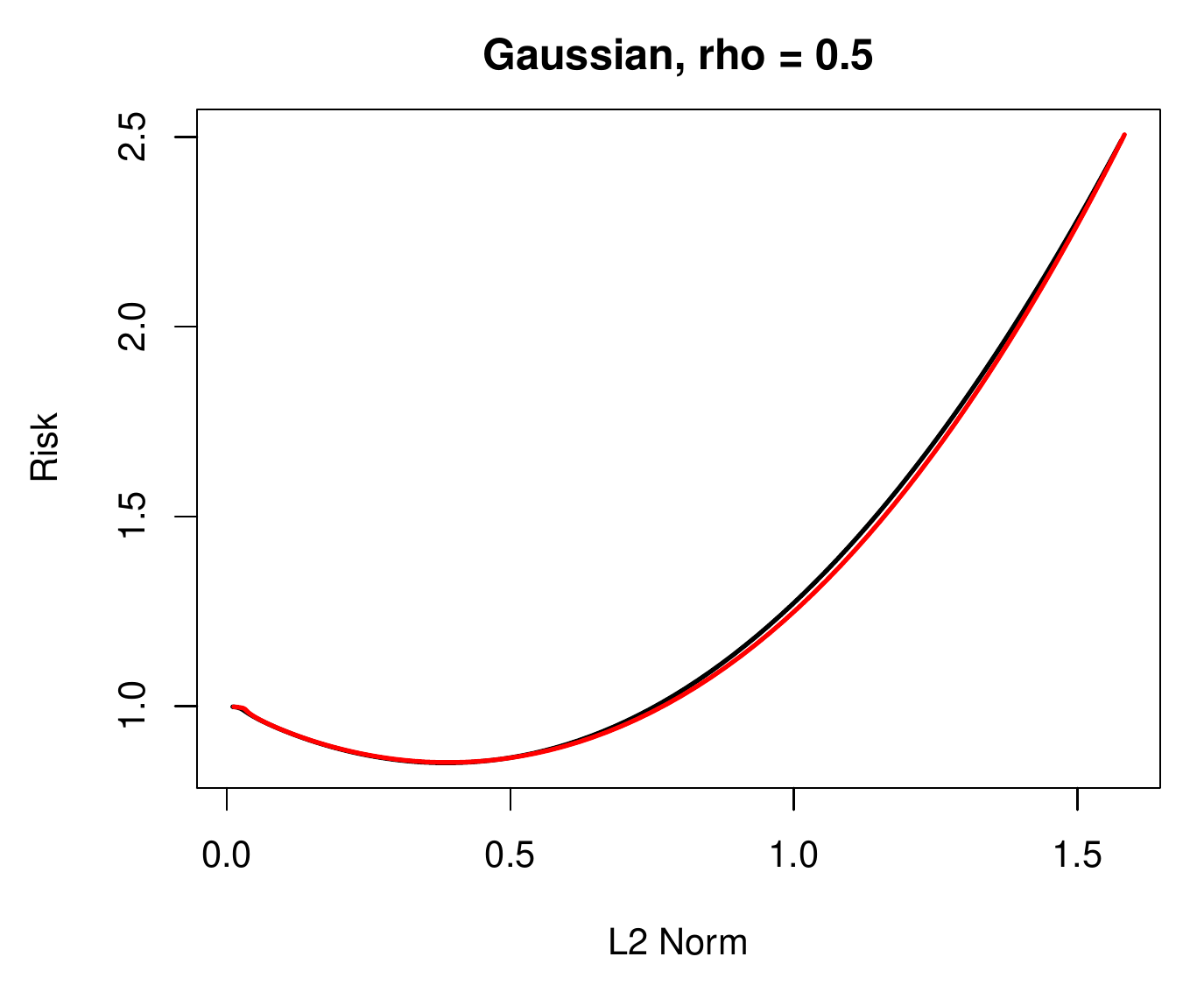}
\includegraphics[width=0.475\textwidth]{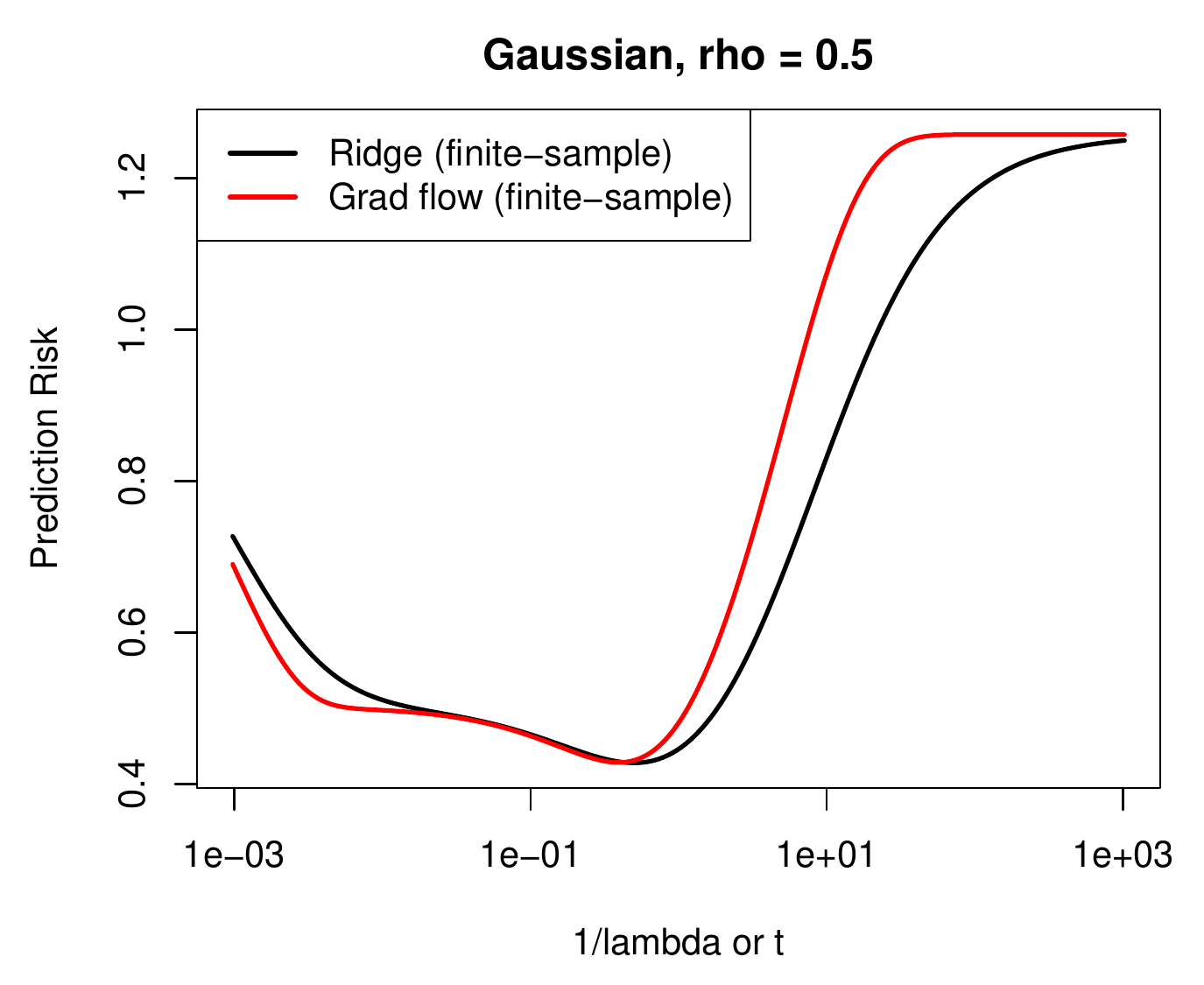} 
\includegraphics[width=0.475\textwidth]{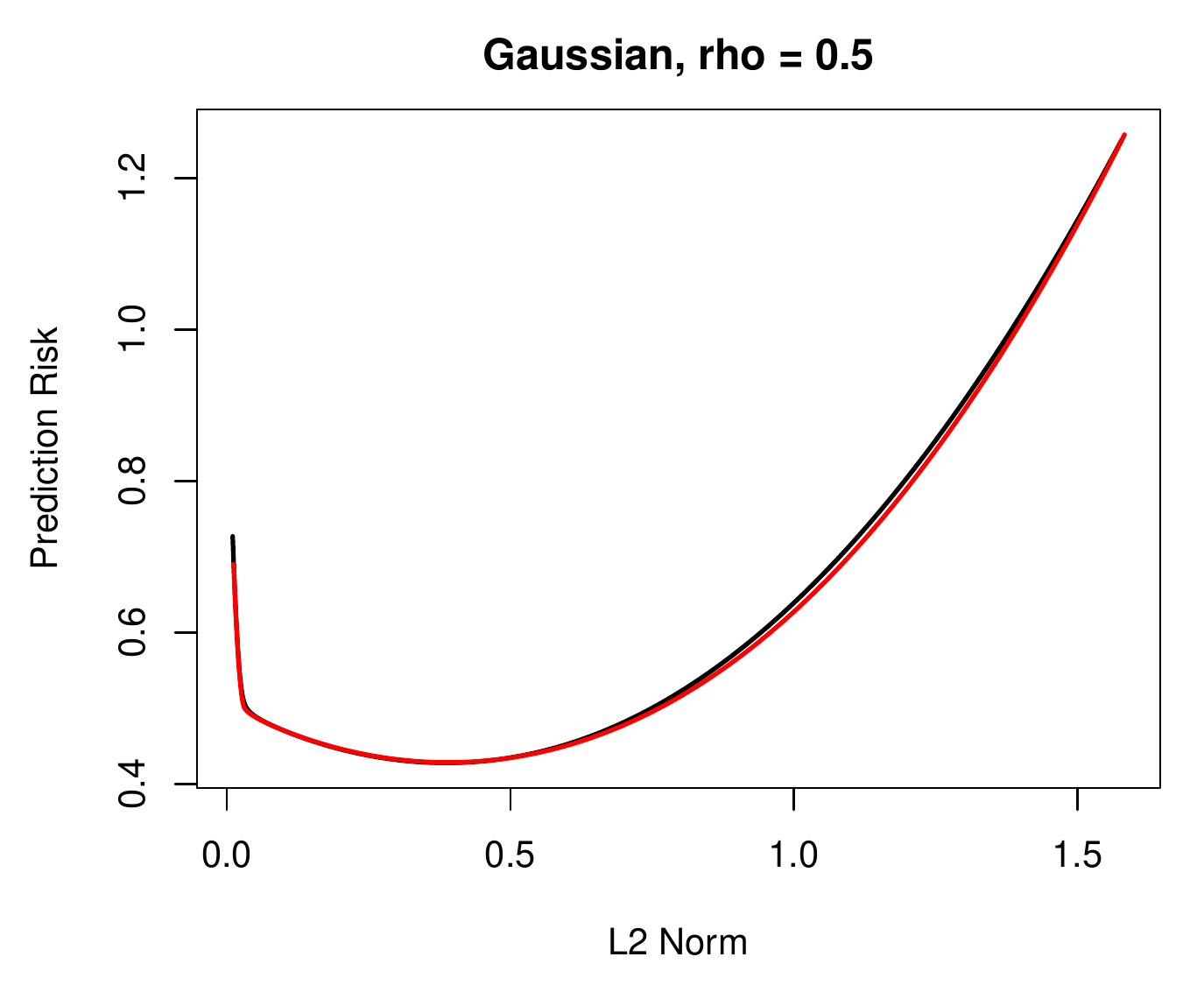}
\caption{\it \small Gaussian features, with $n=500$ and $p=1000$.}
\label{fig:risk_gaus_hi} 
\end{figure*}

\begin{figure*}[p]
\centering
\includegraphics[width=0.475\textwidth]{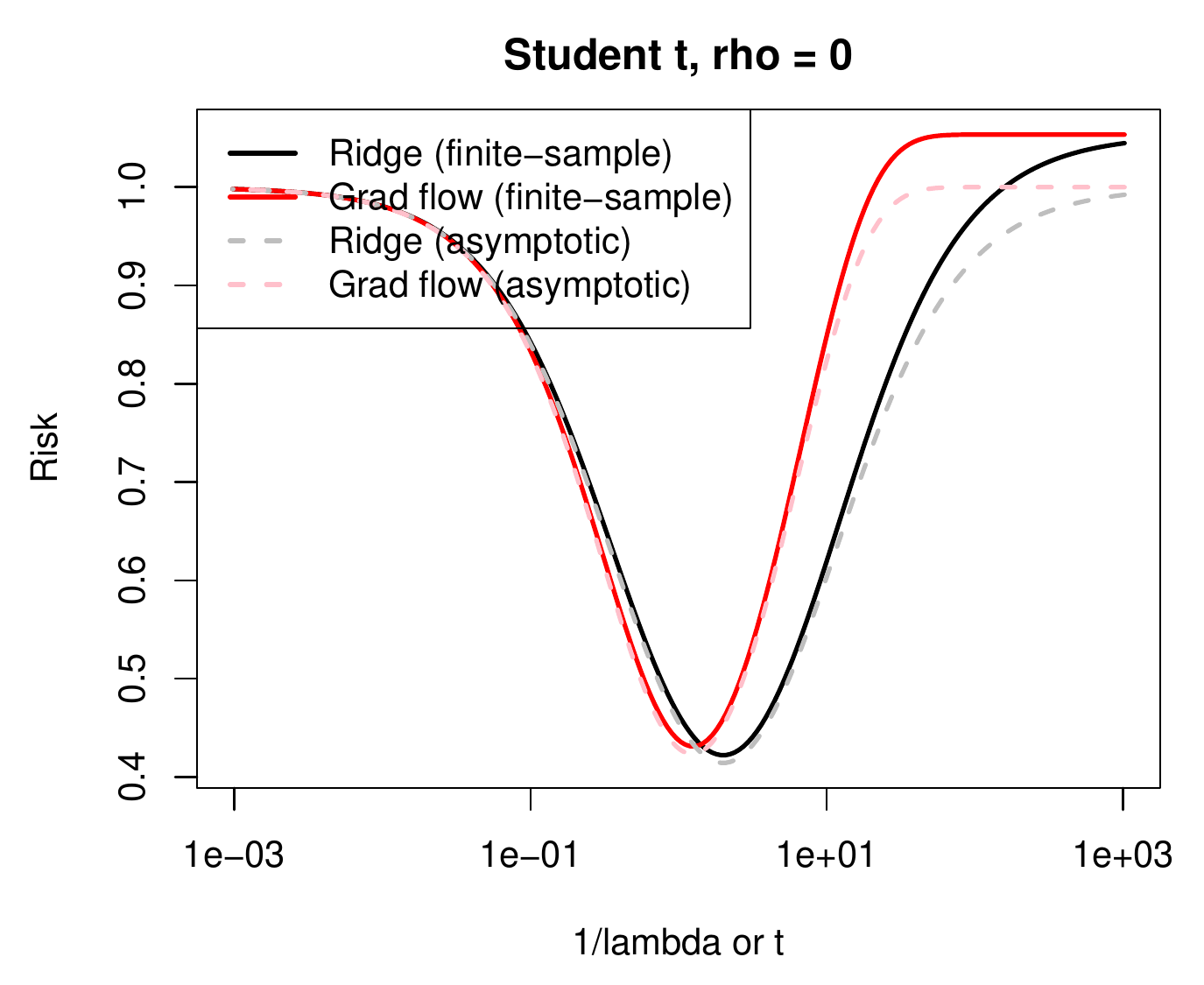} 
\includegraphics[width=0.475\textwidth]{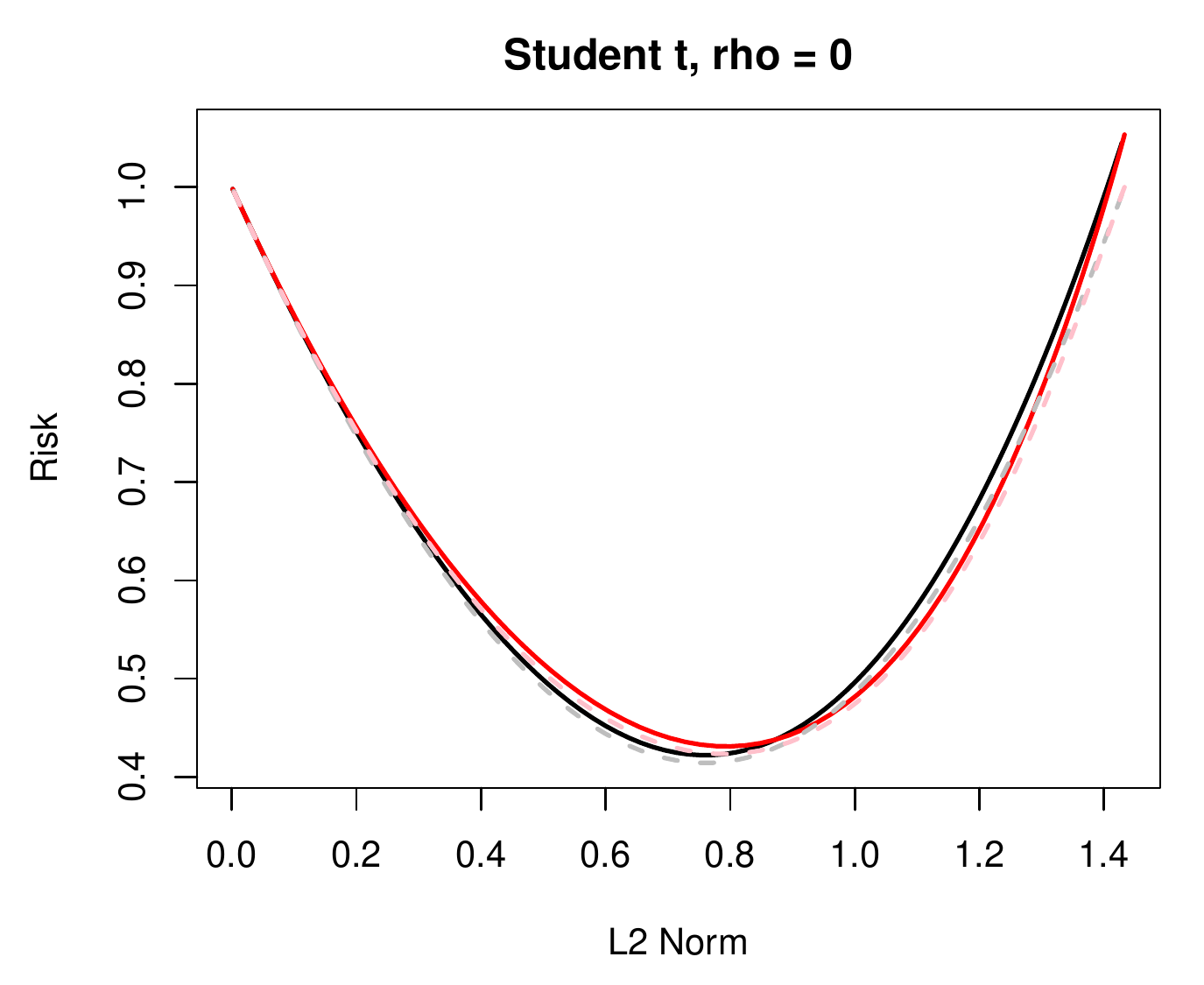}
\includegraphics[width=0.475\textwidth]{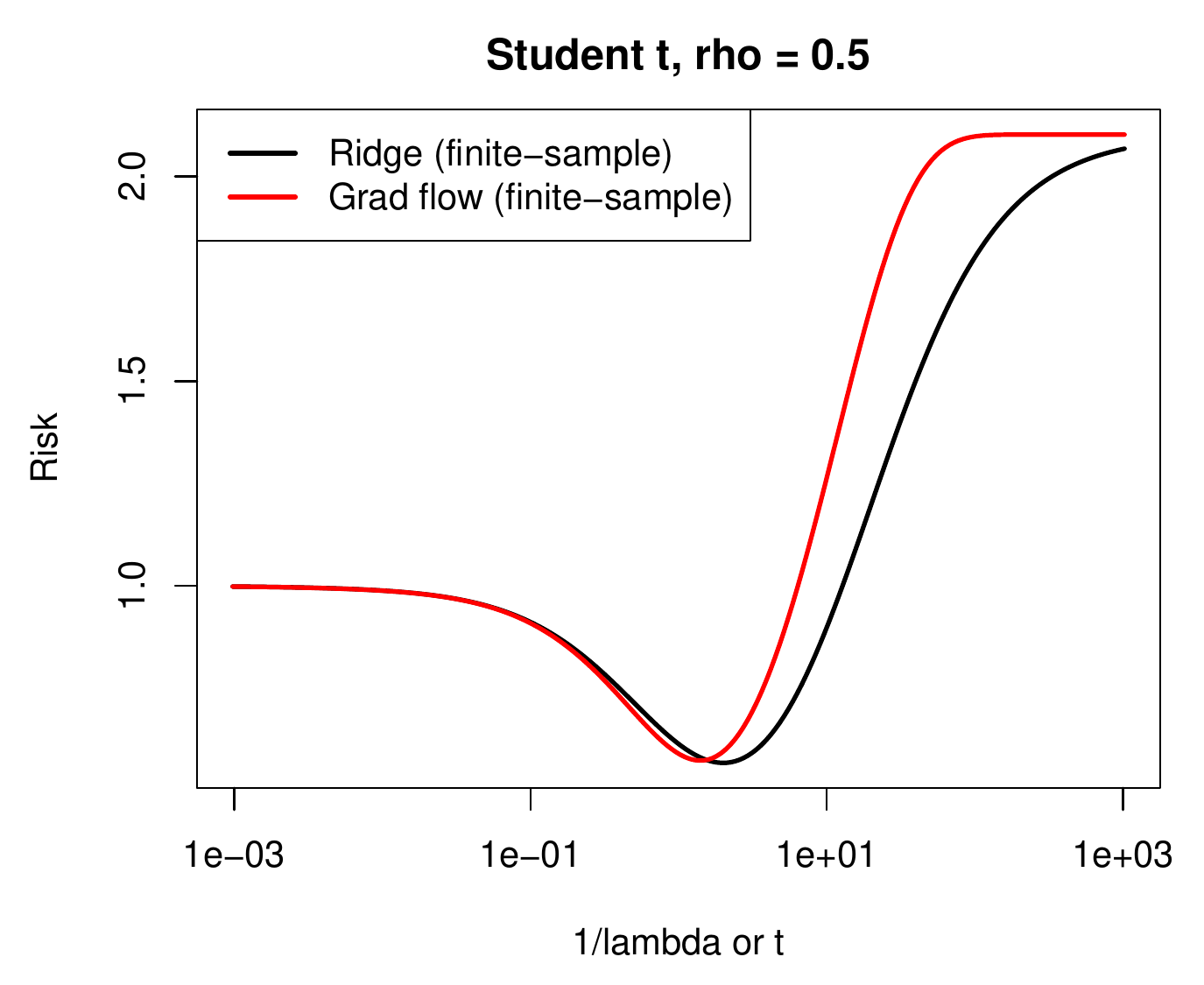} 
\includegraphics[width=0.475\textwidth]{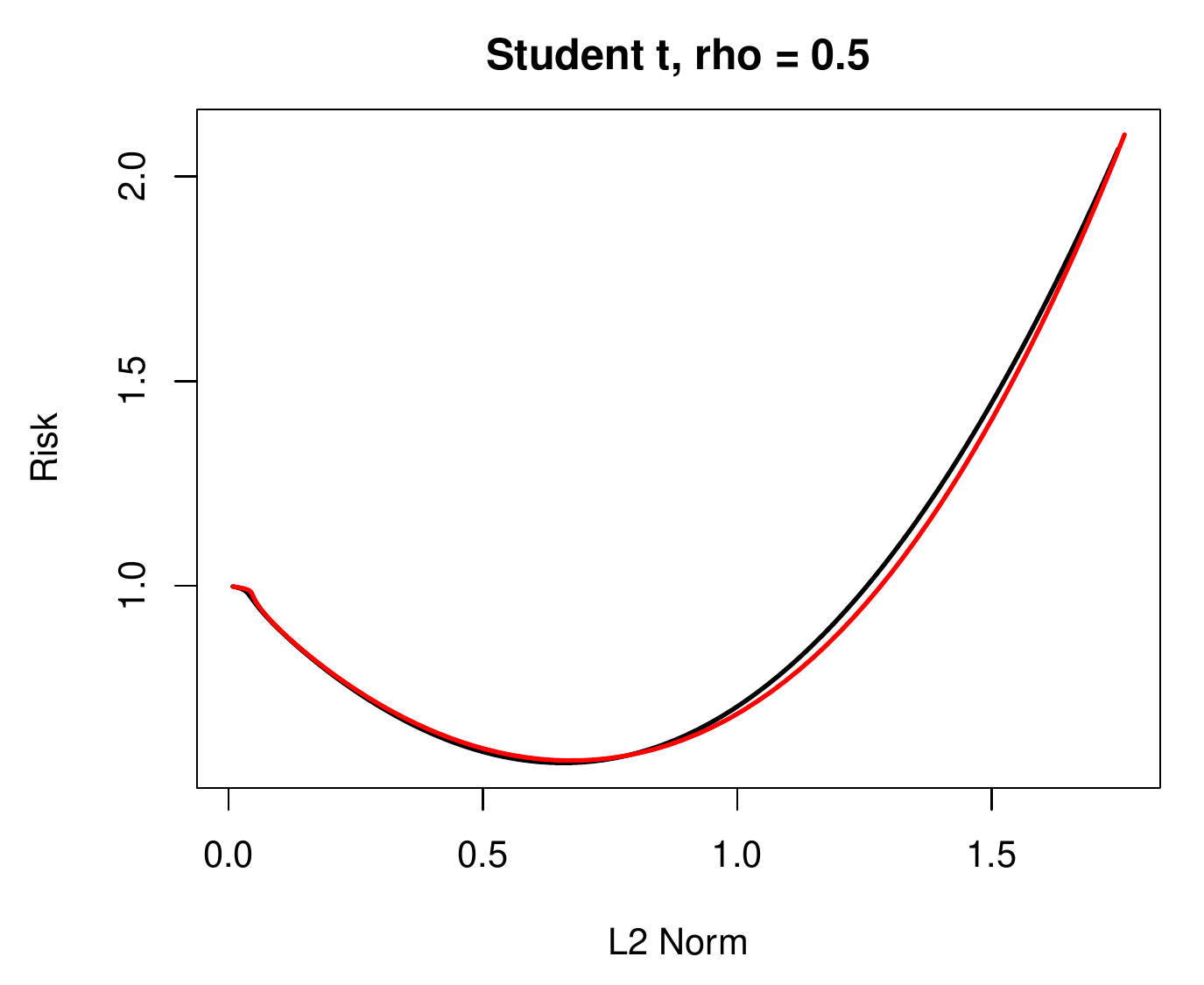}
\includegraphics[width=0.475\textwidth]{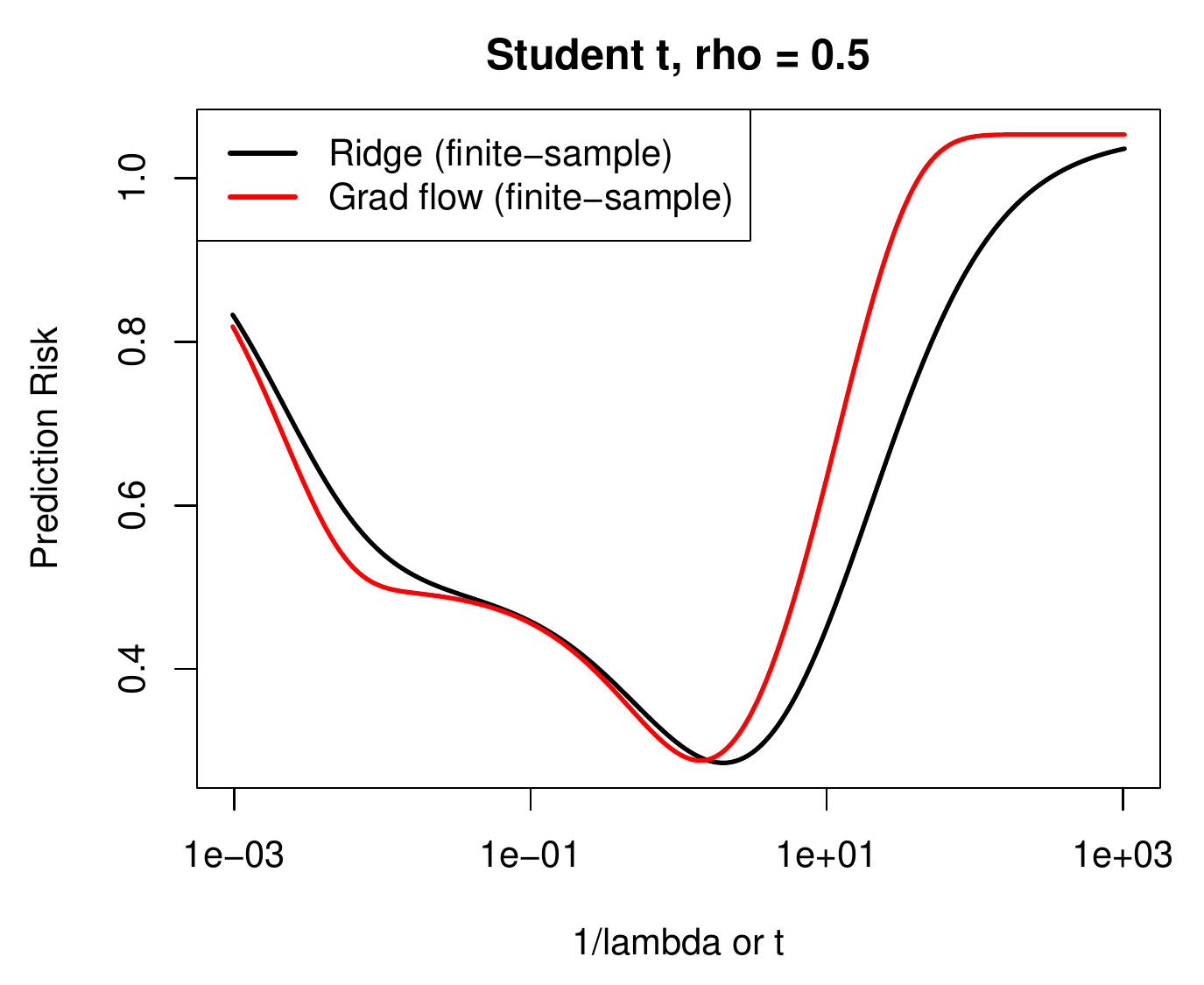} 
\includegraphics[width=0.475\textwidth]{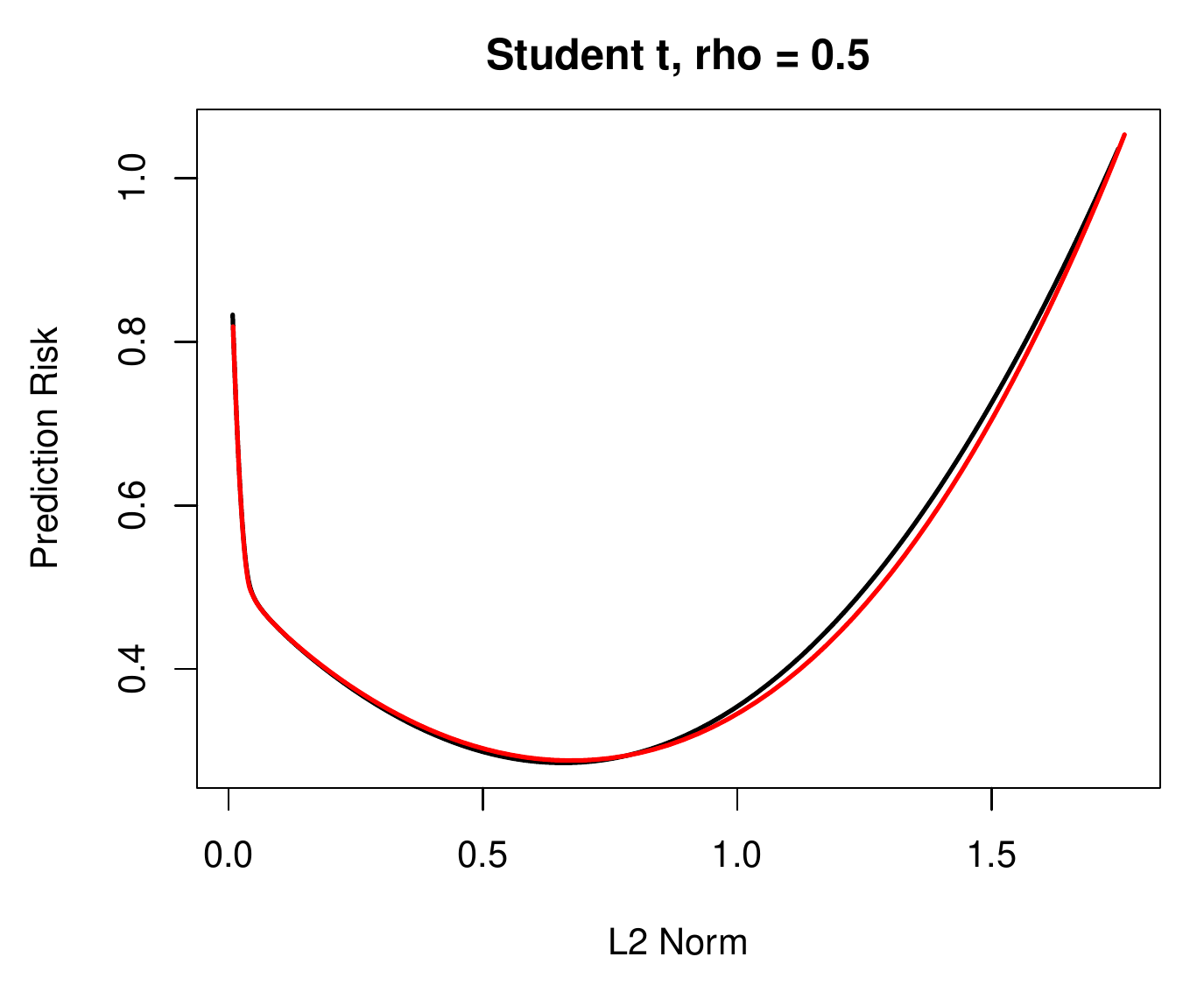}
\caption{\it \small Student t features, with $n=1000$ and $p=500$.}
\label{fig:risk_t_lo} 
\end{figure*}

\begin{figure*}[p]
\centering
\includegraphics[width=0.475\textwidth]{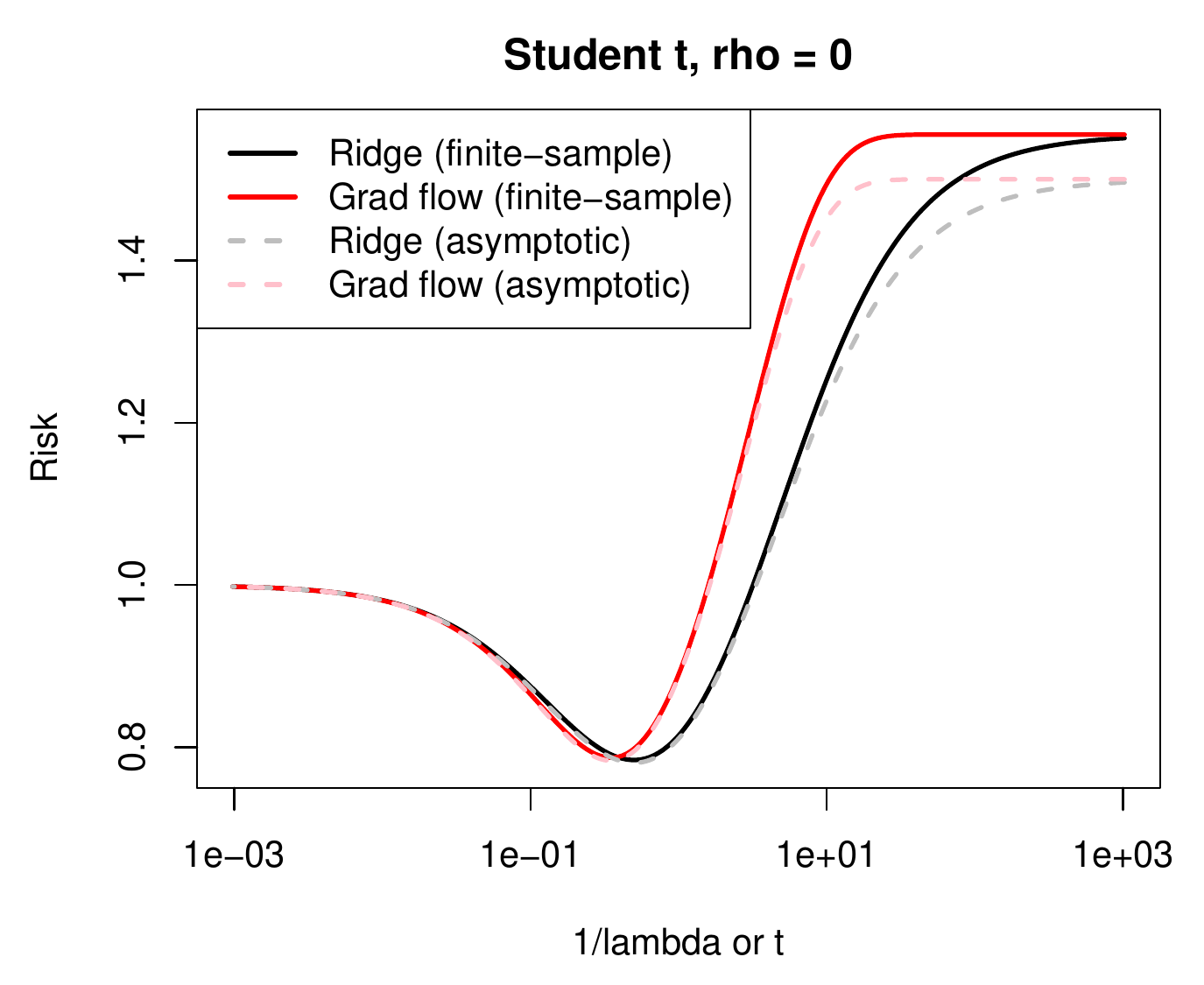} 
\includegraphics[width=0.475\textwidth]{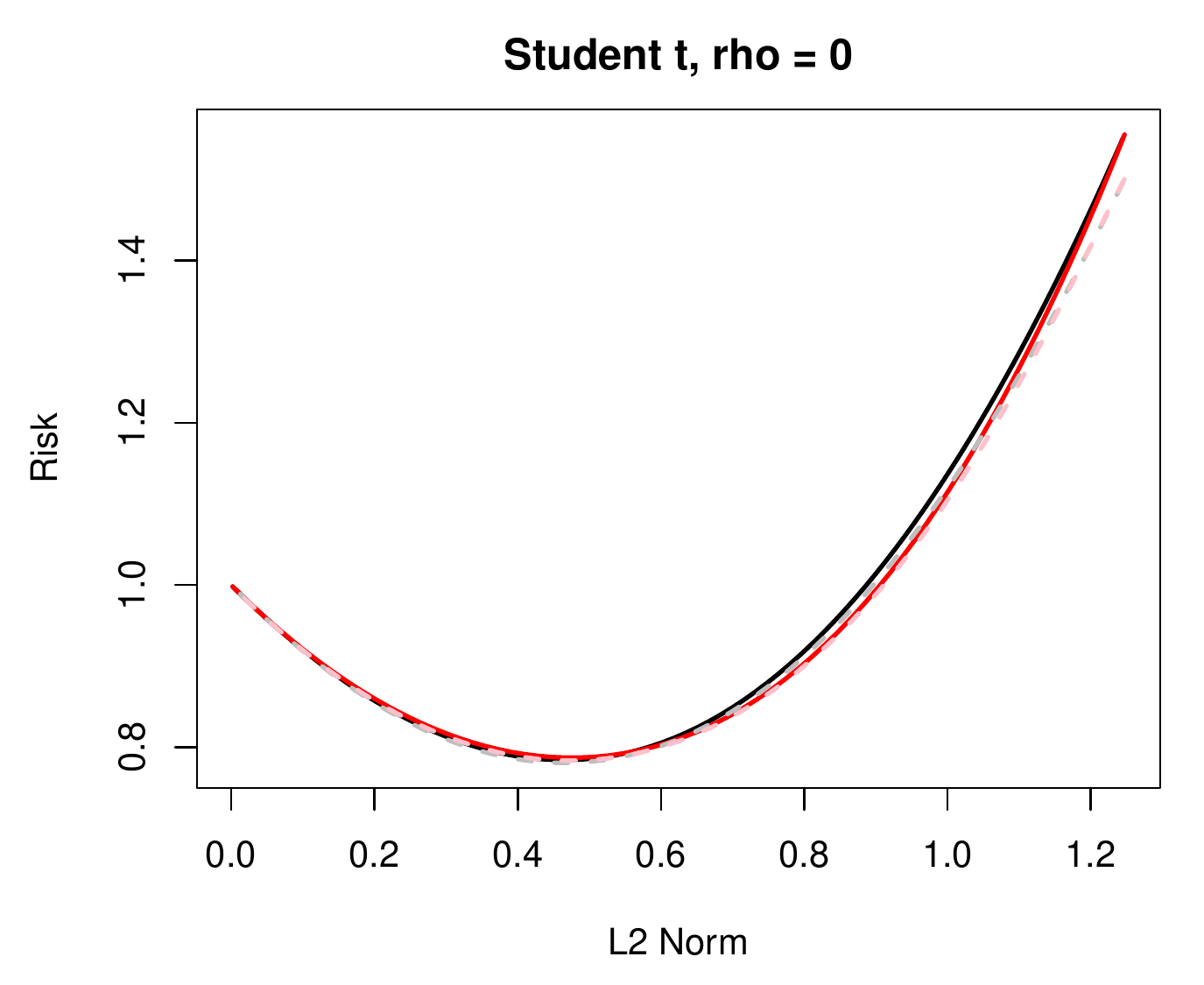}
\includegraphics[width=0.475\textwidth]{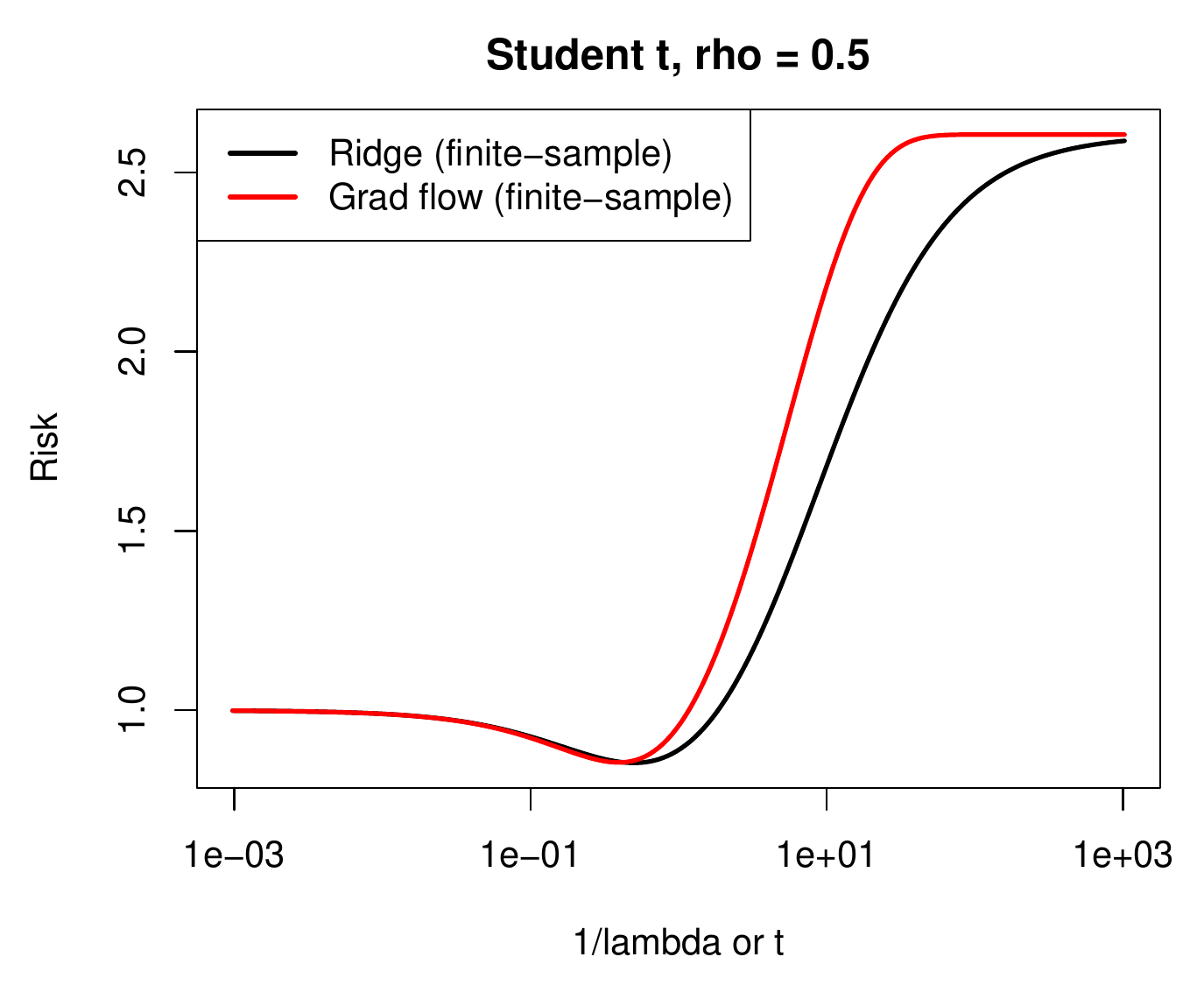} 
\includegraphics[width=0.475\textwidth]{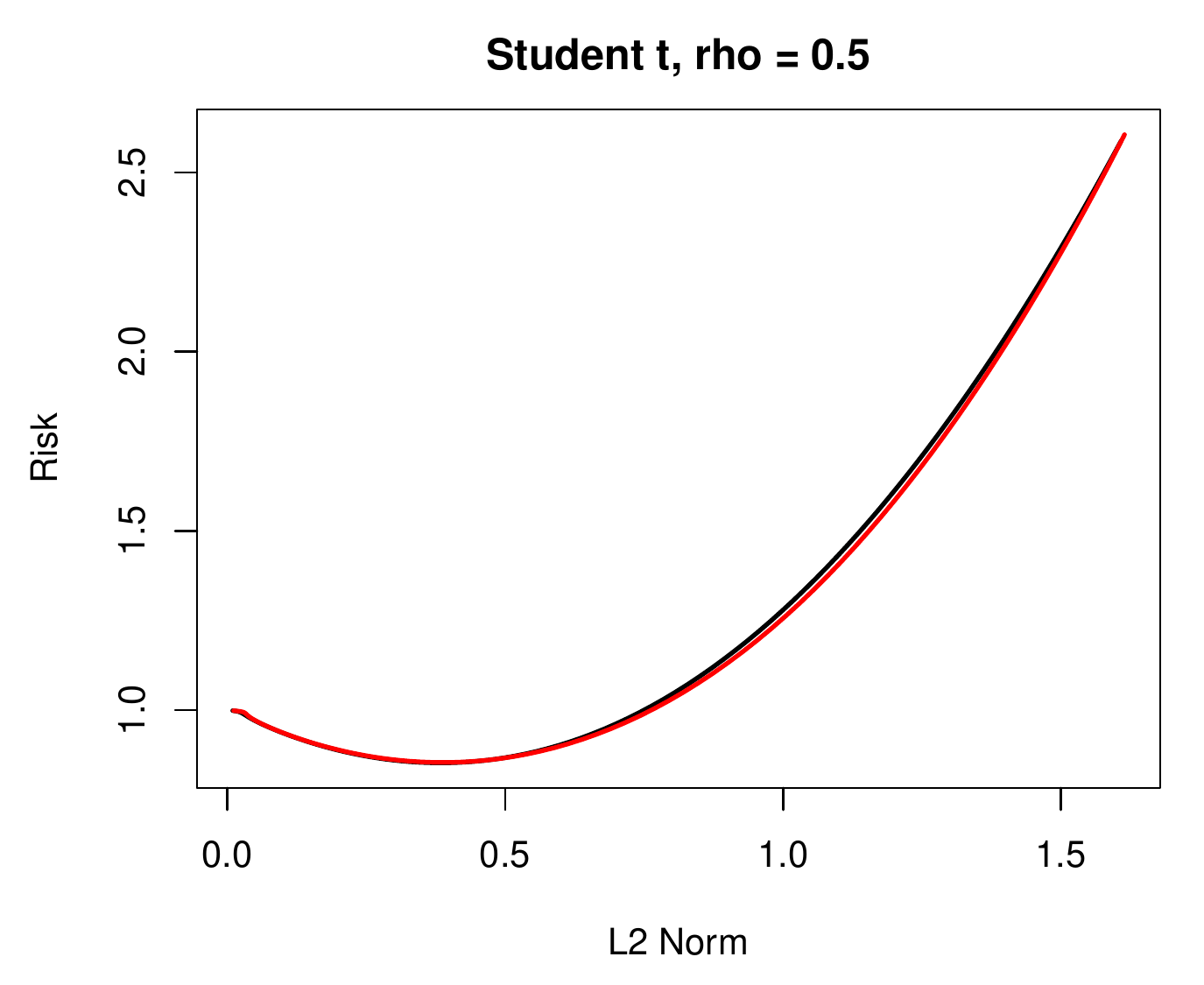}
\includegraphics[width=0.475\textwidth]{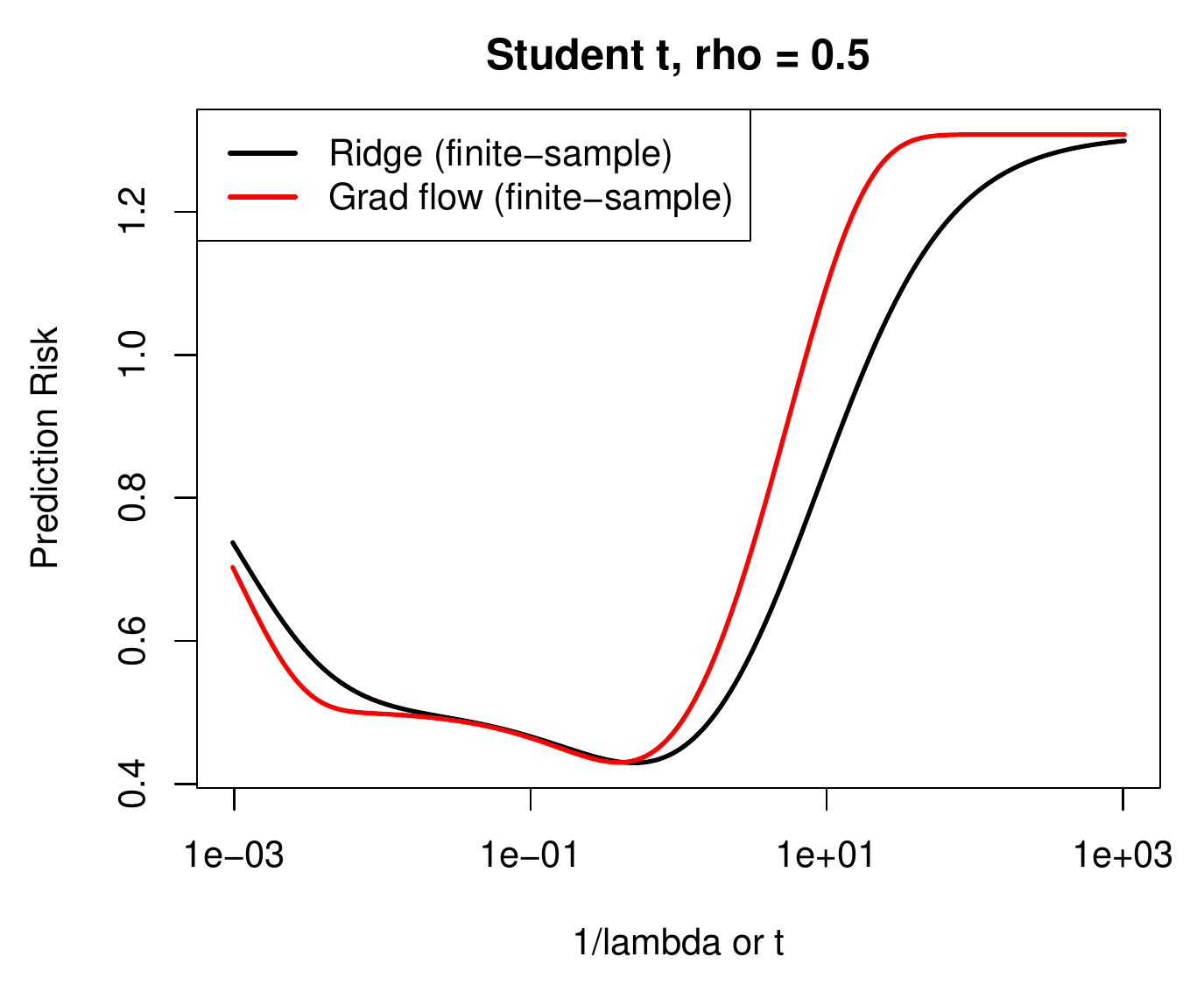} 
\includegraphics[width=0.475\textwidth]{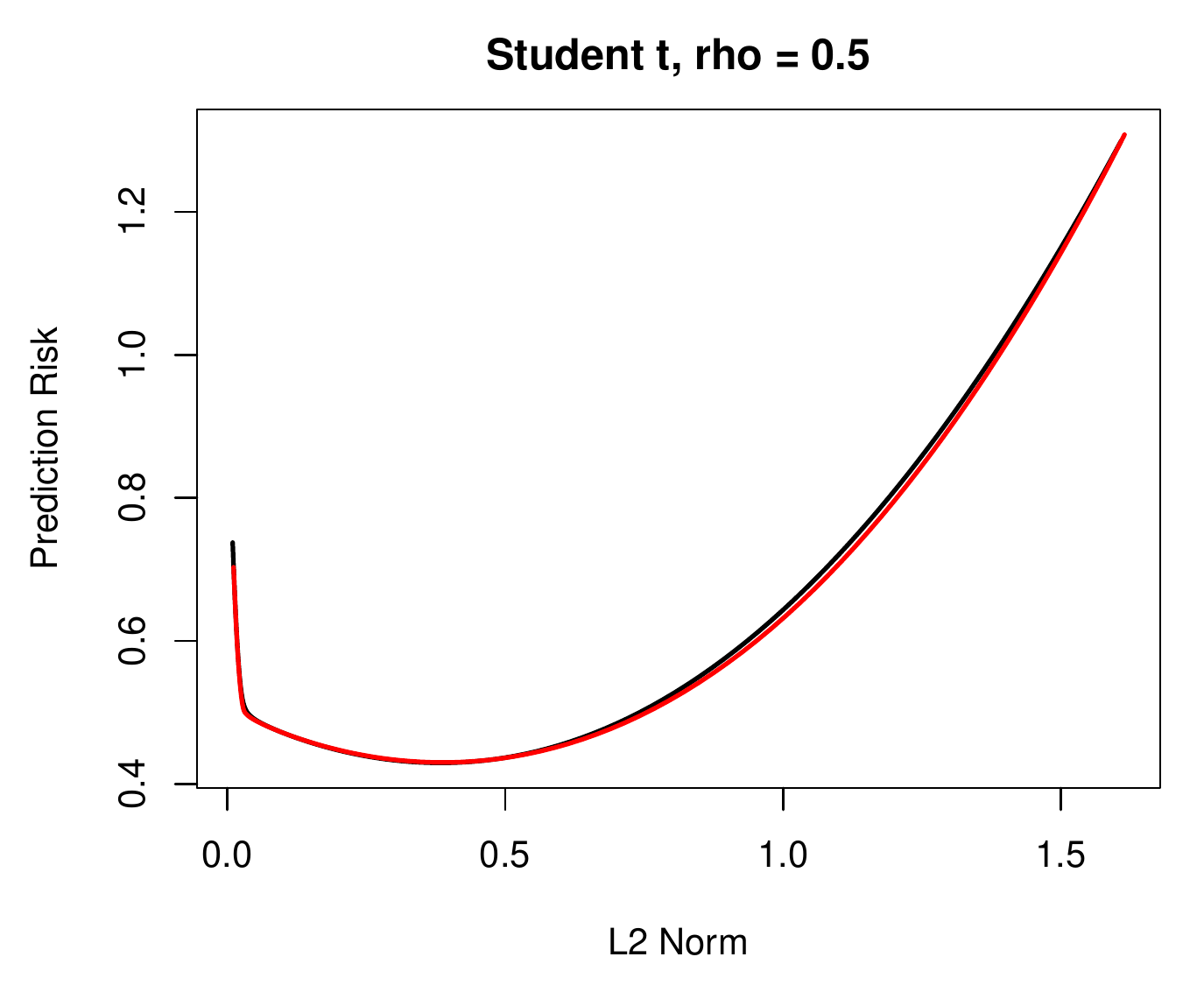}
\caption{\it \small Student t features, with $n=500$ and $p=1000$.}
\label{fig:risk_t_hi} 
\end{figure*}

\begin{figure*}[p]
\centering
\includegraphics[width=0.475\textwidth]{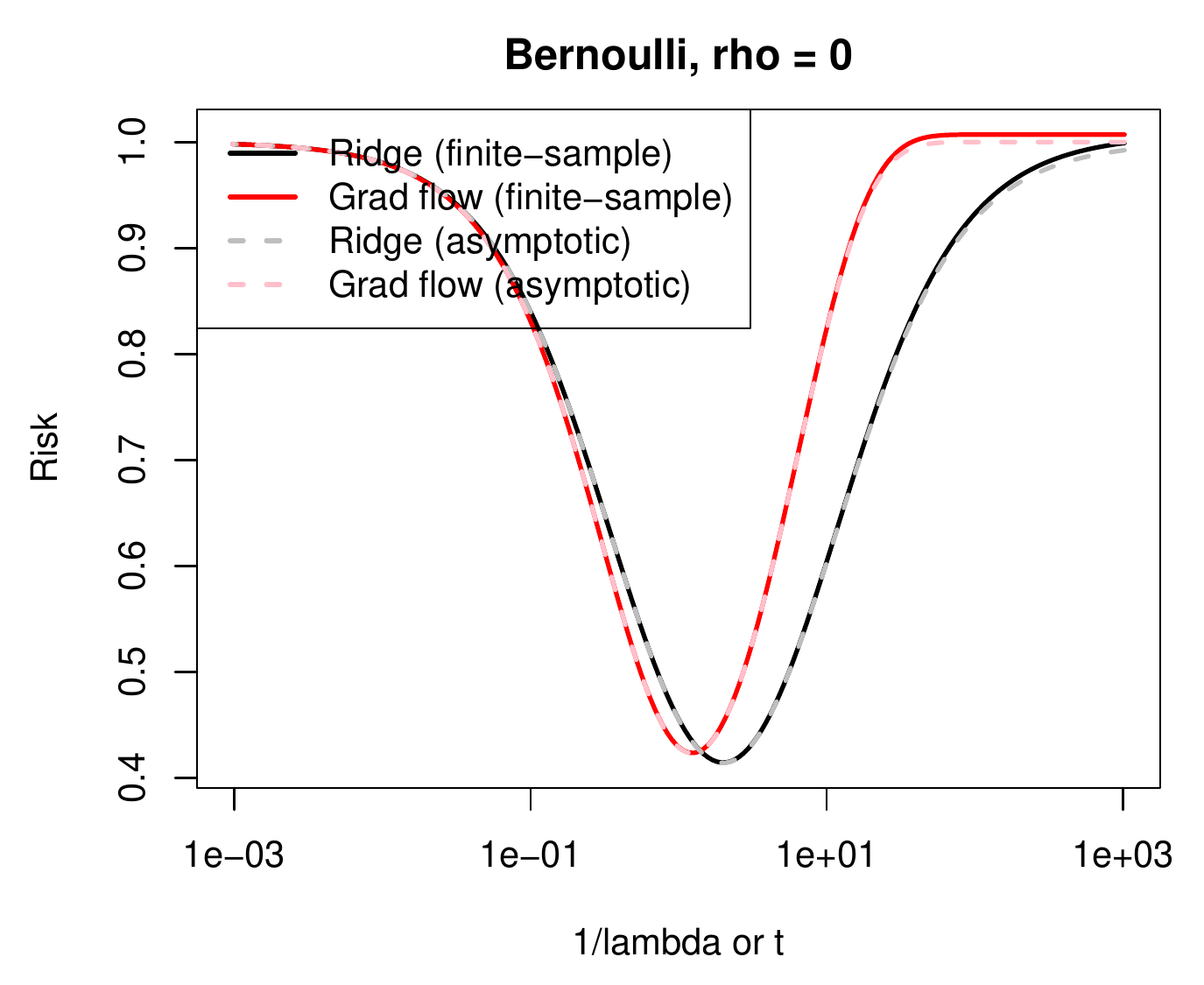} 
\includegraphics[width=0.475\textwidth]{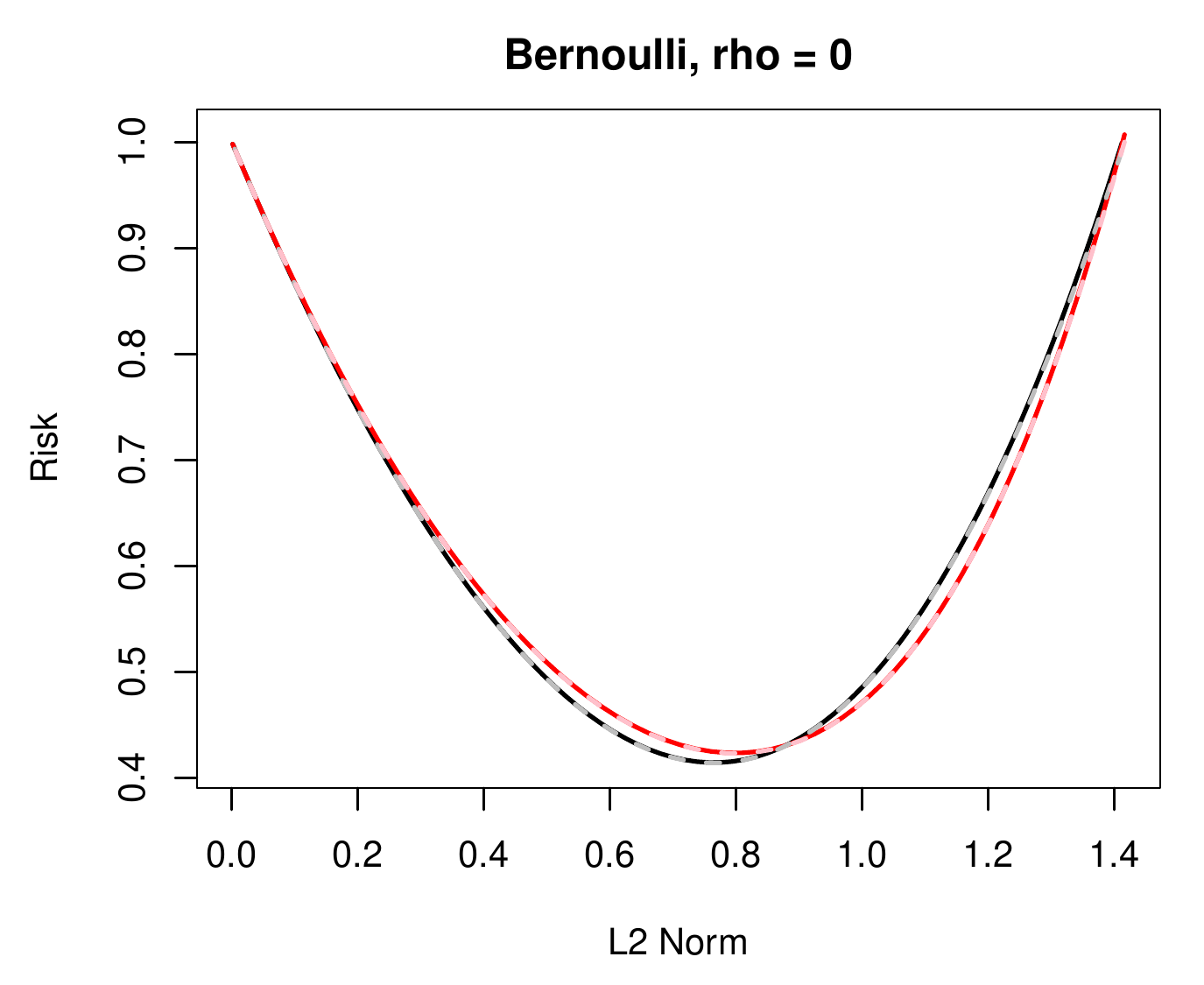}
\includegraphics[width=0.475\textwidth]{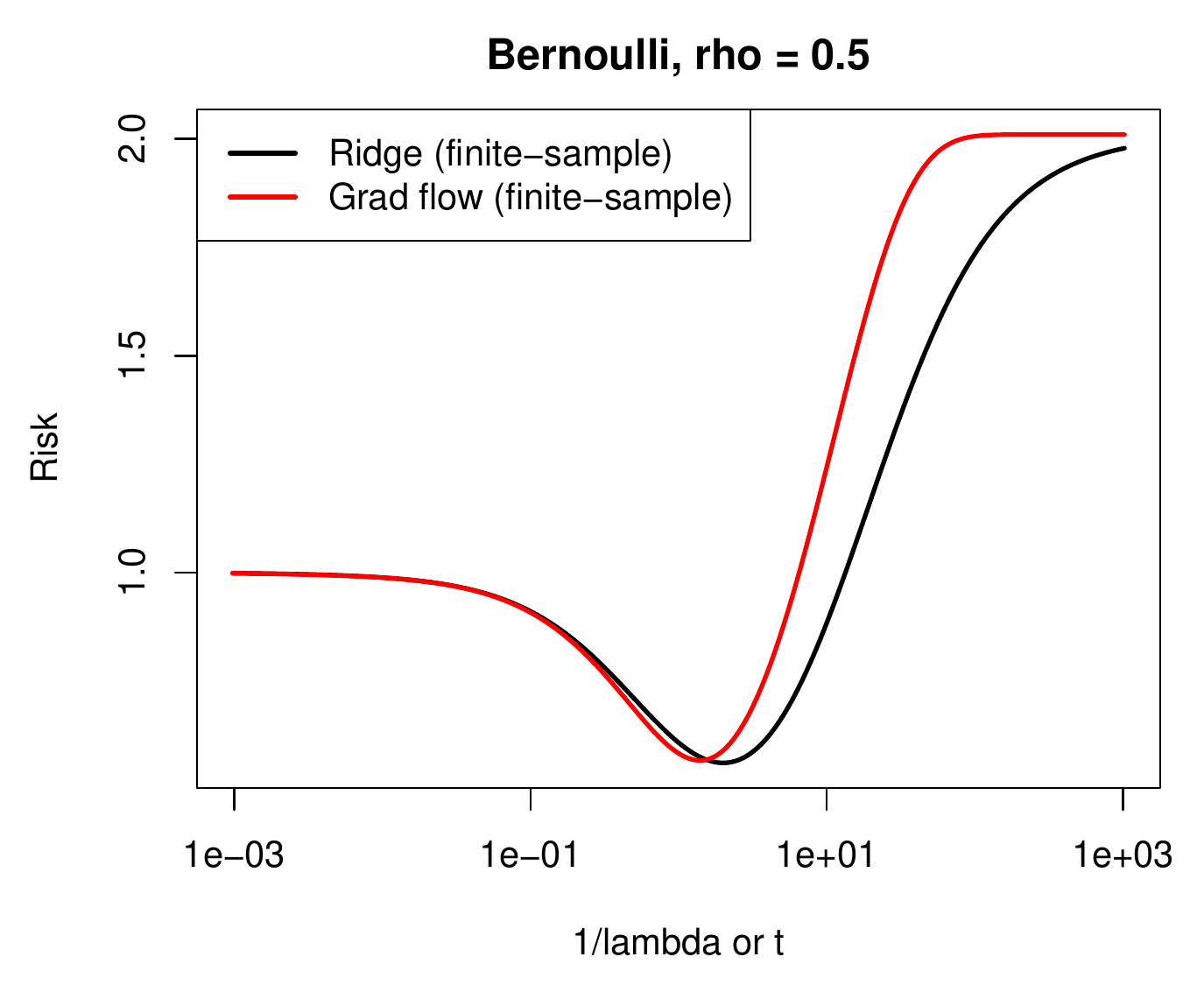} 
\includegraphics[width=0.475\textwidth]{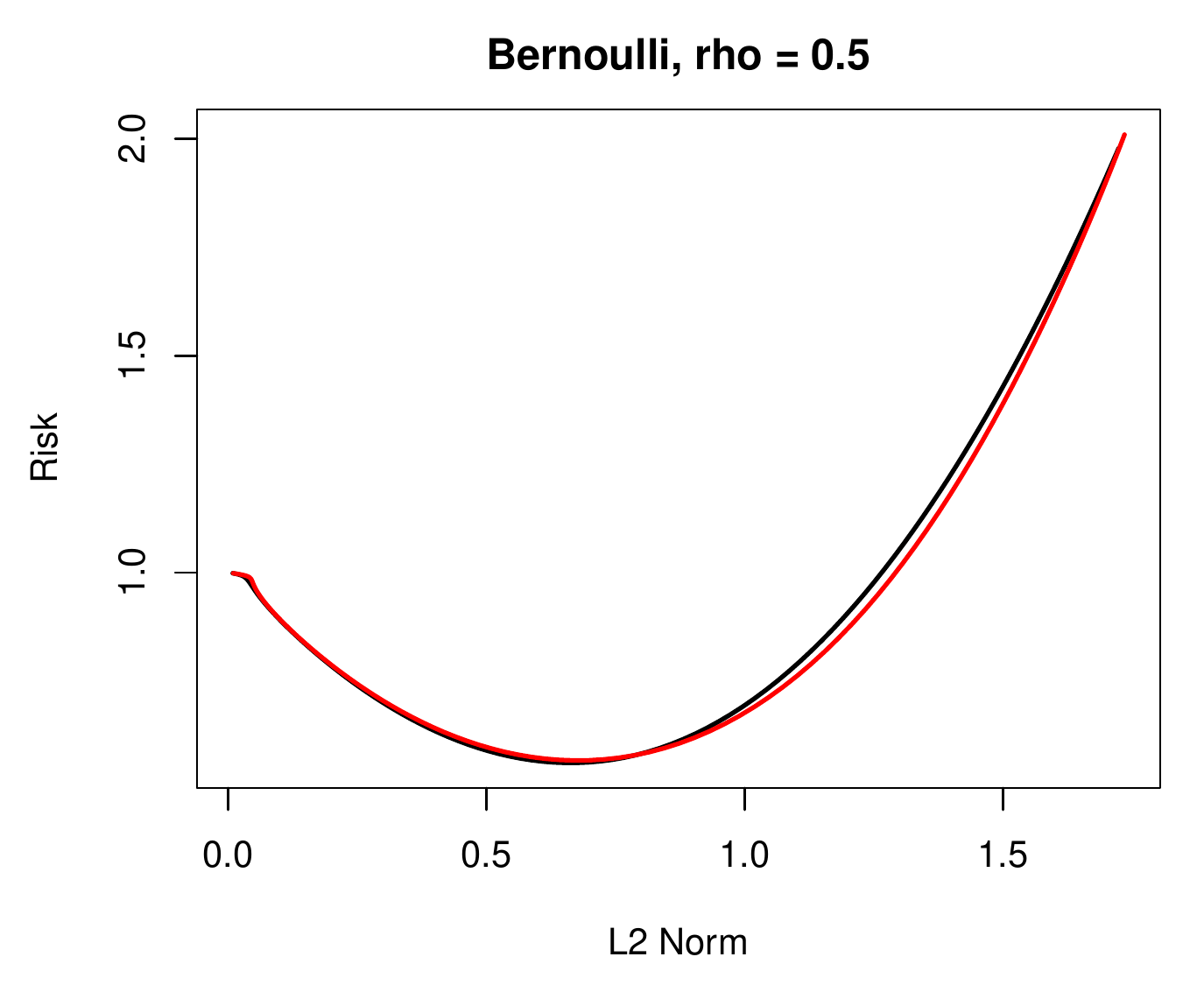}
\includegraphics[width=0.475\textwidth]{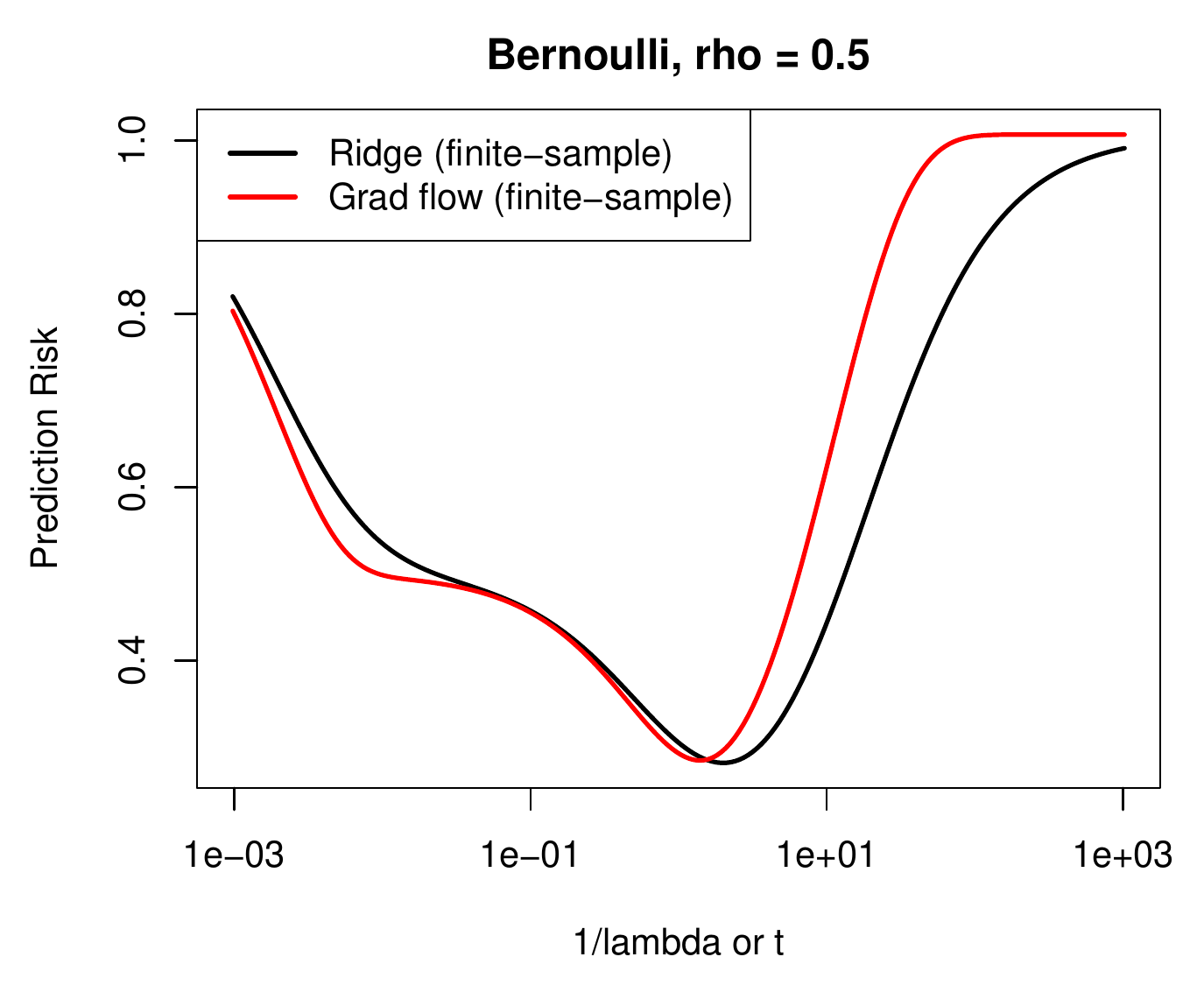} 
\includegraphics[width=0.475\textwidth]{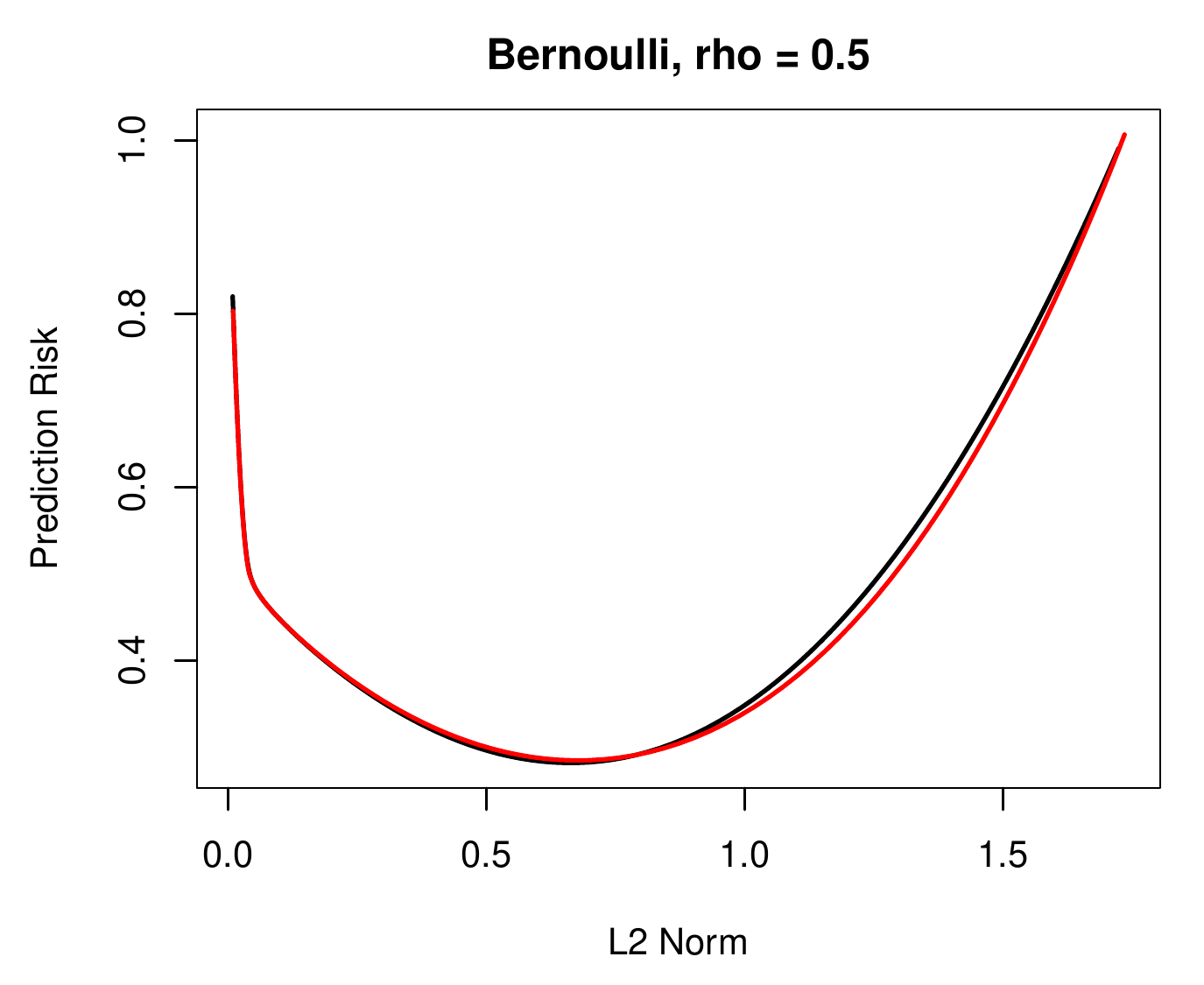}
\caption{\it \small Bernoulli features, with $n=1000$ and $p=500$.}
\label{fig:risk_bern_lo} 
\end{figure*}

\begin{figure*}[p]
\centering
\includegraphics[width=0.475\textwidth]{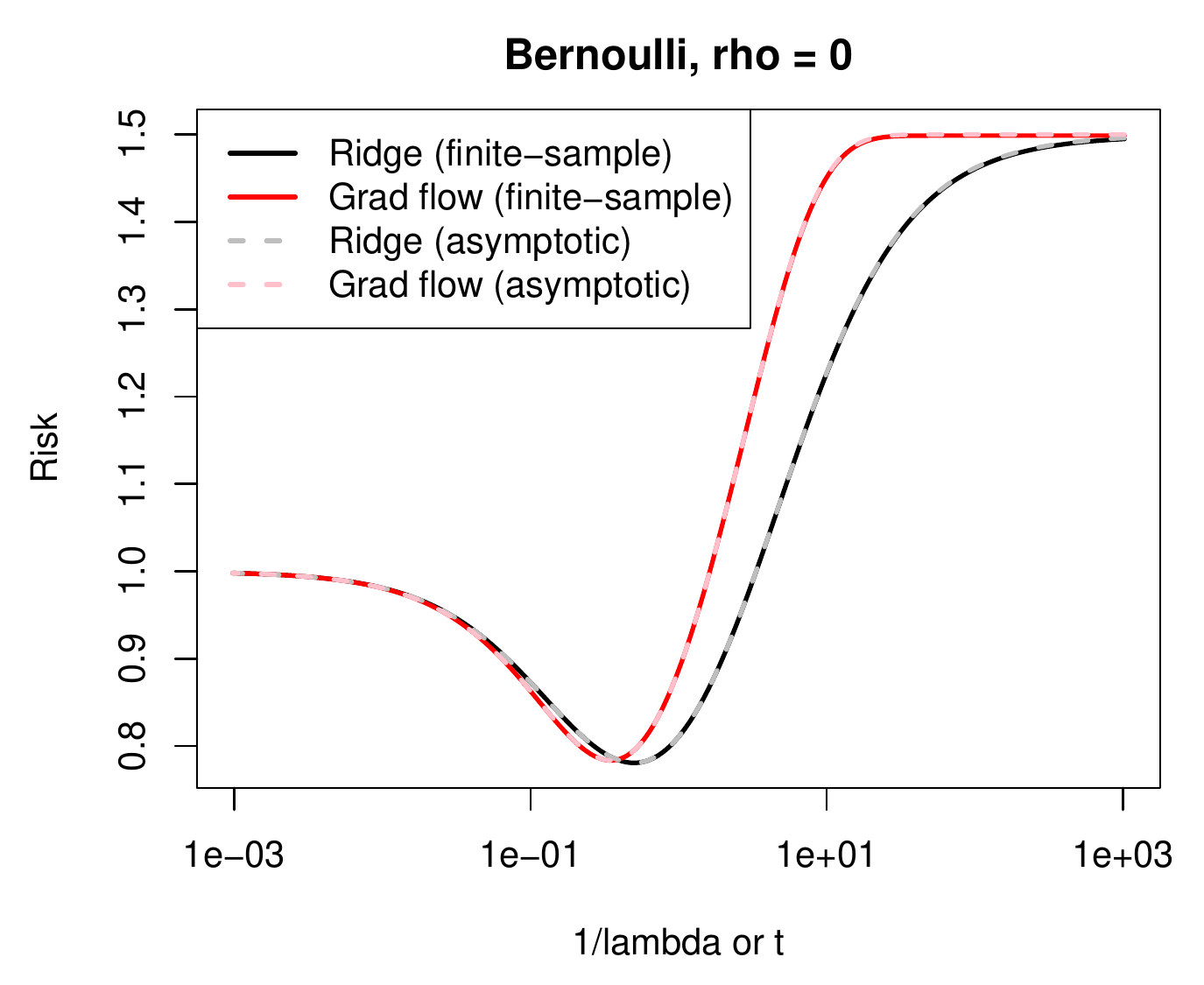} 
\includegraphics[width=0.475\textwidth]{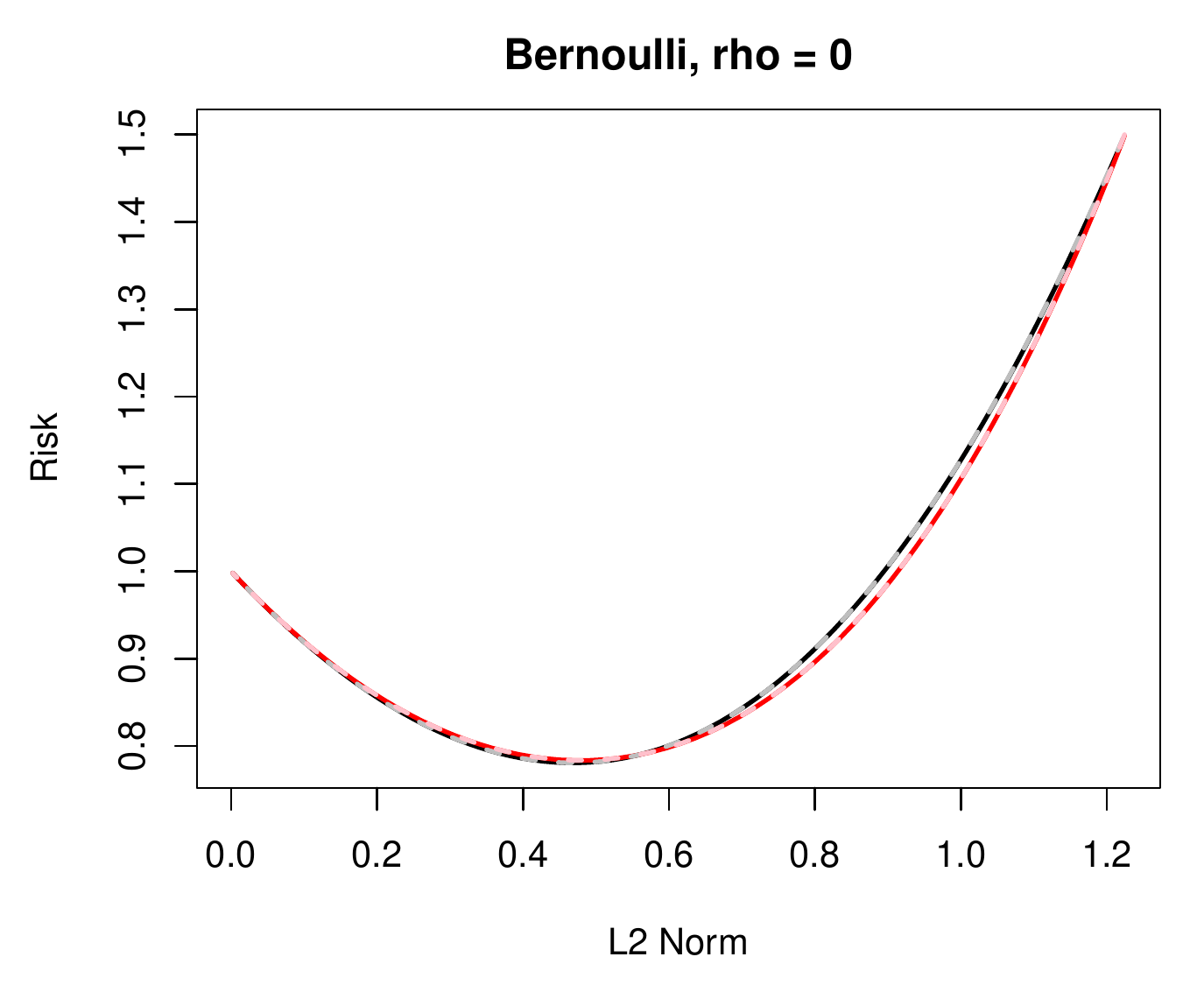}
\includegraphics[width=0.475\textwidth]{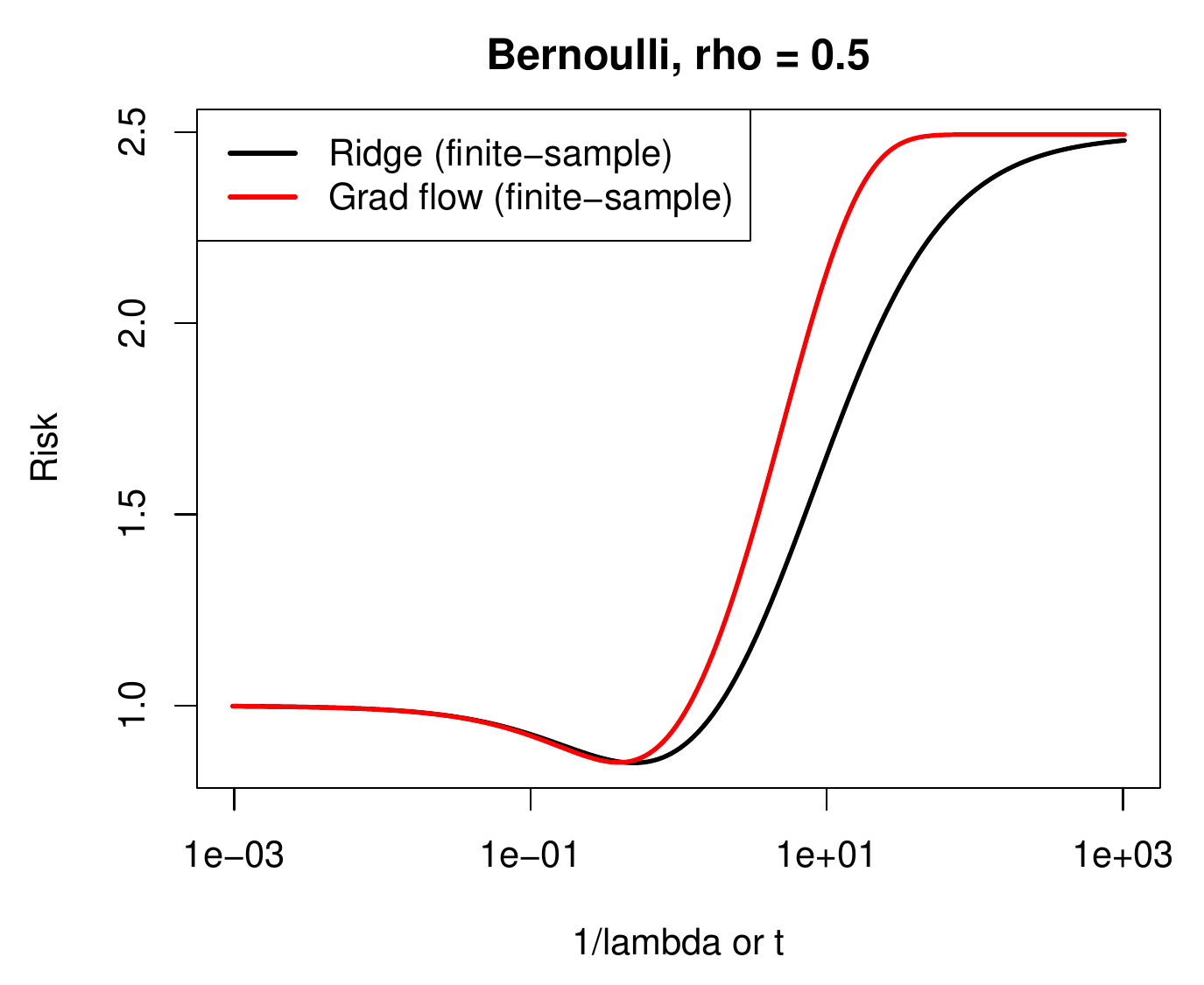} 
\includegraphics[width=0.475\textwidth]{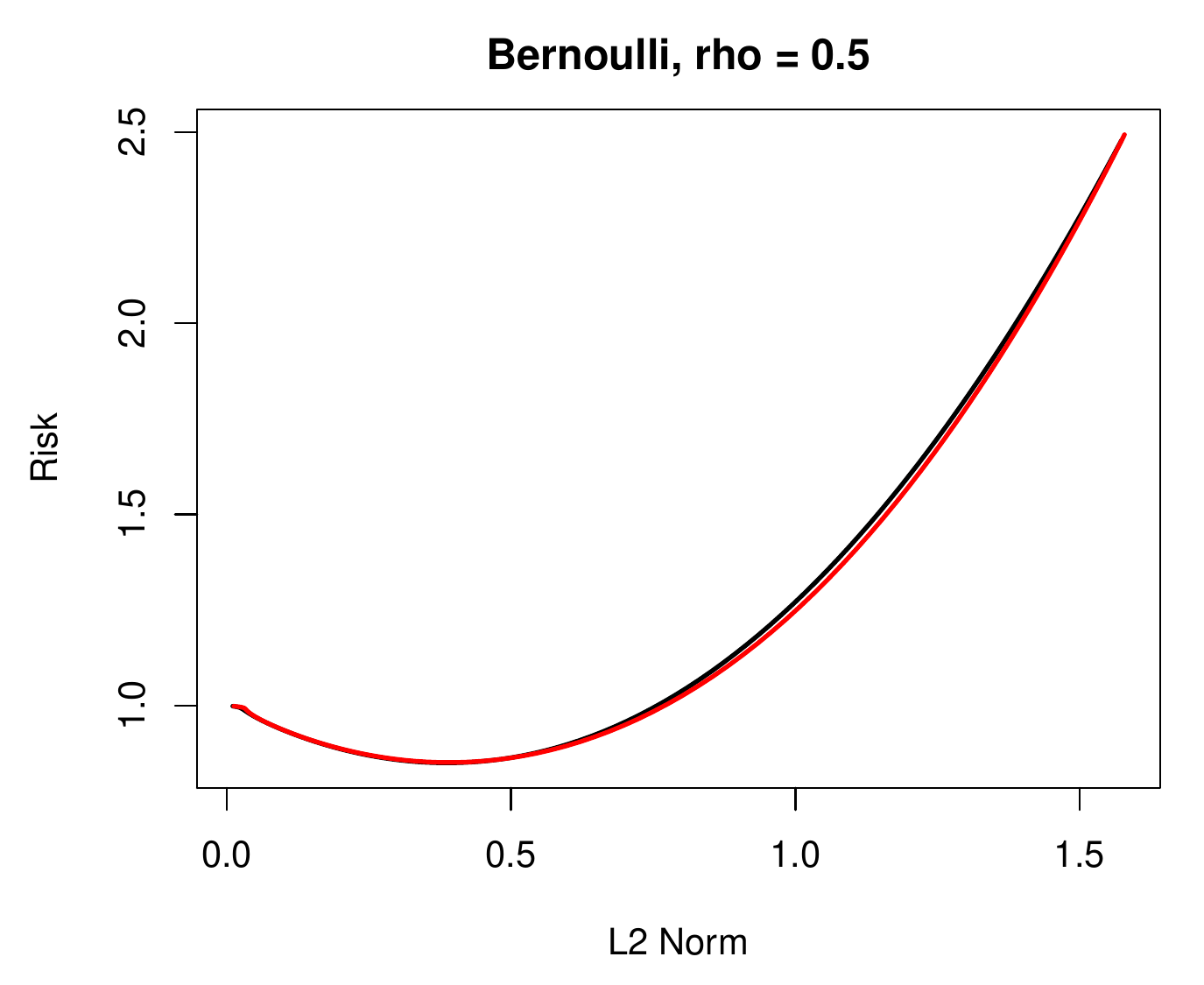}
\includegraphics[width=0.475\textwidth]{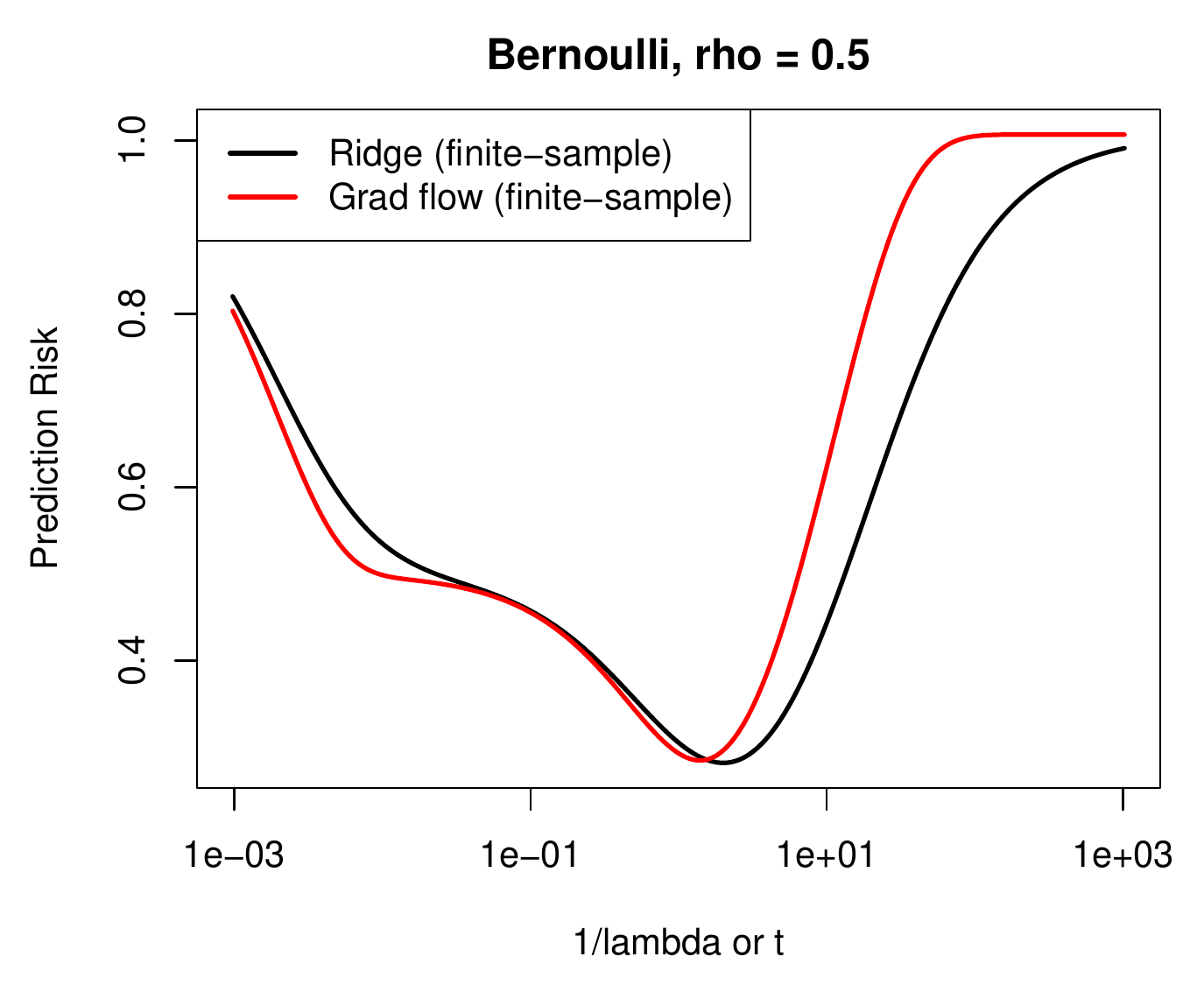} 
\includegraphics[width=0.475\textwidth]{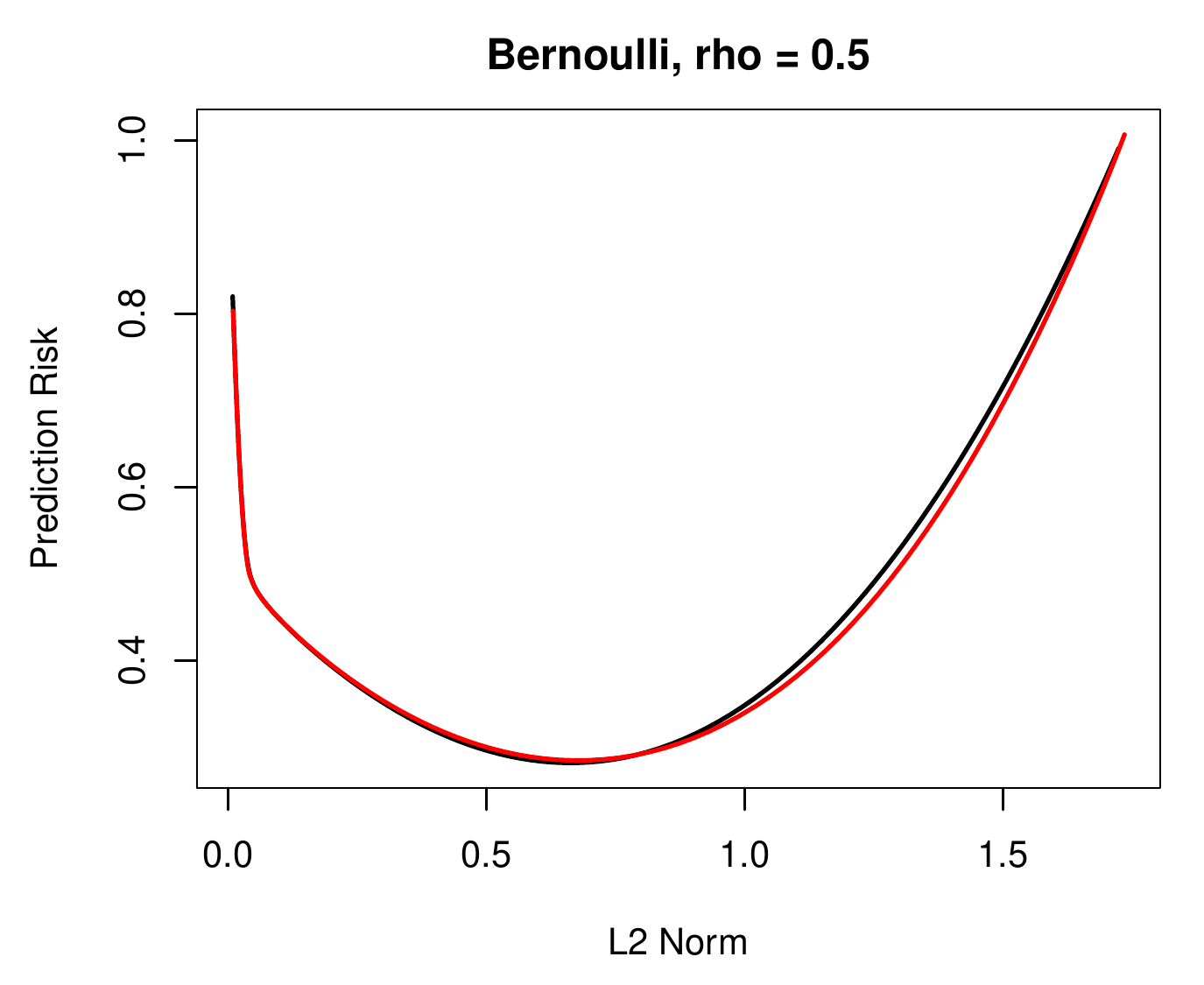}
\caption{\it \small Bernoulli features, with $n=500$ and $p=1000$.}
\label{fig:risk_bern_hi} 
\end{figure*}

\end{document}